\documentclass[12pt]{article}

\usepackage{placeins}
\usepackage{rotating}
\usepackage{amsmath, amssymb, amsfonts, latexsym, mathtools}
\usepackage{amsthm, mathrsfs, bm}

\usepackage{booktabs}
\usepackage{multirow}
\usepackage{makecell}
\usepackage{adjustbox}
\usepackage{tabularx}

\usepackage{graphicx}
\usepackage{subcaption}
\usepackage{float}

\usepackage[ruled, lined, linesnumbered, commentsnumbered, longend]{algorithm2e}
\SetKwProg{Pro}{procedure}{}{end\ procedure}
\usepackage[table,x11names]{xcolor}
\usepackage{url}
\usepackage{hyperref}

\usepackage{courier}
\usepackage{helvet}

\usepackage[utf8]{inputenc}
\usepackage[english]{babel}
\usepackage{setspace}
\usepackage{comment}
\usepackage{enumitem}

\usepackage{caption}

\usepackage[nohead]{geometry}
\usepackage[round]{natbib}

\usepackage{tikz}
\usetikzlibrary{shapes.geometric, arrows, positioning}

\usepackage{authblk}

\definecolor{RoyalBlue}{rgb}{0.25,0.41,0.88}
\definecolor{light-gray}{gray}{0.97}
\definecolor{qblack}{gray}{0.30}

\hypersetup{
    colorlinks=true,
    linkcolor=blue,
    urlcolor=blue,
    citecolor=blue
}

\theoremstyle{plain}   

\newtheorem{theorem}{Theorem}[section]
\newtheorem{proposition}[theorem]{Proposition}

\newtheorem{corollary}[theorem]{Corollary}

\theoremstyle{definition}  
\newtheorem{definition}[theorem]{Definition}

\theoremstyle{remark}  

\numberwithin{equation}{section}

\title{\Large Informative Label Missingness in Multiclass Classification:
Information Geometry and Excess Risk}

\author[1]{Fariborz Setoudehtazangi}

\author[2]{Geoffrey J. McLachlan%
\thanks{Corresponding author.
Email:
\href{mailto:g.mclachlan@uq.edu.au}
{g.mclachlan@uq.edu.au}}}

\affil[1]{Dipartimento di Scienze Statistiche,
Università di Padova, Padova, Italy}

\affil[2]{School of Mathematics and Physics,
The University of Queensland, Australia}
\begin{document}

\date{}
\maketitle

\begin{abstract}
Informative label missingness can change the usual efficiency ordering between
completely and partially labelled classifiers because the pattern of missing
labels may itself carry information about the classification model. We develop
a general likelihood-based theory for this phenomenon in parametric multiclass
classification. An efficient-information decomposition separates information
lost through unavailable class memberships from information contributed by the
missing-label mechanism. We then derive a quadratic expansion of plug-in
excess risk over the active pairwise faces of the multiclass Bayes boundary,
showing that classification efficiency depends on how information gains and
losses align with directions that perturb the decision boundary. This yields
a classification-weighted generalized-eigenvalue criterion under which
informative partial classification may have smaller asymptotic classification
risk without globally dominating complete classification in Fisher
information. Near missing completely at random, with the marginal missing-label
proportion fixed, redistribution of missing labels changes lost class-label
information at first order, whereas efficient information from the missingness
pattern appears only at second order. Three-class quadratic discriminant
calculations, finite-sample experiments, and a semi-synthetic multiclass
application illustrate the resulting regime-dependent behaviour.
\end{abstract}

\noindent\textbf{Keywords:}
Bayes classifier;
classification risk;
Fisher information;
informative missingness;
missing class labels;
multiclass classification;
semi-supervised learning.
\section{Introduction}
\label{sec:introduction}

Classification from partially labelled data arises when the feature vectors in
a training sample are observed but the class memberships of some observations
are unavailable. This problem has a long history in statistical discriminant
analysis and is now commonly viewed as a form of semi-supervised learning
\citep{mclachlan1975iterative,dempster1977maximum, Chapelle2006,van2020survey,ahfock2020apparent}.
From a likelihood perspective, the unavailable class memberships constitute
missing data, so labelled and unlabelled observations can be combined through
the likelihood of the corresponding finite-mixture model. When label
availability is unrelated to the observed data, an observation whose class
membership is unknown is generally less informative for estimating a
classification rule than the same observation with its label observed. The
resulting loss of information and its implications for discriminant efficiency
were studied in early work on partially classified samples
\citep{o1978normal,ganesalingam1978efficiency,
mclachlan1995asymptotic}; related questions concerning the statistical value of
labelled and unlabelled observations have also been considered from a learning
theory perspective \citep{castelli1996relative}.

The comparison changes when label availability is related to the
classification problem. In many applications, labels are assigned selectively.
For example, a human expert may be more likely to leave an observation
unclassified when its features place it in a region of substantial class
overlap. The observed pattern of label availability can then contain
information about parameters governing the classifier. Such situations are
naturally described within a missing-data framework
\citep{rubin1976inference,mealli2015clarifying}. If \(M\) denotes the indicator
that the class label \(Z\) is unavailable and \(\boldsymbol Y\) is the observed
feature vector, we consider mechanisms satisfying
\[
M\perp Z\mid\boldsymbol Y,
\qquad
\Pr(M=1\mid\boldsymbol Y=\boldsymbol y)
=
q(\boldsymbol y;\boldsymbol\theta,\boldsymbol\xi),
\]
where \(\boldsymbol\theta\) parameterizes the classification model and
\(\boldsymbol\xi\) contains parameters specific to the missing-label
mechanism. Although \(M\) is conditionally independent of the latent class
given the observed features, the mechanism need not be ignorable for
likelihood inference on \(\boldsymbol\theta\), because its distribution may
itself depend on \(\boldsymbol\theta\).

A counterintuitive consequence of this dependence was developed by
\citet{ahfock2020apparent}. For two homoscedastic Gaussian classes, they
modelled the probability of a missing label as a function of classification
uncertainty and showed that the information carried by the missing-label
indicators can exceed the information lost through the unavailable class
memberships. Consequently, an estimated Bayes rule based on a partially
classified sample can have a smaller asymptotic expected error rate than the
corresponding rule based on a completely classified sample. Their formulation
was motivated by the observation that unclassified observations may
concentrate near class boundaries, and related the missingness probability to
classification entropy or, in the binary Gaussian setting, to the squared
discriminant function
\citep{mclachlan2019estimation,ahfock2020apparent}.
This phenomenon and its relation to statistical semi-supervised learning have
subsequently been reviewed by \citet{ahfock2023semi}.

The phenomenon is not confined numerically to the homoscedastic binary model.
In particular, \citet{lyu2024analysis} studied Bayes-rule estimation under a
specified missing-data mechanism for Gaussian mixture models and considered
two- and three-component settings with unequal covariance matrices through
simulation and real-data illustrations. More recent work has continued to
emphasize the implications of informative missingness in semi-supervised
learning and the roles of class overlap, the proportion of missing labels, and
the strength of the association between missingness and classification
uncertainty \citep{wu2025informative}. Thus there is evidence that favourable
informative missingness can arise beyond the particular Gaussian configuration
in which the phenomenon was first analysed.

What remains less understood is the \emph{classification-risk geometry} of
this phenomenon in a general multiclass problem. Existing analytic results
explaining efficiency gains from informative label missingness have been
developed primarily through binary discriminant structures. In a multiclass
problem, however, the Bayes rule is determined by several pairwise equality
surfaces, only portions of which form active decision boundaries, and these
surfaces may meet at triple or higher-order class junctions. More importantly,
global improvement in Fisher information is stronger than what classification
itself requires. Information gained in a parameter direction that barely
moves the Bayes boundary may have little effect on classification risk,
whereas a gain concentrated in a strongly boundary-relevant direction may be
substantially more important.

This issue connects informative label missingness with the classical
asymptotic theory of estimated discriminant rules. Previous work has studied
the effect of parameter estimation on classification error and developed
large-sample and higher-order risk expansions for discriminant procedures
\citep{o1980general,taniguchi1994higher,ducinskas2002second}.
A central implication of this literature is that parameter-estimation
precision and classification performance are not interchangeable:
classification risk depends on how estimation error perturbs the decision
rule. For informative partial classification, a comparison of Fisher
information alone therefore does not reveal which information gains and losses
are relevant to classification.

The aim of this paper is to connect these two strands of theory. We first
derive an efficient-information decomposition for a general parametric
multiclass model with informative label missingness. After eliminating the
parameters specific to the missing-label mechanism, the information for the
classification model is
\begin{equation}
\boldsymbol I_{\mathrm{PC}}^{\mathrm{eff}}
=
\boldsymbol I_{\mathrm{CC}}
-
E\!\left[
q(\boldsymbol Y)
\boldsymbol I_{Z\mid\boldsymbol Y}(\boldsymbol Y)
\right]
+
\boldsymbol I_M^{\mathrm{eff}},
\label{eq:intro-information-decomposition}
\end{equation}
where the second term is the information lost through unavailable class
memberships and
\(\boldsymbol I_M^{\mathrm{eff}}\succeq\boldsymbol0\) is the efficient
information contributed by the missing-label indicators. The weighting by
\(q(\boldsymbol Y)\) is important: the information cost of missing labels
depends not only on how many labels are unavailable but also on where in the
feature space they are missing. Missing completely at random is recovered as
a special case in which the missingness indicators contribute no information
about the classification parameters.

We then relate this information structure to classification performance by
deriving a local quadratic excess-risk expansion for regular multiclass Bayes
classifiers. The leading curvature is determined by the
\emph{active pairwise Bayes faces}, namely the portions of pairwise equality
surfaces on which the corresponding two classes jointly attain the Bayes
maximum. Under regularity and transversality conditions, triple and
higher-order Bayes junctions do not contribute to the leading quadratic term.
If
\(\widehat{\boldsymbol\theta}\) is asymptotically normal with asymptotic
covariance matrix \(\boldsymbol V\), then
\begin{equation}
E\!\left\{
R(\widehat{\boldsymbol\theta})
\right\}
-
R^\ast
=
\frac{1}{2n}
\operatorname{tr}
\left(
\boldsymbol H_R\boldsymbol V
\right)
+
o(n^{-1}),
\label{eq:intro-risk-expansion}
\end{equation}
where \(\boldsymbol H_R\) is the curvature matrix induced by the active Bayes
faces.

Combining the information and risk calculations yields the main
classification-specific comparison. Let \(\boldsymbol A\) and
\(\boldsymbol J\) denote the relevant efficient information matrices under
complete classification and informative partial classification, respectively.
The leading classification advantage of the latter is governed by
\begin{equation}
\Delta_R
=
\operatorname{tr}
\left[
\boldsymbol H_R
\left(
\boldsymbol A^{-1}
-
\boldsymbol J^{-1}
\right)
\right].
\label{eq:intro-delta}
\end{equation}
Thus a smaller asymptotic classification risk does not require
\(\boldsymbol J\succeq\boldsymbol A\). A partially classified experiment may
lose information in some parameter directions and nevertheless improve
classification if its information gains are sufficiently concentrated in
directions that move the active Bayes boundary. A generalized-eigenvalue
representation makes this distinction explicit by separating relative
information gain or loss from the classification relevance of the
corresponding parameter direction. Favourable informative missingness is
therefore a directional alignment phenomenon rather than simply a question of
global Fisher-information dominance.

We also examine how this advantage emerges as a missing-label mechanism
departs from missing completely at random while the marginal proportion of
missing labels is held fixed. Locally, redistributing the missing labels
according to classification uncertainty alters the loss of conditional
class-label information at first order, whereas the efficient information
carried by the missingness indicators appears only at second order. Hence a
weakly informative mechanism need not initially improve classification and may
instead make it worse. Whether a favourable regime subsequently emerges
depends on the global information geometry and its alignment with the active
decision boundaries.

The theory is illustrated using a three-class Gaussian quadratic discriminant
model with unequal covariance matrices. This is the smallest setting that allows
curved pairwise Bayes faces and a genuine multiclass junction to arise
simultaneously while the decision geometry remains directly interpretable.
We examine how the classification-risk comparison varies with the proportion
of missing labels, the strength of uncertainty dependence, class separation,
prior imbalance, covariance heterogeneity, and the choice of entropy- or
Gini-based uncertainty measures. The population calculations reveal
face-specific gains and losses and geometry-dependent transitions between
unfavourable and favourable regimes. A finite-sample Monte Carlo experiment
then assesses the asymptotic predictions under likelihood estimation; in the
reference configuration, the theoretical classification-risk coefficients,
empirical quadratic approximations, and directly evaluated scaled excess risks
are in close agreement.

The contribution of this paper is therefore not to establish that informative
label missingness can sometimes be beneficial---that phenomenon is already
known---nor simply to demonstrate it for another class-conditional
distribution. Rather, we develop a general multiclass framework that explains
how information gained and lost through informative label missingness is
translated into classification risk through the geometry of the active Bayes
boundary. This perspective identifies when and where the missingness pattern
can improve classification and shows why favourable partial classification
does not require global information dominance.

The remainder of the paper is organized as follows.
Section~\ref{sec:model} introduces the general partially labelled model and
derives the efficient-information decomposition.
Section~\ref{sec:classifier-information} considers efficient classifier
information in the presence of nuisance parameters.
Section~\ref{sec:risk} develops the multiclass excess-risk geometry, and
Section~\ref{sec:spectral} gives the classification-weighted spectral
characterization and studies local departures from missing completely at
random.
Section~\ref{sec:qda} specializes the framework to quadratic discriminant
analysis.
Sections~\ref{sec:numerical} and~\ref{sec:finite-sample} present the
population and finite-sample investigations, respectively.
Section~\ref{sec:realdata} provides a semi-synthetic application to the
Vertebral Column data.
Section~\ref{sec:discussion} discusses the main findings, limitations, and
directions for further research, and Section~\ref{sec:conclusion} concludes.
\section{General model and information decomposition}
\label{sec:model}

Let
$
\boldsymbol{Y}\in\mathcal{Y}\subseteq\mathbb{R}^{p}
$
denote the observed feature vector and let
$
Z\in\{1,\ldots,g\}
$
denote the corresponding class label. We assume a parametric class model of the
form
\[
\Pr_{\boldsymbol{\theta}}(Z=k)=\pi_k,
\qquad
\boldsymbol{Y}\mid Z=k
\sim
f_k(\boldsymbol{y};\boldsymbol{\vartheta}_k),
\qquad
k=1,\ldots,g,
\]
where
$
\boldsymbol{\theta}
=
\left(
\boldsymbol{\pi}^{\top},
\boldsymbol{\vartheta}_1^{\top},
\ldots,
\boldsymbol{\vartheta}_g^{\top}
\right)^{\top}
$
denotes an identifiable finite-dimensional parameter vector, with a suitable
parameterization of the class probabilities
\(\boldsymbol{\pi}=(\pi_1,\ldots,\pi_g)^\top\).

The joint class-feature density is
\[
p_{\boldsymbol{\theta}}(\boldsymbol{y},Z=k)
=
\pi_k
f_k(\boldsymbol{y};\boldsymbol{\vartheta}_k),
\]
and the marginal feature density is
\[
p_{\boldsymbol{\theta}}(\boldsymbol{y})
=
\sum_{k=1}^{g}
\pi_k
f_k(\boldsymbol{y};\boldsymbol{\vartheta}_k).
\]
The posterior class probabilities are therefore
\[
\tau_k(\boldsymbol{y};\boldsymbol{\theta})
=
\Pr_{\boldsymbol{\theta}}
(Z=k\mid\boldsymbol{Y}=\boldsymbol{y})
=
\frac{
\pi_k f_k(\boldsymbol{y};\boldsymbol{\vartheta}_k)
}{
p_{\boldsymbol{\theta}}(\boldsymbol{y})
}.
\]
The corresponding Bayes classifier assigns \(\boldsymbol{y}\) to a class
maximizing \(\tau_k(\boldsymbol{y};\boldsymbol{\theta})\), or equivalently,
\(\pi_kf_k(\boldsymbol{y};\boldsymbol{\vartheta}_k)\).

Let
\[
M=
\begin{cases}
1, & \text{if the class label is unavailable},\\
0, & \text{if the class label is observed}.
\end{cases}
\]
We assume throughout that
\begin{equation}
M\perp Z\mid\boldsymbol{Y},
\label{eq:MAR-label}
\end{equation}
while allowing the probability of label absence to depend on both the observed
features and the parameters of the classification model:
\begin{equation}
\Pr_{\boldsymbol{\theta},\boldsymbol{\xi}}
(M=1\mid\boldsymbol{Y}=\boldsymbol{y})
=
q(\boldsymbol{y};\boldsymbol{\theta},\boldsymbol{\xi}),
\label{eq:q-general}
\end{equation}
where
\(\boldsymbol{\xi}\) denotes parameters specific to the missing-label mechanism
and
\[
0<
q(\boldsymbol{y};\boldsymbol{\theta},\boldsymbol{\xi})
<1
\]
on the relevant support.

The observed datum for one individual is
\[
\mathcal{O}
=
\left(
\boldsymbol{Y},
M,
(1-M)Z
\right).
\]
Under \eqref{eq:MAR-label}, its density can be written as
\begin{align}
p_{\boldsymbol{\theta},\boldsymbol{\xi}}(\mathcal{O})
={}&
p_{\boldsymbol{\theta}}(\boldsymbol{Y})
\,
\tau_Z(\boldsymbol{Y};\boldsymbol{\theta})^{1-M}
\nonumber\\
&\times
q(\boldsymbol{Y};\boldsymbol{\theta},\boldsymbol{\xi})^M
\left\{
1-
q(\boldsymbol{Y};\boldsymbol{\theta},\boldsymbol{\xi})
\right\}^{1-M}.
\label{eq:observed-density}
\end{align}
Consequently, the individual observed-data log-likelihood is
\begin{align}
\ell_{\mathrm{obs}}
(\boldsymbol{\theta},\boldsymbol{\xi})
={}&
\log
p_{\boldsymbol{\theta}}(\boldsymbol{Y})
+
(1-M)
\log
\tau_Z(\boldsymbol{Y};\boldsymbol{\theta})
\nonumber\\
&+
M
\log
q(\boldsymbol{Y};\boldsymbol{\theta},\boldsymbol{\xi})
+
(1-M)
\log
\left\{
1-
q(\boldsymbol{Y};\boldsymbol{\theta},\boldsymbol{\xi})
\right\}.
\label{eq:observed-loglik}
\end{align}

Expression \eqref{eq:observed-loglik} separates three sources of information:
the marginal distribution of the features, the observed class labels, and the
pattern of label availability. The last component is absent from an ignorable
analysis when the missing-label mechanism is treated as ancillary to the
classification model.

To quantify the contribution of an observed class label at a given feature
value, define the conditional class score
\[
\boldsymbol{S}_{Z\mid\boldsymbol{Y}}
=
\nabla_{\boldsymbol{\theta}}
\log
\tau_Z(\boldsymbol{Y};\boldsymbol{\theta}),
\]
and its conditional Fisher information
\begin{equation}
\boldsymbol{I}_{Z\mid\boldsymbol{Y}}
(\boldsymbol{\theta};\boldsymbol{y})
=
E_{\boldsymbol{\theta}}
\left[
\boldsymbol{S}_{Z\mid\boldsymbol{Y}}
\boldsymbol{S}_{Z\mid\boldsymbol{Y}}^{\top}
\mid
\boldsymbol{Y}=\boldsymbol{y}
\right].
\label{eq:conditional-label-info}
\end{equation}
This matrix measures the additional information provided by observing the class
membership after the feature vector is known.

For the missing-label mechanism, write
\[
q_{\boldsymbol{\theta}}
=
\frac{\partial q}
{\partial\boldsymbol{\theta}},
\qquad
q_{\boldsymbol{\xi}}
=
\frac{\partial q}
{\partial\boldsymbol{\xi}},
\]
and define
\begin{equation}
\boldsymbol{B}_{ab}
=
E
\left[
\frac{
q_a q_b^{\top}
}{
q(1-q)
}
\right],
\qquad
a,b\in\{\boldsymbol{\theta},\boldsymbol{\xi}\},
\label{eq:B-blocks}
\end{equation}
where the arguments of \(q\) are suppressed for notational simplicity.

The following result gives the central information decomposition.

\begin{theorem}
\label{thm:information-decomposition}
Assume that the model is locally identifiable and satisfies the standard
regularity conditions for Fisher-information calculations, including common
support in a neighborhood of the true parameter, differentiability with respect
to \((\boldsymbol{\theta},\boldsymbol{\xi})\), interchange of differentiation
and integration, finite second moments of the relevant score functions, and
nonsingularity of \(\boldsymbol{B}_{\boldsymbol{\xi}\boldsymbol{\xi}}\).
Then the efficient Fisher information for
\(\boldsymbol{\theta}\), after eliminating the missingness-specific nuisance
parameter \(\boldsymbol{\xi}\), is
\begin{equation}
\boldsymbol{I}_{\mathrm{PC}}^{\mathrm{eff}}
(\boldsymbol{\theta})
=
\boldsymbol{I}_{\mathrm{CC}}
(\boldsymbol{\theta})
-
\boldsymbol{D}
(\boldsymbol{\theta},\boldsymbol{\xi})
+
\boldsymbol{I}_{M}^{\mathrm{eff}}
(\boldsymbol{\theta},\boldsymbol{\xi}),
\label{eq:main-info-decomp}
\end{equation}
where
\begin{equation}
\boldsymbol{D}
(\boldsymbol{\theta},\boldsymbol{\xi})
=
E
\left[
q(\boldsymbol{Y};\boldsymbol{\theta},\boldsymbol{\xi})
\,
\boldsymbol{I}_{Z\mid\boldsymbol{Y}}
(\boldsymbol{\theta};\boldsymbol{Y})
\right],
\label{eq:D-def}
\end{equation}
and
\begin{equation}
\boldsymbol{I}_{M}^{\mathrm{eff}}
=
\boldsymbol{B}_{\boldsymbol{\theta}\boldsymbol{\theta}}
-
\boldsymbol{B}_{\boldsymbol{\theta}\boldsymbol{\xi}}
\boldsymbol{B}_{\boldsymbol{\xi}\boldsymbol{\xi}}^{-1}
\boldsymbol{B}_{\boldsymbol{\xi}\boldsymbol{\theta}}
\succeq
\boldsymbol{0}.
\label{eq:IM-eff}
\end{equation}
Here
\(\boldsymbol{I}_{\mathrm{CC}}(\boldsymbol{\theta})\) denotes the Fisher
information that would be available if all class labels were observed.
\end{theorem}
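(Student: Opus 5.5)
The plan is to compute the full observed-data information matrix for \((\boldsymbol\theta,\boldsymbol\xi)\) from the score of \eqref{eq:observed-loglik}, and then take the Schur complement with respect to \(\boldsymbol\xi\).

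\emph{Step 1: the score.} Differentiating \eqref{eq:observed-loglik} gives
\[
\boldsymbol S_{\boldsymbol\theta}
=\boldsymbol S_{\boldsymbol Y}+(1-M)\boldsymbol S_{Z\mid\boldsymbol Y}+\frac{M-q}{q(1-q)}\,q_{\boldsymbol\theta},
\qquad
\boldsymbol S_{\boldsymbol\xi}=\frac{M-q}{q(1-q)}\,q_{\boldsymbol\xi},
\]
where \(\boldsymbol S_{\boldsymbol Y}=\nabla_{\boldsymbol\theta}\log p_{\boldsymbol\theta}(\boldsymbol Y)\). The identity \(M/q-(1-M)/(1-q)=(M-q)/\{q(1-q)\}\) is what produces the last term in each score.

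\emph{Step 2: cross-moments.} I will compute all cross-moments by conditioning on \(\boldsymbol Y\).
\begin{itemize}
\item By \eqref{eq:MAR-label}, \(E[\boldsymbol S_{Z\mid\boldsymbol Y}\mid\boldsymbol Y,M]=E[\boldsymbol S_{Z\mid\boldsymbol Y}\mid\boldsymbol Y]=\boldsymbol 0\). This holds because \(\sum_k\nabla\tau_k=0\).
\item Also \(E[M-q\mid\boldsymbol Y]=0\), \(E[(M-q)^2\mid\boldsymbol Y]=q(1-q)\), \(E[(1-M)\mid\boldsymbol Y]=1-q\), and \(E[(1-M)(M-q)\mid\boldsymbol Y]=-q(1-q)\).
\end{itemize}
It follows that every cross term involving \(\boldsymbol S_{Z\mid\boldsymbol Y}\) vanishes, because \(\boldsymbol S_{Z\mid\boldsymbol Y}\) has zero conditional mean given \((\boldsymbol Y,M)\) while the other factor is \((\boldsymbol Y,M)\)-measurable. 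The cross terms between \(\boldsymbol S_{\boldsymbol Y}\) and the missingness score vanish because \(E[M-q\mid\boldsymbol Y]=0\). The diagonal terms give:
\begin{itemize}
\item \(E[\boldsymbol S_{\boldsymbol Y}\boldsymbol S_{\boldsymbol Y}^\top]=\boldsymbol I_{\boldsymbol Y}\);
\item \(E[(1-M)\boldsymbol I_{Z\mid\boldsymbol Y}]=E[(1-q)\boldsymbol I_{Z\mid\boldsymbol Y}]\);
\item the \(q\)-terms give the \(\boldsymbol B_{ab}\) of \eqref{eq:B-blocks}.
\end{itemize}
Hence
\[
\mathcal I=\begin{pmatrix}\boldsymbol I_{\boldsymbol Y}+E[(1-q)\boldsymbol I_{Z\mid\boldsymbol Y}]+\boldsymbol B_{\boldsymbol\theta\boldsymbol\theta}&\boldsymbol B_{\boldsymbol\theta\boldsymbol\xi}\\ \boldsymbol B_{\boldsymbol\xi\boldsymbol\theta}&\boldsymbol B_{\boldsymbol\xi\boldsymbol\xi}\end{pmatrix}.
\]
The interchange of differentiation and integration and the finite second moments in the hypotheses justify computing the information as the outer product of the score.

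\emph{Step 3: identify \(\boldsymbol I_{\mathrm{CC}}\).} Under complete labelling the log-likelihood is \(\log p_{\boldsymbol\theta}(\boldsymbol Y)+\log\tau_Z\). The same conditional-mean-zero argument shows that \(\boldsymbol I_{\mathrm{CC}}=\boldsymbol I_{\boldsymbol Y}+E[\boldsymbol I_{Z\mid\boldsymbol Y}]\). This is the chain rule for Fisher information. Consequently
\[
\boldsymbol I_{\boldsymbol Y}+E[(1-q)\boldsymbol I_{Z\mid\boldsymbol Y}]=\boldsymbol I_{\mathrm{CC}}-\boldsymbol D .
\]

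\emph{Step 4: eliminate \(\boldsymbol\xi\).} The efficient information for \(\boldsymbol\theta\) is the Schur complement \(\mathcal I_{\boldsymbol\theta\boldsymbol\theta}-\mathcal I_{\boldsymbol\theta\boldsymbol\xi}\mathcal I_{\boldsymbol\xi\boldsymbol\xi}^{-1}\mathcal I_{\boldsymbol\xi\boldsymbol\theta}\), which is well defined by the assumed nonsingularity of \(\boldsymbol B_{\boldsymbol\xi\boldsymbol\xi}\). Because the off-diagonal blocks involve only the \(\boldsymbol B\) terms, this complement equals \(\boldsymbol I_{\mathrm{CC}}-\boldsymbol D+\boldsymbol I_M^{\mathrm{eff}}\), which is \eqref{eq:main-info-decomp}.

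\emph{Step 5: positive semidefiniteness.} To show \(\boldsymbol I_M^{\mathrm{eff}}\succeq\boldsymbol0\), observe that \(\begin{pmatrix}\boldsymbol B_{\boldsymbol\theta\boldsymbol\theta}&\boldsymbol B_{\boldsymbol\theta\boldsymbol\xi}\\\boldsymbol B_{\boldsymbol\xi\boldsymbol\theta}&\boldsymbol B_{\boldsymbol\xi\boldsymbol\xi}\end{pmatrix}\) is the Gram matrix \(E[\boldsymbol w\boldsymbol w^\top]\) with \(\boldsymbol w=(q_{\boldsymbol\theta}^\top,q_{\boldsymbol\xi}^\top)^\top/\sqrt{q(1-q)}\), so it is positive semidefinite. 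A Schur complement of a positive semidefinite matrix with invertible lower block is itself positive semidefinite. Equivalently, \(\boldsymbol I_M^{\mathrm{eff}}\) is the residual covariance of \(\boldsymbol S_{M,\boldsymbol\theta}\) after \(L^2\)-projection onto the span of \(\boldsymbol S_{\boldsymbol\xi}\).

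The main obstacle is the careful treatment of the cross terms in Step 2, in particular showing that the label score is orthogonal to the missingness score. The decisive fact is that \eqref{eq:MAR-label} makes \(\boldsymbol S_{Z\mid\boldsymbol Y}\) conditionally mean zero given \((\boldsymbol Y,M)\), not merely given \(\boldsymbol Y\). This should be stated explicitly.
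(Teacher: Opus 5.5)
Your proposal is correct and follows essentially the same route as the paper. Both split the observed-data score into feature, observed-label and Bernoulli missingness components, show these are mutually orthogonal by conditioning on \(\boldsymbol Y\) and using \(M\perp Z\mid\boldsymbol Y\), identify \(\boldsymbol I_{\mathrm{CC}}=\boldsymbol I_{\boldsymbol Y}+E[\boldsymbol I_{Z\mid\boldsymbol Y}]\), and take the Schur complement in \(\boldsymbol B_{\boldsymbol\xi\boldsymbol\xi}\), with \(\boldsymbol I_M^{\mathrm{eff}}\succeq\boldsymbol 0\) following from the covariance (Gram) structure of the missingness score. The moment \(E[(1-M)(M-q)\mid\boldsymbol Y]=-q(1-q)\) that you list is never used, because the only cross term it could enter is already zero by the conditional mean-zero property of \(\boldsymbol S_{Z\mid\boldsymbol Y}\).
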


The proof is given in Supplementary Section~\ref{supp:proof-information}.

The decomposition in
\eqref{eq:main-info-decomp} separates two effects that are otherwise confounded
in a partially labelled sample. The matrix
\(\boldsymbol{D}\) is the information lost because some class memberships are
not observed. Importantly, it is not determined solely by the marginal
missing-label proportion. Instead, each feature value is weighted by both its
probability of losing the label and the conditional information that the label
would have provided. In contrast,
\(\boldsymbol{I}_{M}^{\mathrm{eff}}\) is the efficient information conveyed by
the missing-label indicators after the missingness-specific parameters have been
removed.

For the finite mixture model considered here,
\(\boldsymbol{I}_{Z\mid\boldsymbol{Y}}\) has a particularly useful form.
Define the class-specific score
\[
\boldsymbol{s}_k(\boldsymbol{y})
=
\nabla_{\boldsymbol{\theta}}
\log
\left\{
\pi_k
f_k(\boldsymbol{y};\boldsymbol{\vartheta}_k)
\right\},
\]
and
\[
\overline{\boldsymbol{s}}(\boldsymbol{y})
=
\sum_{k=1}^{g}
\tau_k(\boldsymbol{y};\boldsymbol{\theta})
\boldsymbol{s}_k(\boldsymbol{y}).
\]
Since
\[
\nabla_{\boldsymbol{\theta}}
\log
\tau_k(\boldsymbol{y};\boldsymbol{\theta})
=
\boldsymbol{s}_k(\boldsymbol{y})
-
\overline{\boldsymbol{s}}(\boldsymbol{y}),
\]
we obtain
\begin{align}
\boldsymbol{I}_{Z\mid\boldsymbol{Y}}
(\boldsymbol{\theta};\boldsymbol{y})
={}&
\sum_{k=1}^{g}
\tau_k(\boldsymbol{y};\boldsymbol{\theta})
\left\{
\boldsymbol{s}_k(\boldsymbol{y})
-
\overline{\boldsymbol{s}}(\boldsymbol{y})
\right\}
\nonumber\\
&\qquad\times
\left\{
\boldsymbol{s}_k(\boldsymbol{y})
-
\overline{\boldsymbol{s}}(\boldsymbol{y})
\right\}^{\top}.
\label{eq:label-info-score-cov}
\end{align}
Thus the information supplied by the class label is the posterior covariance of
the class-specific score vectors. This representation will be useful below when
we examine how uncertainty-dependent missingness redistributes label
information across the feature space.

A direct consequence of Theorem~\ref{thm:information-decomposition} is the
standard missing-completely-at-random benchmark.

\begin{corollary}
\label{cor:MCAR-information}
Suppose
\[
q(\boldsymbol{Y};\boldsymbol{\theta},\boldsymbol{\xi})
=
\gamma,
\qquad
0<\gamma<1,
\]
where \(\gamma\) is variation-independent of \(\boldsymbol{\theta}\). Then
\[
\boldsymbol{I}_{M}^{\mathrm{eff}}
=
\boldsymbol{0},
\]
and
\begin{equation}
\boldsymbol{I}_{\mathrm{PC}}^{\mathrm{eff}}
=
\boldsymbol{I}_{\mathrm{CC}}
-
\gamma
E
\left[
\boldsymbol{I}_{Z\mid\boldsymbol{Y}}
(\boldsymbol{\theta};\boldsymbol{Y})
\right]
\preceq
\boldsymbol{I}_{\mathrm{CC}}.
\label{eq:MCAR-info}
\end{equation}
\end{corollary}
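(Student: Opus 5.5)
The plan is to specialize Theorem~\ref{thm:information-decomposition} directly. The only delicate point is the role of \(\boldsymbol\xi\). Under MCAR we take \(\boldsymbol\xi=\gamma\) to be the (scalar) missingness parameter, so that
\[
q(\boldsymbol y;\boldsymbol\theta,\boldsymbol\xi)=\gamma
\]
for all \(\boldsymbol y\) and \(\boldsymbol\theta\).

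The first step is to compute the derivative blocks. Because \(\gamma\) is variation-independent of \(\boldsymbol\theta\), we have \(q_{\boldsymbol\theta}=\boldsymbol 0\) identically and \(q_{\boldsymbol\xi}=1\). Substituting into \eqref{eq:B-blocks} gives \(\boldsymbol B_{\boldsymbol\theta\boldsymbol\theta}=\boldsymbol 0\) and \(\boldsymbol B_{\boldsymbol\theta\boldsymbol\xi}=\boldsymbol 0\), together with
\[
\boldsymbol B_{\boldsymbol\xi\boldsymbol\xi}=\frac{1}{\gamma(1-\gamma)}>0 .
\]
The last quantity is finite and nonsingular because \(0<\gamma<1\), so the nonsingularity hypothesis of Theorem~\ref{thm:information-decomposition} holds. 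The remaining regularity conditions are unaffected, since the missingness factor in \eqref{eq:observed-density} is a Bernoulli\((\gamma)\) term that is smooth in \(\gamma\) and free of \(\boldsymbol\theta\). Formula \eqref{eq:IM-eff} then yields \(\boldsymbol I_M^{\mathrm{eff}}=\boldsymbol 0-\boldsymbol 0\,\boldsymbol B_{\boldsymbol\xi\boldsymbol\xi}^{-1}\boldsymbol 0=\boldsymbol 0\).

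The second step handles the loss term. Since \(q\equiv\gamma\) is constant in \(\boldsymbol Y\), it factors out of the expectation in \eqref{eq:D-def}, giving
\[
\boldsymbol D=\gamma E[\boldsymbol I_{Z\mid\boldsymbol Y}(\boldsymbol\theta;\boldsymbol Y)].
\]
Inserting this into \eqref{eq:main-info-decomp} produces the equality in \eqref{eq:MCAR-info}.

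The third step is the ordering. By definition \eqref{eq:conditional-label-info}, \(\boldsymbol I_{Z\mid\boldsymbol Y}(\boldsymbol\theta;\boldsymbol y)\) is the conditional expectation of an outer product \(\boldsymbol S\boldsymbol S^\top\), so it is positive semidefinite for each \(\boldsymbol y\). This can also be read off from the posterior-covariance form \eqref{eq:label-info-score-cov}. Taking expectations preserves positive semidefiniteness, and multiplying by \(\gamma>0\) does too. Hence \(\boldsymbol I_{\mathrm{CC}}-\boldsymbol I_{\mathrm{PC}}^{\mathrm{eff}}=\gamma E[\boldsymbol I_{Z\mid\boldsymbol Y}]\succeq\boldsymbol 0\).

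There is no substantive obstacle here. The one point requiring care is to justify treating \(\gamma\) as the nuisance \(\boldsymbol\xi\) so that the theorem applies. Equivalently, one can note that with \(q_{\boldsymbol\theta}=\boldsymbol 0\) the \(\boldsymbol\theta\)- and \(\boldsymbol\xi\)-scores are orthogonal, so elimination of \(\boldsymbol\xi\) leaves the \(\boldsymbol\theta\)-information unchanged. In particular, the conclusion does not depend on whether \(\gamma\) is treated as known or estimated.
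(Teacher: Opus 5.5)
Your proposal is correct and follows essentially the same route as the paper's proof: $q_{\boldsymbol\theta}=\boldsymbol 0$ kills $\boldsymbol B_{\boldsymbol\theta\boldsymbol\theta}$ and $\boldsymbol B_{\boldsymbol\theta\boldsymbol\xi}$, hence $\boldsymbol I_M^{\mathrm{eff}}=\boldsymbol 0$; the constant $\gamma$ factors out of $\boldsymbol D$; and positive semidefiniteness of $\boldsymbol I_{Z\mid\boldsymbol Y}$ gives the ordering. Your explicit check that $\boldsymbol B_{\boldsymbol\xi\boldsymbol\xi}=1/\{\gamma(1-\gamma)\}$ is nonsingular when $\boldsymbol\xi=\gamma$ (and your remark that the result holds whether or not $\gamma$ is estimated) is a small piece of extra care that the paper leaves implicit.
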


Hence randomly removing class labels cannot increase Fisher information about
the classification model. Any favourable information effect must arise because
the observed pattern of missing labels is itself informative about
\(\boldsymbol{\theta}\).

Theorem~\ref{thm:information-decomposition} concerns the complete identifiable
data-model parameter \(\boldsymbol{\theta}\), after eliminating only the
missingness-specific parameter \(\boldsymbol{\xi}\). In many classification
models, however, some components of \(\boldsymbol{\theta}\) may be nuisance
parameters for a particular classifier parameter or functional of interest.
Eliminating such nuisance parameters does not generally preserve the simple
additive form in \eqref{eq:main-info-decomp}. The resulting efficient
classifier information is considered in the next section.

\section{Efficient classifier information in the presence of nuisance parameters}
\label{sec:classifier-information}

The decomposition in \eqref{eq:main-info-decomp} concerns the full identifiable
parameter \(\boldsymbol{\theta}\), after eliminating only the
missingness-specific parameter \(\boldsymbol{\xi}\). In some classification
models, however, the inferential target depends on only part of
\(\boldsymbol{\theta}\), with the remaining components acting as nuisance
parameters. Eliminating these additional nuisance parameters need not preserve
the additive information decomposition of Section~\ref{sec:model}.

Write
\[
\boldsymbol{\theta}
=
\begin{pmatrix}
\boldsymbol{\beta}\\
\boldsymbol{\lambda}
\end{pmatrix},
\qquad
\boldsymbol{\beta}\in\mathbb{R}^{r},
\quad
\boldsymbol{\lambda}\in\mathbb{R}^{s},
\]
where \(\boldsymbol{\beta}\) is the parameter of interest and
\(\boldsymbol{\lambda}\) is nuisance. Here ``nuisance'' is meant in the
inferential sense: the target depends on \(\boldsymbol{\beta}\), whereas
\(\boldsymbol{\lambda}\) is eliminated by efficient-score projection.
Parameters that determine the Bayes decision boundary are therefore not
nuisance when classification risk itself is the target.

Let
\[
\boldsymbol{A}
=
\boldsymbol{I}_{\mathrm{CC}}(\boldsymbol{\theta}),
\qquad
\boldsymbol{K}
=
\boldsymbol{I}_{M}^{\mathrm{eff}}-\boldsymbol{D},
\qquad
\boldsymbol{J}
=
\boldsymbol{I}_{\mathrm{PC}}^{\mathrm{eff}}
=
\boldsymbol{A}+\boldsymbol{K},
\]
and partition \(\boldsymbol{A}\) and \(\boldsymbol{K}\) conformably with
\((\boldsymbol{\beta}^{\top},\boldsymbol{\lambda}^{\top})^{\top}\):
\[
\boldsymbol{A}
=
\begin{pmatrix}
\boldsymbol{A}_{\beta\beta} &
\boldsymbol{A}_{\beta\lambda}\\
\boldsymbol{A}_{\lambda\beta} &
\boldsymbol{A}_{\lambda\lambda}
\end{pmatrix},
\qquad
\boldsymbol{K}
=
\begin{pmatrix}
\boldsymbol{K}_{\beta\beta} &
\boldsymbol{K}_{\beta\lambda}\\
\boldsymbol{K}_{\lambda\beta} &
\boldsymbol{K}_{\lambda\lambda}
\end{pmatrix}.
\]
Under complete classification, the efficient information for
\(\boldsymbol{\beta}\) is the Schur complement
\begin{equation}
\boldsymbol{A}_{\mathrm{eff}}(\boldsymbol{\beta})
=
\boldsymbol{A}_{\beta\beta}
-
\boldsymbol{A}_{\beta\lambda}
\boldsymbol{A}_{\lambda\lambda}^{-1}
\boldsymbol{A}_{\lambda\beta}.
\label{eq:Aeff-beta}
\end{equation}

To express the corresponding information under partial classification, define
\(\boldsymbol{R}
=
\boldsymbol{A}_{\beta\lambda}
\boldsymbol{A}_{\lambda\lambda}^{-1}\)
and
\begin{align}
\widetilde{\boldsymbol{K}}_{\beta\beta}
={}&
\boldsymbol{K}_{\beta\beta}
-
\boldsymbol{R}\boldsymbol{K}_{\lambda\beta}
-
\boldsymbol{K}_{\beta\lambda}\boldsymbol{R}^{\top}
+
\boldsymbol{R}\boldsymbol{K}_{\lambda\lambda}
\boldsymbol{R}^{\top},
\label{eq:Ktilde-betabeta}\\
\widetilde{\boldsymbol{K}}_{\beta\lambda}
={}&
\boldsymbol{K}_{\beta\lambda}
-
\boldsymbol{R}\boldsymbol{K}_{\lambda\lambda}.
\label{eq:Ktilde-betalambda}
\end{align}
These quantities describe the perturbation induced by partial classification
after orthogonalization with respect to the complete-data nuisance score.

\begin{proposition}
\label{prop:nuisance-coupling}
Assume that
\(\boldsymbol{A}_{\lambda\lambda}\) and
\(\boldsymbol{J}_{\lambda\lambda}
=
\boldsymbol{A}_{\lambda\lambda}
+
\boldsymbol{K}_{\lambda\lambda}\)
are nonsingular. Then the efficient Fisher information for
\(\boldsymbol{\beta}\) under informative partial classification is
\begin{equation}
\boldsymbol{J}_{\mathrm{eff}}(\boldsymbol{\beta})
=
\boldsymbol{A}_{\mathrm{eff}}(\boldsymbol{\beta})
+
\widetilde{\boldsymbol{K}}_{\beta\beta}
-
\widetilde{\boldsymbol{K}}_{\beta\lambda}
\boldsymbol{J}_{\lambda\lambda}^{-1}
\widetilde{\boldsymbol{K}}_{\lambda\beta}.
\label{eq:Jeff-beta}
\end{equation}
Moreover,
\begin{equation}
\boldsymbol{C}_{K}
=
\widetilde{\boldsymbol{K}}_{\beta\lambda}
\boldsymbol{J}_{\lambda\lambda}^{-1}
\widetilde{\boldsymbol{K}}_{\lambda\beta}
\succeq
\boldsymbol{0}.
\label{eq:nuisance-penalty}
\end{equation}
\end{proposition}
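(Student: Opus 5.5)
The plan is a direct Schur-complement computation. The efficient information for \(\boldsymbol\beta\) under partial classification is
\[
\boldsymbol J_{\mathrm{eff}}(\boldsymbol\beta)=\boldsymbol J_{\beta\beta}-\boldsymbol J_{\beta\lambda}\boldsymbol J_{\lambda\lambda}^{-1}\boldsymbol J_{\lambda\beta},
\]
where \(\boldsymbol J=\boldsymbol A+\boldsymbol K\) is taken as given from Theorem~\ref{thm:information-decomposition}. I would substitute \(\boldsymbol J_{\beta\beta}=\boldsymbol A_{\beta\beta}+\boldsymbol K_{\beta\beta}\), \(\boldsymbol J_{\beta\lambda}=\boldsymbol A_{\beta\lambda}+\boldsymbol K_{\beta\lambda}\), and so on. The key idea is then to rewrite the cross block using \(\boldsymbol R=\boldsymbol A_{\beta\lambda}\boldsymbol A_{\lambda\lambda}^{-1}\). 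Since \(\boldsymbol A_{\beta\lambda}=\boldsymbol R\boldsymbol A_{\lambda\lambda}\), we get
\[
\boldsymbol J_{\beta\lambda}=\boldsymbol R\boldsymbol A_{\lambda\lambda}+\boldsymbol K_{\beta\lambda}=\boldsymbol R\boldsymbol J_{\lambda\lambda}+(\boldsymbol K_{\beta\lambda}-\boldsymbol R\boldsymbol K_{\lambda\lambda})=\boldsymbol R\boldsymbol J_{\lambda\lambda}+\widetilde{\boldsymbol K}_{\beta\lambda}.
\]
This identity is the heart of the argument.

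Next I would expand the quadratic form. Inserting \(\boldsymbol J_{\beta\lambda}=\boldsymbol R\boldsymbol J_{\lambda\lambda}+\widetilde{\boldsymbol K}_{\beta\lambda}\) gives
\[
\boldsymbol J_{\beta\lambda}\boldsymbol J_{\lambda\lambda}^{-1}\boldsymbol J_{\lambda\beta}
=\boldsymbol R\boldsymbol J_{\lambda\lambda}\boldsymbol R^\top+\boldsymbol R\widetilde{\boldsymbol K}_{\lambda\beta}+\widetilde{\boldsymbol K}_{\beta\lambda}\boldsymbol R^\top+\widetilde{\boldsymbol K}_{\beta\lambda}\boldsymbol J_{\lambda\lambda}^{-1}\widetilde{\boldsymbol K}_{\lambda\beta}.
\]
I would then expand \(\boldsymbol R\boldsymbol J_{\lambda\lambda}\boldsymbol R^\top=\boldsymbol A_{\beta\lambda}\boldsymbol A_{\lambda\lambda}^{-1}\boldsymbol A_{\lambda\beta}+\boldsymbol R\boldsymbol K_{\lambda\lambda}\boldsymbol R^\top\), and use \(\widetilde{\boldsymbol K}_{\lambda\beta}=\boldsymbol K_{\lambda\beta}-\boldsymbol K_{\lambda\lambda}\boldsymbol R^\top\) to get \(\boldsymbol R\widetilde{\boldsymbol K}_{\lambda\beta}=\boldsymbol R\boldsymbol K_{\lambda\beta}-\boldsymbol R\boldsymbol K_{\lambda\lambda}\boldsymbol R^\top\), together with its transpose. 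Subtracting from \(\boldsymbol A_{\beta\beta}+\boldsymbol K_{\beta\beta}\), the \(\boldsymbol A\)-terms assemble into \(\boldsymbol A_{\mathrm{eff}}(\boldsymbol\beta)\) as in \eqref{eq:Aeff-beta}. The \(\boldsymbol K\)-terms collect as
\[
\boldsymbol K_{\beta\beta}-\boldsymbol R\boldsymbol K_{\lambda\beta}-\boldsymbol K_{\beta\lambda}\boldsymbol R^\top+(-1+2)\boldsymbol R\boldsymbol K_{\lambda\lambda}\boldsymbol R^\top=\widetilde{\boldsymbol K}_{\beta\beta},
\]
matching \eqref{eq:Ktilde-betabeta}. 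What is left is \(-\widetilde{\boldsymbol K}_{\beta\lambda}\boldsymbol J_{\lambda\lambda}^{-1}\widetilde{\boldsymbol K}_{\lambda\beta}\), which gives \eqref{eq:Jeff-beta}. The main obstacle is this sign bookkeeping for the \(\boldsymbol R\boldsymbol K_{\lambda\lambda}\boldsymbol R^\top\) terms, and I would double-check it by verifying the scalar case \(r=s=1\).

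For \eqref{eq:nuisance-penalty}, I would argue that \(\boldsymbol J_{\lambda\lambda}\) is positive definite, not merely nonsingular. \(\boldsymbol J\) is an efficient information matrix, hence a covariance of efficient scores, so \(\boldsymbol J\succeq\boldsymbol0\). Its principal block \(\boldsymbol J_{\lambda\lambda}\) is then positive semidefinite, and nonsingularity upgrades this to positive definite. It follows that \(\boldsymbol J_{\lambda\lambda}^{-1}\succ\boldsymbol0\), so \(\boldsymbol C_K=\widetilde{\boldsymbol K}_{\beta\lambda}\boldsymbol J_{\lambda\lambda}^{-1}\widetilde{\boldsymbol K}_{\beta\lambda}^\top\) is a congruence of a positive definite matrix and hence \(\succeq\boldsymbol0\). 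Here \(\widetilde{\boldsymbol K}_{\lambda\beta}=\widetilde{\boldsymbol K}_{\beta\lambda}^\top\) holds by the symmetry of \(\boldsymbol K\).
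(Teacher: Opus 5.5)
Your proof is correct and is essentially the paper's argument. Your identity \(\boldsymbol J_{\beta\lambda}=\boldsymbol R\boldsymbol J_{\lambda\lambda}+\widetilde{\boldsymbol K}_{\beta\lambda}\) is the off-diagonal block of the paper's congruence \(\boldsymbol T\boldsymbol J\boldsymbol T^{\top}\) with \(\boldsymbol T=\bigl(\begin{smallmatrix}\boldsymbol I & -\boldsymbol R\\ \boldsymbol 0 & \boldsymbol I\end{smallmatrix}\bigr)\), and your term-by-term expansion simply replaces the paper's appeal to invariance of the Schur complement under that block-triangular transformation; the positivity argument for \(\boldsymbol C_K\) (nonsingular principal block of the positive semidefinite \(\boldsymbol J\), hence positive definite, then a congruence) is the same as the paper's.
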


The proof is given in Supplementary Section~\ref{supp:nuisance-information}.
Proposition~\ref{prop:nuisance-coupling} shows that eliminating additional
nuisance parameters introduces a nonnegative coupling penalty. In particular,
\begin{equation}
\boldsymbol{J}_{\mathrm{eff}}(\boldsymbol{\beta})
-
\boldsymbol{A}_{\mathrm{eff}}(\boldsymbol{\beta})
=
\widetilde{\boldsymbol{K}}_{\beta\beta}
-
\boldsymbol{C}_{K}.
\label{eq:classifier-info-difference}
\end{equation}
Thus \(\widetilde{\boldsymbol{K}}_{\beta\beta}\) represents the net information
perturbation in the complete-data efficient direction for
\(\boldsymbol{\beta}\), whereas \(\boldsymbol{C}_{K}\) measures the loss
arising from the coupling of that direction with the nuisance score under
partial classification. A positive perturbation in the
classifier-relevant block is therefore not sufficient for an information gain;
it must also dominate this coupling penalty.

If
\(\widetilde{\boldsymbol{K}}_{\beta\lambda}=\boldsymbol{0}\), then
\(\boldsymbol{C}_{K}=\boldsymbol{0}\) and
\begin{equation}
\boldsymbol{J}_{\mathrm{eff}}(\boldsymbol{\beta})
=
\boldsymbol{A}_{\mathrm{eff}}(\boldsymbol{\beta})
+
\widetilde{\boldsymbol{K}}_{\beta\beta}.
\label{eq:Jeff-orthogonal}
\end{equation}
In this case, informative partial classification does not recouple the
complete-data efficient score for \(\boldsymbol{\beta}\) with the nuisance
score, and the additive form is recovered after nuisance elimination.

This distinction is relevant when inference concerns a lower-dimensional
classifier parameter. In the multiclass risk analysis that follows, however,
all parameters determining the Bayes decision boundary are retained in the
classifier-relevant vector. Thus, for quadratic discriminant analysis, the
class probabilities, means, and covariance matrices are not eliminated as
nuisance parameters.
\section{Excess-risk geometry for multiclass Bayes classification}
\label{sec:risk}

The information decompositions in Sections~\ref{sec:model} and
\ref{sec:classifier-information} describe how informative label missingness
affects the precision with which the parameters of a classification model can
be estimated. To determine whether these changes improve classification,
however, the information matrix must be related to the geometry of the Bayes
decision boundary. We develop this connection here.

For \(k=1,\ldots,g\), define the prior-weighted class density
\(r_k(\boldsymbol{y};\boldsymbol{\theta})
=\pi_k f_k(\boldsymbol{y};\boldsymbol{\vartheta}_k)\), and write
\(r_k^0(\boldsymbol{y})
=r_k(\boldsymbol{y};\boldsymbol{\theta}_0)\) at the true parameter
\(\boldsymbol{\theta}_0\). The Bayes classifier is
\begin{equation}
C_0(\boldsymbol{y})
=
\arg\max_{1\leq k\leq g}
r_k^0(\boldsymbol{y}),
\label{eq:bayes-classifier}
\end{equation}
with an arbitrary fixed rule for breaking ties on sets of probability zero.
For \(\boldsymbol{\theta}\) in a neighborhood of
\(\boldsymbol{\theta}_0\), let
\[
C_{\boldsymbol{\theta}}(\boldsymbol{y})
=
\arg\max_k
r_k(\boldsymbol{y};\boldsymbol{\theta})
\]
denote the corresponding plug-in classifier, so that
\(C_{\boldsymbol{\theta}_0}=C_0\). For a local perturbation
\(\boldsymbol{h}\), write
\[
C_{\boldsymbol{h}}(\boldsymbol{y})
:=
C_{\boldsymbol{\theta}_0+\boldsymbol{h}}(\boldsymbol{y}).
\]

For \(k\neq l\), define the pairwise weighted-density contrast
\begin{equation}
g_{kl}(\boldsymbol{y};\boldsymbol{\theta})
=
r_k(\boldsymbol{y};\boldsymbol{\theta})
-
r_l(\boldsymbol{y};\boldsymbol{\theta}).
\label{eq:pairwise-contrast}
\end{equation}
Not every equality surface \(\{g_{kl}=0\}\) contributes to the Bayes decision
boundary. Only the portion on which classes \(k\) and \(l\) jointly attain the
largest prior-weighted density is relevant.

\begin{definition}
\label{def:active-face}
The active Bayes face separating classes \(k\) and \(l\) is
\begin{equation}
\mathcal{F}_{kl}
=
\left\{
\boldsymbol{y}:
r_k^0(\boldsymbol{y})
=
r_l^0(\boldsymbol{y})
>
\max_{m\notin\{k,l\}}
r_m^0(\boldsymbol{y})
\right\}.
\label{eq:active-face}
\end{equation}
\end{definition}

The distinction between a pairwise equality surface and an active face is
essential in the multiclass setting. If
\(r_k^0(\boldsymbol{y})=r_l^0(\boldsymbol{y})\) while a third class has a
strictly larger weighted density, perturbing the \(k\)-versus-\(l\)
comparison does not change the Bayes decision locally and therefore does not
contribute to the leading excess risk.

For \(\boldsymbol{s}\in\mathcal{F}_{kl}\), let
\(\boldsymbol{b}_{kl}(\boldsymbol{s})
=\nabla_{\boldsymbol{\theta}}
g_{kl}(\boldsymbol{s};\boldsymbol{\theta}_0)\).
This vector describes the first-order change in the \(k\)-versus-\(l\)
contrast induced by a local parameter perturbation at the boundary point
\(\boldsymbol{s}\).

We impose regularity conditions excluding singular decision geometry. For
each nonempty active face, assume
\begin{equation}
\nabla_{\boldsymbol{y}}
g_{kl}(\boldsymbol{s};\boldsymbol{\theta}_0)
\neq
\boldsymbol{0},
\qquad
\boldsymbol{s}\in\mathcal{F}_{kl}.
\label{eq:regular-face}
\end{equation}
The regular part of \(\mathcal{F}_{kl}\) is then a
\((p-1)\)-dimensional smooth hypersurface. At points where \(r\geq3\) classes
tie at the Bayes maximum, we additionally assume transversality: \(r-1\)
independent pairwise contrast gradients span the normal space of the
corresponding tie stratum. Ties of positive Lebesgue measure are excluded. When the feature support is
unbounded, we additionally assume the tail condition stated in Supplementary
Section~\ref{supp:noncompact}. When one or more active faces are noncompact, we also assume the
regular-exhaustion and boundary-integrability conditions stated there.

Let
\[
R(\boldsymbol{\theta})
=
\Pr_{\boldsymbol{\theta}_0}
\left\{
C_{\boldsymbol{\theta}}(\boldsymbol{Y})\neq Z
\right\}
\]
denote the classification risk of the rule determined by
\(\boldsymbol{\theta}\), evaluated under the true distribution
\(\boldsymbol{\theta}_0\), and let
\(R^\ast=R(\boldsymbol{\theta}_0)\) denote the Bayes risk. The local geometry
of the excess risk is characterized by the following result.

\begin{theorem}
\label{thm:multiclass-risk}
Suppose that the active Bayes boundary lies in the interior of the common
feature support and that the class-weighted densities are twice continuously
differentiable in \((\boldsymbol{y},\boldsymbol{\theta})\) in a neighborhood
of the active Bayes boundary. Assume that the active pairwise
faces satisfy \eqref{eq:regular-face}, that higher-order active tie sets
satisfy the stated transversality condition, and that ties of positive
Lebesgue measure are absent. When the feature support is unbounded, we additionally assume the tail condition stated in Supplementary
Section~\ref{supp:noncompact}. When one or more active faces are noncompact, we also assume the
regular-exhaustion and boundary-integrability conditions stated there. Then,
as \(\boldsymbol{h}\to\boldsymbol{0}\),
\begin{equation}
R(\boldsymbol{\theta}_0+\boldsymbol{h})
-
R^\ast
=
\frac{1}{2}
\boldsymbol{h}^{\top}
\boldsymbol{H}_{R}
\boldsymbol{h}
+
o\!\left(\|\boldsymbol{h}\|^2\right),
\label{eq:risk-quadratic}
\end{equation}
where
\begin{equation}
\boldsymbol{H}_{R}
=
\sum_{1\leq k<l\leq g}
\boldsymbol{H}_{kl},
\qquad
\boldsymbol{H}_{kl}
=
\int_{\mathcal{F}_{kl}}
\frac{
\boldsymbol{b}_{kl}(\boldsymbol{s})
\boldsymbol{b}_{kl}(\boldsymbol{s})^{\top}
}{
\left\|
\nabla_{\boldsymbol{y}}
g_{kl}(\boldsymbol{s};\boldsymbol{\theta}_0)
\right\|
}
\,dS(\boldsymbol{s}).
\label{eq:Hkl}
\end{equation}
In particular, \(\boldsymbol{H}_{R}\succeq\boldsymbol{0}\). Generic triple
and higher-order Bayes junctions do not contribute to the quadratic term.
\end{theorem}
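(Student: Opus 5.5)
We will write the excess risk as an integral of a pointwise regret and then localize it near the active faces. Since $R(\boldsymbol\theta)=1-\int r^0_{C_{\boldsymbol\theta}(\boldsymbol y)}(\boldsymbol y)\,d\boldsymbol y$ and $R^\ast=1-\int \max_k r_k^0(\boldsymbol y)\,d\boldsymbol y$, we have
\[
R(\boldsymbol\theta_0+\boldsymbol h)-R^\ast=\int \bigl\{r^0_{C_0(\boldsymbol y)}(\boldsymbol y)-r^0_{C_{\boldsymbol h}(\boldsymbol y)}(\boldsymbol y)\bigr\}\,d\boldsymbol y .
\]
The integrand is nonnegative and vanishes unless $C_{\boldsymbol h}(\boldsymbol y)\neq C_0(\boldsymbol y)$. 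First, we show that the disagreement set lies in an $O(\|\boldsymbol h\|)$ tubular neighbourhood of the active boundary. On compact sets away from the boundary, the Bayes margin $\max_k r^0_k-\max_{m\ne C_0}r^0_m$ is bounded below, while $r_k(\cdot;\boldsymbol\theta_0+\boldsymbol h)-r^0_k=O(\|\boldsymbol h\|)$ uniformly by $C^2$ smoothness. The tail, regular-exhaustion and boundary-integrability conditions of Supplementary Section~\ref{supp:noncompact} control the unbounded part, giving a contribution of $o(\|\boldsymbol h\|^2)$. Inequalities between non-adjacent classes cannot flip there, so the disagreement set near a point of $\mathcal F_{kl}$ consists only of $k$--$l$ swaps.

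Second, near a regular point $\boldsymbol s\in\mathcal F_{kl}$ we use local normal coordinates $\boldsymbol y=\boldsymbol s+t\,\boldsymbol\nu(\boldsymbol s)$, where $\boldsymbol\nu=\nabla_{\boldsymbol y}g_{kl}/\|\nabla_{\boldsymbol y}g_{kl}\|$ and $d\boldsymbol y=(1+O(t))\,dt\,dS(\boldsymbol s)$. Taylor expansion gives
\[
g_{kl}(\boldsymbol s+t\boldsymbol\nu;\boldsymbol\theta_0+\boldsymbol h)=\|\nabla_{\boldsymbol y}g_{kl}\|\,t+\boldsymbol b_{kl}(\boldsymbol s)^\top\boldsymbol h+O(t^2+\|\boldsymbol h\|^2+|t|\|\boldsymbol h\|).
\]
Hence the perturbed boundary sits at $t^\ast(\boldsymbol s)=-\boldsymbol b_{kl}^\top\boldsymbol h/\|\nabla_{\boldsymbol y} g_{kl}\|+O(\|\boldsymbol h\|^2)$. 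The regret integrand between $0$ and $t^\ast$ is $|g_{kl}(\boldsymbol y;\boldsymbol\theta_0)|=\|\nabla_{\boldsymbol y}g_{kl}\|\,|t|+O(t^2)$. Integrating over $t$ gives
\[
\tfrac12\|\nabla_{\boldsymbol y} g_{kl}\|\,t^{\ast2}=\frac{(\boldsymbol b_{kl}^\top\boldsymbol h)^2}{2\|\nabla_{\boldsymbol y} g_{kl}\|}+O(\|\boldsymbol h\|^3).
\]
Integrating over $\mathcal F_{kl}$ with $dS$ and summing over pairs yields $\tfrac12\boldsymbol h^\top\boldsymbol H_R\boldsymbol h$. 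Positive semidefiniteness of $\boldsymbol H_R$ follows immediately because each $\boldsymbol H_{kl}$ is an integral of rank-one matrices $\boldsymbol b\boldsymbol b^\top$ against a positive weight. The uniformity of the remainders follows from compactness of truncated faces, with the exhaustion conditions used to pass to the limit on noncompact faces by dominated convergence.

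The main obstacle is the treatment of the junctions, namely tie strata of codimension $r-1\ge2$, together with the neighbourhood of the relative boundary of each face where the above normal-coordinate argument breaks down. The plan is to excise a ball-tube of radius $\rho=K\|\boldsymbol h\|$ around the codimension-$\ge2$ strata. Transversality makes the local contrast map a submersion, so this excised set has Lebesgue volume $O(\rho^{2})$ per unit stratum measure. The regret there is $O(\|\boldsymbol h\|)$, because all contrasts are $O(\rho+\|\boldsymbol h\|)$. The junction contribution is therefore $O(\|\boldsymbol h\|^3)$ locally, hence $o(\|\boldsymbol h\|^2)$ in total. Meanwhile, the face integrals over the excised region change $\boldsymbol H_{kl}$ by $O(\rho)$ in surface measure, since $\boldsymbol b_{kl}$ is bounded near the junction. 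To make this rigorous, we would first verify that for $\boldsymbol y$ at distance $\ge\rho$ from the junction, only a single pair can swap; then we would apply step two with constants uniform in $\rho$, and let $K\to\infty$ after $\boldsymbol h\to\boldsymbol 0$.
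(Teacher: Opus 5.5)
Your proposal is correct and follows essentially the same route as the paper's proof. Both write $R(\boldsymbol\theta_0+\boldsymbol h)-R^\ast$ as an integral of pointwise regret and localize it to an $O(\|\boldsymbol h\|)$ neighbourhood of the active faces. Both then locate the displaced $k$--$l$ boundary by the implicit-function theorem, integrate $|g_{kl}^0|$ across the strip, excise $O(\|\boldsymbol h\|)$ neighbourhoods of transversal junctions and the adjacent face portions (volume times $O(\|\boldsymbol h\|)$ regret), and truncate for noncompact boundaries.

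The differences are minor. You use unit-normal coordinates $\boldsymbol s+t\boldsymbol\nu(\boldsymbol s)$, whereas the paper uses the flow of $\nabla_{\boldsymbol y}g_{kl}^0/\|\nabla_{\boldsymbol y}g_{kl}^0\|^2$ so that $g_{kl}^0$ itself is the normal coordinate; both give the same leading term. Strictly, the tail condition does not give $o(\|\boldsymbol h\|^2)$ for a fixed truncation radius. It only does so through the paper's argument of letting $\boldsymbol h\to\boldsymbol 0$ at fixed $L$ and then $L\to\infty$, which your final sentence gestures at.
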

A proof is given in Supplementary Sections~\ref{supp:risk-proof}--\ref{supp:noncompact}. The argument localizes
classification disagreement to a thin neighborhood of the Bayes boundary and
uses the pairwise contrast \(g_{kl}\) as a coordinate normal to each active
face. The loss incurred by crossing the \(k\)-versus-\(l\) boundary is then
\(|g_{kl}|\), and integration across the displaced boundary yields the
quadratic contribution in \eqref{eq:Hkl}. Under transversality, higher-order
junctions are lower-dimensional and contribute only to higher-order terms.

The matrix \(\boldsymbol{H}_{R}\) therefore defines a local
classification-relevance metric on the parameter space. For any direction
\(\boldsymbol{v}\),
\begin{equation}
\boldsymbol{v}^{\top}
\boldsymbol{H}_{R}
\boldsymbol{v}
=
\sum_{k<l}
\int_{\mathcal{F}_{kl}}
\frac{
\left\{
\boldsymbol{b}_{kl}(\boldsymbol{s})^{\top}
\boldsymbol{v}
\right\}^{2}
}{
\left\|
\nabla_{\boldsymbol{y}}
g_{kl}(\boldsymbol{s};\boldsymbol{\theta}_0)
\right\|
}
\,dS(\boldsymbol{s}).
\label{eq:HR-direction}
\end{equation}
Thus a direction \(\boldsymbol{v}\) is locally irrelevant to classification
if and only if
\(\boldsymbol{b}_{kl}(\boldsymbol{s})^{\top}\boldsymbol{v}=0\)
for \(dS\)-almost every \(\boldsymbol{s}\) on every active face
\(\mathcal{F}_{kl}\); equivalently, it produces no first-order displacement
of the active Bayes boundary.

An equivalent representation in terms of pairwise log-contrasts, which is
convenient for the QDA calculations in Section~\ref{sec:qda}, is given in
Supplementary Section~\ref{supp:noncompact}.

Theorem~\ref{thm:multiclass-risk} immediately yields an asymptotic
classification-risk expansion for regular parameter estimators.

\begin{corollary}
\label{cor:asymptotic-risk}
Suppose that
\[
\sqrt{n}
\left(
\widehat{\boldsymbol{\theta}}_n-\boldsymbol{\theta}_0
\right)
\overset{d}{\longrightarrow}
N(\boldsymbol{0},\boldsymbol{V}),
\]
and that, for some \(\delta>0\),
\[
\sup_n
E\left\|
\sqrt n
\left(
\widehat{\boldsymbol{\theta}}_n-\boldsymbol{\theta}_0
\right)
\right\|^{2+\delta}
<\infty.
\]
Then
\begin{equation}
n
\left\{
R(\widehat{\boldsymbol{\theta}}_n)-R^\ast
\right\}
\overset{d}{\longrightarrow}
\frac{1}{2}
\boldsymbol{Z}^{\top}
\boldsymbol{H}_{R}
\boldsymbol{Z},
\qquad
\boldsymbol{Z}\sim N(\boldsymbol{0},\boldsymbol{V}),
\label{eq:risk-limit}
\end{equation}
and
\begin{equation}
E
\left\{
R(\widehat{\boldsymbol{\theta}}_n)
\right\}
-
R^\ast
=
\frac{1}{2n}
\operatorname{tr}
\left(
\boldsymbol{H}_{R}\boldsymbol{V}
\right)
+
o(n^{-1}).
\label{eq:expected-risk}
\end{equation}
\end{corollary}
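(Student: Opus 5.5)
The plan is to obtain both claims of Corollary~\ref{cor:asymptotic-risk} from the local quadratic expansion of Theorem~\ref{thm:multiclass-risk}, using the continuous mapping theorem for the distributional limit and uniform integrability for the expectation. Set \(\boldsymbol{Z}_n=\sqrt n(\widehat{\boldsymbol\theta}_n-\boldsymbol\theta_0)\) and write Theorem~\ref{thm:multiclass-risk} as
\[
R(\boldsymbol\theta_0+\boldsymbol h)-R^\ast=\tfrac12\boldsymbol h^\top\boldsymbol H_R\boldsymbol h+\rho(\boldsymbol h)\|\boldsymbol h\|^2,
\]
where \(\rho(\boldsymbol h)\to0\) as \(\boldsymbol h\to\boldsymbol 0\). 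Then
\[
n\{R(\widehat{\boldsymbol\theta}_n)-R^\ast\}=\tfrac12\boldsymbol Z_n^\top\boldsymbol H_R\boldsymbol Z_n+\rho(\boldsymbol Z_n/\sqrt n)\|\boldsymbol Z_n\|^2 .
\]
The first term converges in distribution to \(\tfrac12\boldsymbol Z^\top\boldsymbol H_R\boldsymbol Z\) by the continuous mapping theorem. The bounded moment condition makes \(\boldsymbol Z_n\) tight, so \(\widehat{\boldsymbol\theta}_n-\boldsymbol\theta_0=O_p(n^{-1/2})\to0\) in probability. Hence \(\rho(\boldsymbol Z_n/\sqrt n)\to0\) in probability, and since \(\|\boldsymbol Z_n\|^2=O_p(1)\) the remainder is \(o_p(1)\). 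Slutsky's lemma then gives \eqref{eq:risk-limit}.

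For the expectation \eqref{eq:expected-risk}, the aim is to pass to the limit in \(E[n\{R(\widehat{\boldsymbol\theta}_n)-R^\ast\}]\). This requires uniform integrability of \(W_n=n\{R(\widehat{\boldsymbol\theta}_n)-R^\ast\}\), and establishing it is the main obstacle, because Theorem~\ref{thm:multiclass-risk} is only local. I would obtain a global bound by splitting on the event \(\{\|\widehat{\boldsymbol\theta}_n-\boldsymbol\theta_0\|\le\epsilon\}\), with \(\epsilon\) small enough that \(|\rho|\le1\) on the \(\epsilon\)-ball.

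On this event, \(0\le W_n\le(\tfrac12\|\boldsymbol H_R\|+1)\|\boldsymbol Z_n\|^2\). Because \(\sup_nE\|\boldsymbol Z_n\|^{2+\delta}<\infty\), the sequence \(\|\boldsymbol Z_n\|^2\) is uniformly integrable, so this part of \(W_n\) is uniformly integrable as well.

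On the complement, I would use only that \(0\le R(\cdot)-R^\ast\le1\), which gives \(W_n\le n\). Markov's inequality yields
\[
\Pr(\|\boldsymbol Z_n\|>\epsilon\sqrt n)\le \frac{C}{\epsilon^{2+\delta}n^{1+\delta/2}},
\]
so \(E[W_n;\text{complement}]\le Cn^{-\delta/2}\epsilon^{-2-\delta}\to0\). This is exactly where the strict excess \(\delta>0\) over second moments is needed; with \(\delta=0\) the complement contribution would only be bounded, not vanishing. I expect this tail step to be the delicate point, and it succeeds precisely because risk differences are bounded by one.

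Combining the two parts, \(W_n\) is uniformly integrable. Convergence in distribution together with uniform integrability gives \(E W_n\to \tfrac12E(\boldsymbol Z^\top\boldsymbol H_R\boldsymbol Z)=\tfrac12\operatorname{tr}(\boldsymbol H_R\boldsymbol V)\). Dividing by \(n\) then yields \eqref{eq:expected-risk}.
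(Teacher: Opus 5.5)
Your proposal is correct and follows the paper's proof essentially step for step. You use the local quadratic expansion with the continuous mapping theorem and Slutsky's lemma for the distributional limit, then obtain uniform integrability by splitting on \(\{\|\widehat{\boldsymbol\theta}_n-\boldsymbol\theta_0\|\le\epsilon\}\): domination by a constant multiple of \(\|\boldsymbol Z_n\|^2\) on that event, and \(0\le R-R^\ast\le 1\) with Markov's inequality giving an \(O(n^{-\delta/2})\) contribution off it.
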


The proof and the corresponding weighted-\(\chi^2\) representation of the
limiting quadratic form are given in Supplementary Section~\ref{supp:noncompact}. For an
efficient likelihood estimator,
\(\boldsymbol{V}=\boldsymbol{I}_{\mathrm{eff}}^{-1}\), and hence
\begin{equation}
E
\left\{
R(\widehat{\boldsymbol{\theta}}_n)
\right\}
-
R^\ast
=
\frac{1}{2n}
\operatorname{tr}
\left(
\boldsymbol{H}_{R}
\boldsymbol{I}_{\mathrm{eff}}^{-1}
\right)
+
o(n^{-1}).
\label{eq:risk-information}
\end{equation}
This expression provides the required link between the information
calculations of Sections~\ref{sec:model}--\ref{sec:classifier-information}
and classification performance: information gains matter only insofar as they
occur in directions to which \(\boldsymbol{H}_{R}\) assigns classification
relevance.

Let \(\boldsymbol{A}\) and \(\boldsymbol{J}\) denote the relevant efficient
information matrices under complete and informative partial classification,
respectively. Their leading expected excess-risk difference is determined by
\begin{equation}
\Delta_R
=
\operatorname{tr}
\left[
\boldsymbol{H}_{R}
\left(
\boldsymbol{A}^{-1}
-
\boldsymbol{J}^{-1}
\right)
\right],
\label{eq:DeltaR}
\end{equation}
since
\begin{equation}
E
\left\{
R(\widehat{\boldsymbol{\theta}}_{\mathrm{CC}})
\right\}
-
E
\left\{
R(\widehat{\boldsymbol{\theta}}_{\mathrm{PC}})
\right\}
=
\frac{\Delta_R}{2n}
+
o(n^{-1}).
\label{eq:risk-difference}
\end{equation}
Thus \(\Delta_R>0\) characterizes favourable informative missingness at the
leading \(n^{-1}\) order in expected classification risk. The next section
examines this criterion through the relative information geometry of the two
experiments.

\section{Classification-weighted information and favourable missingness}
\label{sec:spectral}

The previous section shows that, for a regular estimator with asymptotic
covariance matrix \(\boldsymbol{V}/n\), the leading excess classification risk
is governed by
\[
\operatorname{tr}
\left(
\boldsymbol{H}_{R}\boldsymbol{V}
\right).
\]
Thus the comparison between complete and partial classification depends not
only on the amount of information available, but also on the directions in
which that information is gained or lost relative to the geometry of the
active Bayes boundary.

Let \(\boldsymbol{A}\) and \(\boldsymbol{J}\) denote the relevant efficient
information matrices under complete and informative partial classification,
respectively, and assume throughout that
\(\boldsymbol{A}\succ\boldsymbol{0}\) and
\(\boldsymbol{J}\succ\boldsymbol{0}\). The leading difference in expected
excess classification risk is
\begin{equation}
\Delta_R
=
\operatorname{tr}
\left[
\boldsymbol{H}_{R}
\left(
\boldsymbol{A}^{-1}
-
\boldsymbol{J}^{-1}
\right)
\right].
\label{eq:DeltaR-main}
\end{equation}
By \eqref{eq:risk-difference}, \(\Delta_R>0\) implies that informative partial
classification has the smaller expected excess classification risk at the
leading \(n^{-1}\) order.

The corresponding asymptotic relative efficiency is
\begin{equation}
\operatorname{ARE}_{R}
=
\frac{
\operatorname{tr}
\left(
\boldsymbol{H}_{R}\boldsymbol{A}^{-1}
\right)
}{
\operatorname{tr}
\left(
\boldsymbol{H}_{R}\boldsymbol{J}^{-1}
\right)
},
\label{eq:classification-ARE}
\end{equation}
provided the denominator is positive. Hence
\(\operatorname{ARE}_{R}>1\) if and only if \(\Delta_R>0\). We use
\(\Delta_R\) as the primary criterion because it admits an additive directional
decomposition.

Let \(\boldsymbol{A}^{1/2}\) denote the symmetric positive-definite square root
of \(\boldsymbol{A}\), with inverse \(\boldsymbol{A}^{-1/2}\), and define
\begin{equation}
\boldsymbol{C}
=
\boldsymbol{A}^{-1/2}
\boldsymbol{J}
\boldsymbol{A}^{-1/2},
\qquad
\boldsymbol{W}
=
\boldsymbol{A}^{-1/2}
\boldsymbol{H}_{R}
\boldsymbol{A}^{-1/2}.
\label{eq:CW-matrices}
\end{equation}
Write
\[
\boldsymbol{C}
=
\boldsymbol{Q}
\boldsymbol{\Lambda}
\boldsymbol{Q}^{\top},
\qquad
\boldsymbol{\Lambda}
=
\operatorname{diag}
(\lambda_1,\ldots,\lambda_r),
\]
where \(\boldsymbol{Q}^{\top}\boldsymbol{Q}=\boldsymbol{I}\) and
\(\lambda_j>0\). For the \(j\)th eigenvector
\(\boldsymbol{q}_j\), define
\begin{equation}
w_j
=
\boldsymbol{q}_j^{\top}
\boldsymbol{W}
\boldsymbol{q}_j
\geq 0.
\label{eq:w-j}
\end{equation}

\begin{theorem}
\label{thm:spectral}
Under the conditions above,
\begin{equation}
\Delta_R
=
\sum_{j=1}^{r}
w_j
\frac{\lambda_j-1}{\lambda_j}.
\label{eq:spectral-Delta}
\end{equation}
Consequently, if
\(\mathcal{P}=\{j:\lambda_j>1\}\) and
\(\mathcal{N}=\{j:\lambda_j<1\}\), then
\(\Delta_R>0\) if and only if
\begin{equation}
\sum_{j\in\mathcal{P}}
w_j
\frac{\lambda_j-1}{\lambda_j}
>
\sum_{j\in\mathcal{N}}
w_j
\frac{1-\lambda_j}{\lambda_j}.
\label{eq:spectral-condition}
\end{equation}
\end{theorem}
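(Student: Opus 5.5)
The identity follows by rewriting both inverses in the whitened coordinates defined by \(\boldsymbol A^{-1/2}\) and then diagonalizing \(\boldsymbol C\). Since \(\boldsymbol A\succ\boldsymbol 0\), the symmetric square root exists and
\[
\boldsymbol J=\boldsymbol A^{1/2}\boldsymbol C\boldsymbol A^{1/2},
\qquad
\boldsymbol J^{-1}=\boldsymbol A^{-1/2}\boldsymbol C^{-1}\boldsymbol A^{-1/2},
\qquad
\boldsymbol A^{-1}=\boldsymbol A^{-1/2}\boldsymbol I\boldsymbol A^{-1/2}.
\]
Here \(\boldsymbol C\succ\boldsymbol0\) because \(\boldsymbol J\succ\boldsymbol0\) and congruence preserves definiteness, so \(\lambda_j>0\) and \(\boldsymbol C^{-1}\) exists. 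Hence
\[
\boldsymbol A^{-1}-\boldsymbol J^{-1}=\boldsymbol A^{-1/2}\left(\boldsymbol I-\boldsymbol C^{-1}\right)\boldsymbol A^{-1/2}.
\]

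Next I use cyclicity of the trace in \eqref{eq:DeltaR-main}:
\[
\Delta_R=\operatorname{tr}\!\left[\boldsymbol A^{-1/2}\boldsymbol H_R\boldsymbol A^{-1/2}\left(\boldsymbol I-\boldsymbol C^{-1}\right)\right]=\operatorname{tr}\!\left[\boldsymbol W\left(\boldsymbol I-\boldsymbol C^{-1}\right)\right].
\]
Substituting \(\boldsymbol C=\boldsymbol Q\boldsymbol\Lambda\boldsymbol Q^\top\) gives \(\boldsymbol I-\boldsymbol C^{-1}=\boldsymbol Q(\boldsymbol I-\boldsymbol\Lambda^{-1})\boldsymbol Q^\top\). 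Applying cyclicity once more,
\[
\Delta_R=\operatorname{tr}\!\left[\boldsymbol Q^\top\boldsymbol W\boldsymbol Q\,(\boldsymbol I-\boldsymbol\Lambda^{-1})\right]=\sum_j(\boldsymbol Q^\top\boldsymbol W\boldsymbol Q)_{jj}\,(1-\lambda_j^{-1}).
\]
This holds because the trace of a matrix times a diagonal matrix involves only the diagonal entries of the first factor. Since \((\boldsymbol Q^\top\boldsymbol W\boldsymbol Q)_{jj}=\boldsymbol q_j^\top\boldsymbol W\boldsymbol q_j=w_j\), this yields \eqref{eq:spectral-Delta}.

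It remains to check that \(w_j\geq0\). This follows from \(\boldsymbol H_R\succeq\boldsymbol0\) (Theorem~\ref{thm:multiclass-risk}): \(\boldsymbol W\) is congruent to \(\boldsymbol H_R\) and is therefore positive semidefinite.

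For \eqref{eq:spectral-condition}, I partition the index set into \(\mathcal P\), \(\mathcal N\) and \(\{j:\lambda_j=1\}\). Terms with \(\lambda_j=1\) vanish. Terms in \(\mathcal N\) equal \(-w_j(1-\lambda_j)/\lambda_j\). Therefore \(\Delta_R>0\) is equivalent to the stated inequality.

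No step is genuinely difficult. The only point needing care is that a repeated eigenvalue of \(\boldsymbol C\) leaves \(\boldsymbol Q\) non-unique, and the \(w_j\) then depend on the choice of basis. The formula nevertheless holds for any orthonormal eigenbasis, since the derivation never uses uniqueness of \(\boldsymbol Q\). Within an eigenspace only the sum of the \(w_j\) matters, and that sum equals the trace of \(\boldsymbol W\) restricted to the eigenspace, which is basis-invariant.
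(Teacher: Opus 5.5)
Your proposal is correct and follows essentially the same route as the paper's proof. Both whiten by \(\boldsymbol A^{-1/2}\) to obtain \(\Delta_R=\operatorname{tr}[\boldsymbol W(\boldsymbol I-\boldsymbol C^{-1})]\), diagonalize \(\boldsymbol C\), read off the diagonal entries of \(\boldsymbol Q^{\top}\boldsymbol W\boldsymbol Q\), split the sum into gain and loss terms, and note that repeated eigenvalues leave each eigenspace's total contribution basis-invariant.
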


The proof is given in Supplementary Section~\ref{supp:spectral-proof}.

The generalized eigenvalues in Theorem~\ref{thm:spectral} have a direct
statistical interpretation. If
\(\boldsymbol{v}_j=\boldsymbol{A}^{-1/2}\boldsymbol{q}_j\), then
\begin{equation}
\boldsymbol{J}\boldsymbol{v}_j
=
\lambda_j
\boldsymbol{A}\boldsymbol{v}_j,
\label{eq:generalized-eigen}
\end{equation}
and, under the normalization
\(\boldsymbol{v}_j^{\top}\boldsymbol{A}\boldsymbol{v}_j=1\),
\[
\lambda_j
=
\boldsymbol{v}_j^{\top}
\boldsymbol{J}
\boldsymbol{v}_j.
\]
Thus \(\lambda_j>1\) indicates greater information under partial
classification than under complete classification in direction
\(\boldsymbol{v}_j\), whereas \(\lambda_j<1\) indicates less information
in that direction.

Moreover,
\begin{equation}
w_j
=
\boldsymbol{v}_j^{\top}
\boldsymbol{H}_{R}
\boldsymbol{v}_j.
\label{eq:w-risk-direction}
\end{equation}
Hence \(w_j\) measures the classification relevance of the same direction.
By \eqref{eq:HR-direction}, it is large when perturbation in
\(\boldsymbol{v}_j\) moves one or more active Bayes faces substantially.

Theorem~\ref{thm:spectral} therefore characterizes favourable informative
missingness as an alignment phenomenon. Global Fisher-information dominance is
not required: \(\boldsymbol{J}\not\succeq\boldsymbol{A}\) is compatible with
\(\Delta_R>0\) whenever information gains occur primarily in directions
receiving large classification weights.

For interpretation, define
\begin{equation}
G_R
=
\sum_{\lambda_j>1}
w_j
\frac{\lambda_j-1}{\lambda_j},
\qquad
L_R
=
\sum_{\lambda_j<1}
w_j
\frac{1-\lambda_j}{\lambda_j}.
\label{eq:GR-LR}
\end{equation}
Then
\begin{equation}
\Delta_R
=
G_R-L_R.
\label{eq:Delta-gain-loss}
\end{equation}
Thus informative partial classification is favourable precisely when its
classification-weighted information gains exceed the corresponding losses.

Two limiting cases follow immediately. If
\(\boldsymbol{J}\succeq\boldsymbol{A}\), then all
\(\lambda_j\geq1\) and hence \(\Delta_R\geq0\). Conversely, if
\(\boldsymbol{J}\preceq\boldsymbol{A}\), then
\(\Delta_R\leq0\). Neither matrix ordering is required in the general case.

\subsection{Local departures from missing completely at random}
\label{subsec:local-MCAR}

We now examine how favourable informative missingness can emerge as the
missing-label mechanism departs continuously from missing completely at random.
Let \(U_{\boldsymbol{\theta}}(\boldsymbol{Y})\) be a smooth scalar measure of
classification uncertainty and consider
\begin{equation}
q_t(\boldsymbol{y};\boldsymbol{\theta})
=
\operatorname{expit}
\left\{
\alpha(t)
+
t\,U_{\boldsymbol{\theta}}(\boldsymbol{y})
\right\},
\qquad
t\geq0.
\label{eq:q-t}
\end{equation}
For comparisons across \(t\), the intercept \(\alpha(t)\) is calibrated at the
true parameter \(\boldsymbol{\theta}_0\) so that
\begin{equation}
E_{\boldsymbol{\theta}_0}
\left[
q_t(\boldsymbol{Y};\boldsymbol{\theta}_0)
\right]
=
\gamma,
\qquad
0<\gamma<1.
\label{eq:fixed-gamma}
\end{equation}
Thus \(t\) changes the locations at which labels are unavailable while keeping
their marginal proportion fixed. At \(t=0\), \(q_0(\boldsymbol{y})=\gamma\),
and the mechanism reduces to MCAR.

Equation~\eqref{eq:fixed-gamma} defines a sequence of population experiments;
it is not imposed as a constraint on the likelihood for arbitrary
\(\boldsymbol{\theta}\). In estimation, the missingness intercept and slope
remain ordinary nuisance parameters.

Write
\(U(\boldsymbol{Y})=U_{\boldsymbol{\theta}_0}(\boldsymbol{Y})\) and
\[
\boldsymbol{G}(\boldsymbol{Y})
=
\nabla_{\boldsymbol{\theta}}
U_{\boldsymbol{\theta}}(\boldsymbol{Y})
\big|_{\boldsymbol{\theta}=\boldsymbol{\theta}_0}.
\]
The following proposition describes the local behavior of the information
decomposition around MCAR.

\begin{proposition}
\label{prop:local-MCAR}
Assume that
\[
\operatorname{Var}\{U(\boldsymbol{Y})\}>0,\qquad
E\{U(\boldsymbol{Y})^2\}<\infty,\qquad
E\{\|\boldsymbol{G}(\boldsymbol{Y})\|^2\}<\infty,
\]
and that the required differentiations with respect to \(t\) and
\(\boldsymbol{\theta}\) may be interchanged with expectation in a neighborhood
of \((t,\boldsymbol{\theta})=(0,\boldsymbol{\theta}_0)\), with sufficient local
smoothness for the Taylor expansions below. Then
\begin{equation}
\alpha(0)
=
\operatorname{logit}(\gamma),
\qquad
\alpha'(0)
=
-
E\{U(\boldsymbol{Y})\},
\label{eq:alpha-local}
\end{equation}
and
\begin{equation}
q_t(\boldsymbol{Y})
=
\gamma
+
t\gamma(1-\gamma)
\left[
U(\boldsymbol{Y})
-
E\{U(\boldsymbol{Y})\}
\right]
+
O(t^2).
\label{eq:q-local}
\end{equation}

The label-information loss
\[
\boldsymbol{D}(t)
=
E
\left[
q_t(\boldsymbol{Y})
\boldsymbol{I}_{Z\mid\boldsymbol{Y}}(\boldsymbol{Y})
\right]
\]
satisfies
\begin{equation}
\boldsymbol{D}'(0)
=
\gamma(1-\gamma)
E
\left[
\left[
U(\boldsymbol{Y})-E\{U(\boldsymbol{Y})\}
\right]
\boldsymbol{I}_{Z\mid\boldsymbol{Y}}(\boldsymbol{Y})
\right].
\label{eq:Dprime}
\end{equation}
By contrast, the efficient information contributed by the missing-label
mechanism satisfies
\begin{equation}
\boldsymbol{I}_{M}^{\mathrm{eff}}(t)
=
t^2
\gamma(1-\gamma)
\boldsymbol{\mathcal V}_{U}
+
o(t^2),
\label{eq:IM-local}
\end{equation}
where
\begin{equation}
\boldsymbol{\mathcal V}_{U}
=
E
\left[
\boldsymbol{G}
\boldsymbol{G}^{\top}
\right]
-
E
\left[
\boldsymbol{G}
\boldsymbol{X}^{\top}
\right]
E
\left[
\boldsymbol{X}
\boldsymbol{X}^{\top}
\right]^{-1}
E
\left[
\boldsymbol{X}
\boldsymbol{G}^{\top}
\right]
\succeq
\boldsymbol{0},
\label{eq:VU}
\end{equation}
with
\[
\boldsymbol{X}
=
\begin{pmatrix}
1\\
U(\boldsymbol{Y})
\end{pmatrix}.
\]

Finally, if
\[
\boldsymbol{J}_0
=
\boldsymbol{A}
-
\gamma
E
\left[
\boldsymbol{I}_{Z\mid\boldsymbol{Y}}
(\boldsymbol{Y})
\right],
\]
then the derivative of the classification advantage at MCAR is
\begin{equation}
\Delta_R'(0)
=
-
\operatorname{tr}
\left[
\boldsymbol{H}_{R}
\boldsymbol{J}_0^{-1}
\boldsymbol{D}'(0)
\boldsymbol{J}_0^{-1}
\right].
\label{eq:Delta-prime}
\end{equation}
\end{proposition}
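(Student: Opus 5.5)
Since the sequence of experiments is indexed by $t$ alone, I will derive each claim by a direct Taylor expansion in $t$ about the MCAR point $t=0$.

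\textbf{Intercept and $q_t$.} At $t=0$ the calibration \eqref{eq:fixed-gamma} reads $\operatorname{expit}\{\alpha(0)\}=\gamma$, so $\alpha(0)=\operatorname{logit}(\gamma)$. Differentiating \eqref{eq:fixed-gamma} in $t$ under the expectation, and using $\operatorname{expit}'=q(1-q)$, gives
\[
E\bigl[q_t(1-q_t)\{\alpha'(t)+U\}\bigr]=0 .
\]
At $t=0$ the factor $q_0(1-q_0)=\gamma(1-\gamma)$ is constant, so $\alpha'(0)=-E\{U\}$. Expanding $\operatorname{expit}\{\alpha(t)+tU\}$ to first order then yields \eqref{eq:q-local}. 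The $O(t^2)$ remainder must be controlled in $L^1$ against $\boldsymbol I_{Z\mid\boldsymbol Y}$, which the interchange hypotheses provide.

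\textbf{Label-information loss.} Since $\boldsymbol I_{Z\mid\boldsymbol Y}$ does not depend on $t$, differentiating $\boldsymbol D(t)$ under the integral and inserting $\partial_t q_t|_{t=0}=\gamma(1-\gamma)(U-EU)$ gives \eqref{eq:Dprime} immediately.

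\textbf{Efficient missingness information.} I apply \eqref{eq:IM-eff} with nuisance $\boldsymbol\xi=(\alpha,t)$, treated as free parameters in the likelihood. The derivatives of $q$ are
\[
q_{\boldsymbol\theta}=q(1-q)\,t\,\boldsymbol G,\qquad
q_{\boldsymbol\xi}=q(1-q)\,\boldsymbol X .
\]
This gives
\[
\boldsymbol B_{\theta\theta}=t^2E[q(1-q)\boldsymbol G\boldsymbol G^\top],\quad
\boldsymbol B_{\theta\xi}=tE[q(1-q)\boldsymbol G\boldsymbol X^\top],\quad
\boldsymbol B_{\xi\xi}=E[q(1-q)\boldsymbol X\boldsymbol X^\top].
\]
Because $\operatorname{Var}(U)>0$, $\boldsymbol B_{\xi\xi}$ is nonsingular. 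The Schur complement is therefore exactly $t^2$ times a weighted residual second-moment matrix of $\boldsymbol G$ regressed on $\boldsymbol X$, with weight $q_t(1-q_t)$. As $t\to0$ the weight tends to the constant $\gamma(1-\gamma)$. Dominated convergence, justified by $E\|\boldsymbol G\|^2<\infty$ and $EU^2<\infty$, then gives \eqref{eq:IM-local}. Positive semidefiniteness of $\boldsymbol{\mathcal V}_U$ holds because it is the second-moment matrix of the $L^2$ residual of $\boldsymbol G$ after projection onto $\operatorname{span}(\boldsymbol X)$.

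The main subtlety is the parameterization. I must treat $t$ itself as the nuisance slope, scaling it out of $q_{\boldsymbol\theta}$ rather than absorbing it into $\boldsymbol\xi$. This is what makes the $\boldsymbol\theta$-score vanish at order $t$ and produces the $t^2$ rate. I will also check that the Schur complement is invariant to reparameterizing $\boldsymbol\xi$, for example to $(\alpha,t)$ versus $(\alpha,\tau)$ with a fixed slope scale.

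\textbf{Derivative of $\Delta_R$.} By Theorem~\ref{thm:information-decomposition},
\[
\boldsymbol J(t)=\boldsymbol A-\boldsymbol D(t)+\boldsymbol I_M^{\mathrm{eff}}(t),
\]
so $\boldsymbol J(0)=\boldsymbol J_0$ and, by the previous two steps, $\boldsymbol J'(0)=-\boldsymbol D'(0)$. Since $\boldsymbol A$ does not depend on $t$,
\[
\Delta_R'(0)=-\partial_t\operatorname{tr}\{\boldsymbol H_R\boldsymbol J^{-1}\}\big|_{t=0}.
\]
Applying $\partial_t\boldsymbol J^{-1}=-\boldsymbol J^{-1}\boldsymbol J'\boldsymbol J^{-1}$ gives $\operatorname{tr}\{\boldsymbol H_R\boldsymbol J_0^{-1}\boldsymbol D'(0)\boldsymbol J_0^{-1}\}$ inside, hence \eqref{eq:Delta-prime}. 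This requires $\boldsymbol J_0\succ\boldsymbol0$, which is assumed throughout the section.

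I expect the $\boldsymbol I_M^{\mathrm{eff}}$ step to be the main obstacle: it needs the weighted-projection limit to be uniform and it depends on the nuisance parameterization just described.
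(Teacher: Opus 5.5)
Your proposal is correct and follows essentially the same route as the paper's proof. You differentiate the calibration identity to get $\alpha'(0)=-E(U)$ and differentiate $\boldsymbol D(t)$ under the expectation. You then form the Bernoulli blocks with free nuisance $(\xi_0,\xi_1)$ evaluated at $(\alpha(t),t)$, so that $q_{\boldsymbol\theta}=t\,q_t(1-q_t)\boldsymbol G$, pass to the limit in the Schur complement by dominated convergence, and use $\boldsymbol J'(0)=-\boldsymbol D'(0)$ in the derivative of $\boldsymbol J(t)^{-1}$. The one step the paper makes explicit that you only assume is that the calibrated intercept $\alpha(t)$ exists and is differentiable; the paper obtains this from the implicit function theorem, using that $\partial_a E[\operatorname{expit}(a+tU)]$ equals $\gamma(1-\gamma)>0$ at $(a,t)=(\alpha(0),0)$.
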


The proof is given in Supplementary Section~\ref{supp:local-MCAR-proof}.

Proposition~\ref{prop:local-MCAR} reveals an asymmetry in how informative
missingness first enters the experiment. Redistributing missing labels changes
the conditional class-label information loss at order \(t\), whereas the
efficient information carried by the missingness indicators appears only at
order \(t^2\). Weak dependence between label absence and classification
uncertainty therefore need not improve classification.

In particular, if
\(\boldsymbol{D}'(0)\succeq\boldsymbol{0}\)
and
\[
\operatorname{tr}
\left[
\boldsymbol{H}_{R}
\boldsymbol{J}_0^{-1}
\boldsymbol{D}'(0)
\boldsymbol{J}_0^{-1}
\right]
>0,
\]
then
\(\Delta_R'(0)<0\). A small departure from MCAR then initially worsens
classification. This can occur, for example, when observations with
above-average uncertainty are also those for which the true class label carries
greater classification-relevant information.

The result is local: it does not imply that \(\Delta_R(t)\) is monotone, nor
that it must eventually become positive. Whether a favourable regime emerges
for larger \(t\) depends on the global information geometry of the missingness
mechanism. This behavior is examined numerically in
Section~\ref{sec:numerical}.

\section{Three-class quadratic discriminant analysis}
\label{sec:qda}

We illustrate the preceding theory using Gaussian quadratic discriminant
analysis (QDA). The purpose is not to introduce a new classification model, but
to examine the information--risk framework in a setting where the Bayes
geometry is genuinely multiclass and nonlinear. Unequal covariance matrices
produce curved pairwise decision boundaries, all distributional parameters may
affect the classifier, and different active faces may meet at multiclass
junctions.

Let \(Z\in\{1,2,3\}\), with \(\Pr(Z=k)=\pi_k\), and
\begin{equation}
\boldsymbol{Y}\mid Z=k
\sim
N_p
\left(
\boldsymbol{\mu}_k,
\boldsymbol{\Sigma}_k
\right),
\qquad
k=1,2,3,
\label{eq:qda-model}
\end{equation}
where \(\pi_k>0\), \(\sum_{k=1}^{3}\pi_k=1\), and
\(\boldsymbol{\Sigma}_k\succ\boldsymbol{0}\). The prior-weighted class
densities are
\begin{equation}
r_k(\boldsymbol{y})
=
\pi_k
\phi_p
\left(
\boldsymbol{y};
\boldsymbol{\mu}_k,
\boldsymbol{\Sigma}_k
\right),
\label{eq:qda-rk}
\end{equation}
and the Bayes rule assigns \(\boldsymbol{y}\) to the class maximizing
\(r_k(\boldsymbol{y})\).

Although the theory in Sections~\ref{sec:model}--\ref{sec:spectral} is stated
for arbitrary \(g\) and \(p\), the numerical analysis uses \(g=3\) and \(p=2\).
This is the smallest configuration that allows curved active pairwise faces and
a genuine three-class junction to occur simultaneously while remaining directly visualizable.

\subsection{Bayes geometry and risk curvature}
\label{subsec:qda-geometry}

For \(k\neq l\), define the log weighted-density contrast
\begin{equation}
d_{kl}(\boldsymbol{y})
=
\log
\frac{
r_k(\boldsymbol{y})
}{
r_l(\boldsymbol{y})
}.
\label{eq:qda-dkl}
\end{equation}
Under \eqref{eq:qda-model},
\begin{align}
d_{kl}(\boldsymbol{y})
={}&
\log\frac{\pi_k}{\pi_l}
-
\frac{1}{2}
\log
\frac{
|\boldsymbol{\Sigma}_k|
}{
|\boldsymbol{\Sigma}_l|
}
\nonumber\\
&-
\frac{1}{2}
(\boldsymbol{y}-\boldsymbol{\mu}_k)^{\top}
\boldsymbol{\Sigma}_k^{-1}
(\boldsymbol{y}-\boldsymbol{\mu}_k)
\nonumber\\
&+
\frac{1}{2}
(\boldsymbol{y}-\boldsymbol{\mu}_l)^{\top}
\boldsymbol{\Sigma}_l^{-1}
(\boldsymbol{y}-\boldsymbol{\mu}_l).
\label{eq:qda-discriminant}
\end{align}
Thus \(\{\boldsymbol{y}:d_{kl}(\boldsymbol{y})=0\}\) is a quadratic
hypersurface. For three classes, the corresponding active face is
\begin{equation}
\mathcal{F}_{kl}
=
\left\{
\boldsymbol{y}:
d_{kl}(\boldsymbol{y})=0,\;
r_k(\boldsymbol{y})=r_l(\boldsymbol{y})>r_m(\boldsymbol{y})
\right\},
\label{eq:qda-active-face}
\end{equation}
where \(m\notin\{k,l\}\).

Differentiation with respect to the feature vector gives
\begin{equation}
\nabla_{\boldsymbol{y}}
d_{kl}(\boldsymbol{y})
=
-
\boldsymbol{\Sigma}_k^{-1}
(\boldsymbol{y}-\boldsymbol{\mu}_k)
+
\boldsymbol{\Sigma}_l^{-1}
(\boldsymbol{y}-\boldsymbol{\mu}_l),
\label{eq:qda-spatial-gradient}
\end{equation}
so a regular point of \(\mathcal{F}_{kl}\) satisfies
\[
\boldsymbol{\Sigma}_k^{-1}
(\boldsymbol{y}-\boldsymbol{\mu}_k)
\neq
\boldsymbol{\Sigma}_l^{-1}
(\boldsymbol{y}-\boldsymbol{\mu}_l).
\]

At a three-class junction \(\boldsymbol{y}^{\dagger}\),
\begin{equation}
r_1(\boldsymbol{y}^{\dagger})
=
r_2(\boldsymbol{y}^{\dagger})
=
r_3(\boldsymbol{y}^{\dagger}),
\label{eq:qda-triple-tie}
\end{equation}
or equivalently
\(d_{12}(\boldsymbol{y}^{\dagger})
=d_{13}(\boldsymbol{y}^{\dagger})=0\).
For \(p=2\), transversality requires
\begin{equation}
\det
\begin{pmatrix}
\nabla_{\boldsymbol{y}}
d_{12}(\boldsymbol{y}^{\dagger})^{\top}
\\
\nabla_{\boldsymbol{y}}
d_{13}(\boldsymbol{y}^{\dagger})^{\top}
\end{pmatrix}
\neq0.
\label{eq:qda-transversality}
\end{equation}
Under this condition the junction is isolated and, by
Theorem~\ref{thm:multiclass-risk}, contributes only a higher-order term to the
local excess risk.

For the class probabilities, we use class 3 as the baseline and write
\begin{equation}
\alpha_1
=
\log\frac{\pi_1}{\pi_3},
\qquad
\alpha_2
=
\log\frac{\pi_2}{\pi_3}.
\label{eq:qda-logit-priors}
\end{equation}
A convenient theoretical parameter vector is
\[
\boldsymbol{\theta}
=
\left(
\boldsymbol{\alpha}^{\top},
\boldsymbol{\mu}_1^{\top},
\boldsymbol{\mu}_2^{\top},
\boldsymbol{\mu}_3^{\top},
\operatorname{vech}(\boldsymbol{\Sigma}_1)^{\top},
\operatorname{vech}(\boldsymbol{\Sigma}_2)^{\top},
\operatorname{vech}(\boldsymbol{\Sigma}_3)^{\top}
\right)^{\top}.
\]
For numerical optimization, the covariance matrices are represented instead
through unconstrained log-Cholesky coordinates, which guarantee positive
definiteness. Information and risk quantities are transformed consistently
between the two parameterizations; the scalar criteria in
Section~\ref{sec:spectral} are invariant under smooth nonsingular
reparameterization.

The derivatives of the pairwise discriminant with respect to the two class
means are
\begin{equation}
\nabla_{\boldsymbol{\mu}_k}
d_{kl}(\boldsymbol{y})
=
\boldsymbol{\Sigma}_k^{-1}
(\boldsymbol{y}-\boldsymbol{\mu}_k),
\qquad
\nabla_{\boldsymbol{\mu}_l}
d_{kl}(\boldsymbol{y})
=
-
\boldsymbol{\Sigma}_l^{-1}
(\boldsymbol{y}-\boldsymbol{\mu}_l).
\label{eq:qda-mean-gradients}
\end{equation}
The corresponding covariance derivatives and their transformation to
\(\operatorname{vech}\) coordinates are given in Supplementary
Section~\ref{supp:qda-calculations}, while the log-Cholesky parameterization
used for numerical optimization is described in Supplementary
Section~\ref{supp:finite-computation}.

Using the equivalent log-contrast representation given in Supplementary
Section~\ref{supp:noncompact}, the contribution of the active \(k\)-versus-\(l\) face to the
classification-risk curvature is
\begin{equation}
\boldsymbol{H}_{kl}
=
\int_{\mathcal{F}_{kl}}
\frac{
c_{kl}(\boldsymbol{s})
}{
\left\|
\nabla_{\boldsymbol{y}}
d_{kl}(\boldsymbol{s})
\right\|
}
\boldsymbol{a}_{kl}(\boldsymbol{s})
\boldsymbol{a}_{kl}(\boldsymbol{s})^{\top}
\,dS(\boldsymbol{s}),
\label{eq:qda-Hkl}
\end{equation}
where
\(c_{kl}(\boldsymbol{s})
=r_k(\boldsymbol{s})=r_l(\boldsymbol{s})\) and
\(\boldsymbol{a}_{kl}(\boldsymbol{s})
=\nabla_{\boldsymbol{\theta}}d_{kl}(\boldsymbol{s})\).
For three classes,
\begin{equation}
\boldsymbol{H}_{R}
=
\boldsymbol{H}_{12}
+
\boldsymbol{H}_{13}
+
\boldsymbol{H}_{23}.
\label{eq:qda-HR}
\end{equation}
This face-wise decomposition later allows classification gains and losses to be
attributed to individual pairwise decision boundaries.

\subsection{Information and uncertainty-dependent missingness}
\label{subsec:qda-information-missingness}

For QDA, the complete-classification Fisher information has a convenient block
structure. For
\(\boldsymbol{\alpha}=(\alpha_1,\alpha_2)^{\top}\),
\begin{equation}
\boldsymbol{I}_{\alpha}
=
\begin{pmatrix}
\pi_1(1-\pi_1) & -\pi_1\pi_2\\
-\pi_1\pi_2 & \pi_2(1-\pi_2)
\end{pmatrix},
\label{eq:qda-Ialpha}
\end{equation}
while the information for the mean and covariance parameters of class \(k\) is
\begin{equation}
\boldsymbol{I}_{\mu_k}
=
\pi_k\boldsymbol{\Sigma}_k^{-1},
\qquad
\boldsymbol{I}_{\Sigma_k}
=
\frac{\pi_k}{2}
\boldsymbol{D}_p^{\top}
\left(
\boldsymbol{\Sigma}_k^{-1}
\otimes
\boldsymbol{\Sigma}_k^{-1}
\right)
\boldsymbol{D}_p,
\label{eq:qda-complete-blocks}
\end{equation}
where \(\boldsymbol{D}_p\) is the duplication matrix. The prior score is
orthogonal to the within-class distributional scores, and the mean and
covariance scores are orthogonal in expectation within each class. Hence the
complete-classification information is block diagonal in these coordinates.
Further details are given in Supplementary Section~\ref{supp:qda-calculations}.

The general theory does not require a particular missing-label mechanism. For
the numerical analysis, we consider mechanisms driven by posterior
classification uncertainty, with
\[
\tau_k(\boldsymbol{y})
=
\frac{
r_k(\boldsymbol{y})
}{
\sum_{l=1}^{3}r_l(\boldsymbol{y})
}.
\]
Our primary uncertainty measure is normalized Shannon entropy,
\begin{equation}
U_H(\boldsymbol{y})
=
\frac{
-\sum_{k=1}^{3}
\tau_k(\boldsymbol{y})
\log\tau_k(\boldsymbol{y})
}{
\log 3
},
\qquad
0\leq U_H(\boldsymbol{y})\leq1,
\label{eq:normalized-entropy}
\end{equation}
with missing-label probability
\begin{equation}
q_H(\boldsymbol{y})
=
\operatorname{expit}
\left\{
\xi_0+\xi_1 U_H(\boldsymbol{y})
\right\}.
\label{eq:entropy-missingness}
\end{equation}
Positive \(\xi_1\) makes labels more likely to be unavailable in regions of
greater posterior uncertainty.

To assess sensitivity to the uncertainty functional, we also consider
normalized Gini uncertainty,
\begin{equation}
U_G(\boldsymbol{y})
=
\frac{3}{2}
\left\{
1-\sum_{k=1}^{3}
\tau_k(\boldsymbol{y})^2
\right\},
\qquad
0\leq U_G(\boldsymbol{y})\leq1,
\label{eq:normalized-gini}
\end{equation}
with
\begin{equation}
q_G(\boldsymbol{y})
=
\operatorname{expit}
\left\{
\zeta_0+\zeta_1 U_G(\boldsymbol{y})
\right\}.
\label{eq:gini-missingness}
\end{equation}
Because both uncertainty measures are normalized to \([0,1]\), their slope
parameters have a more comparable interpretation than under their unscaled
forms.

For a specified marginal missing-label proportion \(\gamma\), the intercept in
either mechanism is calibrated so that
\begin{equation}
E\{q(\boldsymbol{Y})\}
=
\gamma.
\label{eq:qda-fixed-gamma}
\end{equation}
Changing the slope therefore redistributes the unavailable labels across
feature space while preserving their expected proportion, corresponding
directly to the fixed-\(\gamma\) comparison in
Proposition~\ref{prop:local-MCAR}.

For a generic uncertainty functional
\(U_{\boldsymbol{\theta}}(\boldsymbol{y})\),
\begin{equation}
\nabla_{\boldsymbol{\theta}}
\tau_k(\boldsymbol{y})
=
\tau_k(\boldsymbol{y})
\left\{
\boldsymbol{s}_k(\boldsymbol{y})
-
\overline{\boldsymbol{s}}(\boldsymbol{y})
\right\},
\label{eq:qda-posterior-gradient}
\end{equation}
where \(\boldsymbol{s}_k\) and
\(\overline{\boldsymbol{s}}\) are defined in Section~\ref{sec:model}.
These derivatives enter the missingness-information blocks in
Theorem~\ref{thm:information-decomposition}; explicit entropy and Gini
derivatives are given in Supplementary Section~\ref{supp:qda-calculations}.

\subsection{Population criteria for the numerical analysis}
\label{subsec:qda-population-quantities}

For a specified QDA configuration and missing-label mechanism, the population
comparison requires the complete-classification information
\(\boldsymbol{A}=\boldsymbol{I}_{\mathrm{CC}}\), the partially classified
information
\begin{equation}
\boldsymbol{J}
=
\boldsymbol{A}
-
\boldsymbol{D}
+
\boldsymbol{I}_{M}^{\mathrm{eff}},
\qquad
\boldsymbol{D}
=
E
\left[
q(\boldsymbol{Y})
\boldsymbol{I}_{Z\mid\boldsymbol{Y}}(\boldsymbol{Y})
\right],
\label{eq:qda-J}
\end{equation}
and the classification-risk curvature
\(\boldsymbol{H}_{R}\) from
\eqref{eq:qda-Hkl}--\eqref{eq:qda-HR}. Here
\[
\boldsymbol{I}_{M}^{\mathrm{eff}}
=
\boldsymbol{B}_{\boldsymbol{\theta}\boldsymbol{\theta}}
-
\boldsymbol{B}_{\boldsymbol{\theta}\boldsymbol{\xi}}
\boldsymbol{B}_{\boldsymbol{\xi}\boldsymbol{\xi}}^{-1}
\boldsymbol{B}_{\boldsymbol{\xi}\boldsymbol{\theta}},
\]
as in Theorem~\ref{thm:information-decomposition}.

The corresponding leading excess-risk coefficients are
\begin{equation}
\mathcal{E}_{\mathrm{CC}}
=
\operatorname{tr}
\left(
\boldsymbol{H}_{R}\boldsymbol{A}^{-1}
\right),
\qquad
\mathcal{E}_{\mathrm{PC}}
=
\operatorname{tr}
\left(
\boldsymbol{H}_{R}\boldsymbol{J}^{-1}
\right),
\label{eq:qda-risk-coefficients}
\end{equation}
with
\begin{equation}
\Delta_R
=
\mathcal{E}_{\mathrm{CC}}
-
\mathcal{E}_{\mathrm{PC}},
\qquad
\operatorname{ARE}_{R}
=
\frac{
\mathcal{E}_{\mathrm{CC}}
}{
\mathcal{E}_{\mathrm{PC}}
}.
\label{eq:qda-Delta-ARE}
\end{equation}

For \(p=2\), each \(\boldsymbol{H}_{kl}\) is a one-dimensional integral along
the active portion of the corresponding quadratic boundary. These integrals
are evaluated numerically after identifying the active contour segments.
Population expectations entering \(\boldsymbol{D}\) and
\(\boldsymbol{I}_{M}^{\mathrm{eff}}\) are evaluated independently using
deterministic quadrature or high-accuracy Monte Carlo integration. Numerical
implementation and validation details are provided in Supplementary
Section~\ref{supp:finite-computation}.
\section{Numerical investigation}
\label{sec:numerical}

We use the three-class QDA model to examine the implications of the
information--risk theory developed above. The numerical analysis has three
main objectives: to verify the regular multiclass geometry required by
Theorem~\ref{thm:multiclass-risk} in a nontrivial unequal-covariance setting;
to examine the transition between unfavourable and favourable informative
missingness as the amount and location of missing labels vary; and to assess
the sensitivity of this transition to class separation, prior imbalance,
covariance heterogeneity, and the choice of uncertainty functional.

All quantities in this section are population quantities. In particular,
\(\boldsymbol A\), \(\boldsymbol D\),
\(\boldsymbol I_M^{\mathrm{eff}}\), \(\boldsymbol J\), and
\(\boldsymbol H_R\) are evaluated at the generating parameter rather than
estimated from finite samples.

\subsection{Reference configuration and baseline comparison}
\label{subsec:reference-configuration}

The reference model has class probabilities
\[
(\pi_1,\pi_2,\pi_3)=(0.35,0.35,0.30),
\]
means
\begin{equation}
\boldsymbol{\mu}_1=
\begin{pmatrix}
-1.5\\
0
\end{pmatrix},
\qquad
\boldsymbol{\mu}_2=
\begin{pmatrix}
1.5\\
0
\end{pmatrix},
\qquad
\boldsymbol{\mu}_3=
\begin{pmatrix}
0\\
2
\end{pmatrix},
\label{eq:reference-means}
\end{equation}
and covariance matrices
\begin{equation}
\boldsymbol{\Sigma}_1=
\begin{pmatrix}
1 & 0.30\\
0.30 & 0.70
\end{pmatrix},
\qquad
\boldsymbol{\Sigma}_2=
\begin{pmatrix}
0.80 & -0.20\\
-0.20 & 1.20
\end{pmatrix},
\qquad
\boldsymbol{\Sigma}_3=
\begin{pmatrix}
1.30 & 0.40\\
0.40 & 0.80
\end{pmatrix}.
\label{eq:reference-covariances}
\end{equation}
All three covariance matrices are positive definite, and each class has a
nonempty Bayes region.

The corresponding Bayes geometry contains a genuine three-class junction.
Solving \(d_{12}(\boldsymbol y)=d_{13}(\boldsymbol y)=0\) gives
\begin{equation}
\boldsymbol y^\dagger
\approx
(0.149,\;0.824)^\top.
\label{eq:reference-junction}
\end{equation}
At this point,
\(r_1(\boldsymbol y^\dagger)
=r_2(\boldsymbol y^\dagger)
=r_3(\boldsymbol y^\dagger)\).
The associated spatial gradients are
\[
\nabla_{\boldsymbol y}d_{12}(\boldsymbol y^\dagger)
\approx(-3.070,-0.117)^\top,
\qquad
\nabla_{\boldsymbol y}d_{13}(\boldsymbol y^\dagger)
\approx(-0.817,-2.345)^\top.
\]
Their absolute determinant is approximately \(7.10\), confirming that the
junction is transversal rather than tangential and hence satisfies
\eqref{eq:qda-transversality}. The reference configuration therefore provides
a genuinely multiclass setting with unequal covariance matrices, curved active
Bayes boundaries, and a regular three-class junction.

The classification-risk curvature matrix was obtained by numerical integration
over the three active Bayes faces. Under complete classification, the leading
excess-risk coefficient is
\begin{equation}
\mathcal E_{\mathrm{CC}}
=
\operatorname{tr}
\left(
\boldsymbol H_R\boldsymbol A^{-1}
\right)
=
1.5581,
\label{eq:reference-ECC}
\end{equation}
so that
\(E\{R(\widehat{\boldsymbol\theta}_{\mathrm{CC}})\}-R^\ast
\approx0.7791/n\)
to first order.

We next consider entropy-dependent missingness with marginal missing-label
proportion \(\gamma=0.30\) and slope \(t_H=4\log3\). Calibrating the intercept
so that \(E\{q_H(\boldsymbol Y)\}=0.30\) gives \(\xi_0=-2.5583\). The
corresponding population coefficients are
\[
\mathcal E_{\mathrm{CC}}=1.5581,
\qquad
\mathcal E_{\mathrm{MCAR}}=2.0541,
\qquad
\mathcal E_{\mathrm{PC}}=1.3645,
\]
which yield
\[
\operatorname{ARE}_{R,\mathrm{PC:CC}}=1.1419,
\qquad
\operatorname{ARE}_{R,\mathrm{MCAR:CC}}=0.7586,
\]
and
\[
\Delta_R
=
\mathcal E_{\mathrm{CC}}
-
\mathcal E_{\mathrm{PC}}
=
0.1936.
\]
Thus the same \(30\%\) marginal rate of unavailable labels is detrimental
under MCAR but favourable when missingness depends sufficiently strongly on
classification uncertainty and the mechanism is incorporated into the
likelihood. Numerical integration and reparameterization checks are reported
in Supplementary Section~\ref{supp:finite-computation}.

\subsection{Phase transition under uncertainty-dependent missingness}
\label{subsec:phase-diagram}

To examine how this favourable regime emerges, we vary the strength of
dependence on normalized entropy while holding the marginal missing-label
proportion fixed. For each pair \((t_H,\gamma)\), the intercept in
\[
q_H(\boldsymbol y)
=
\operatorname{expit}
\left\{
\xi_0+t_HU_H(\boldsymbol y)
\right\}
\]
is recalibrated so that \(E\{q_H(\boldsymbol Y)\}=\gamma\).

At \(t_H=0\), the mechanism is MCAR and, consistently with
Corollary~\ref{cor:MCAR-information},
\(\operatorname{ARE}_R<1\) for every positive missing-label rate considered.
As \(t_H\) increases, missing labels become increasingly concentrated in
regions of high posterior uncertainty and the relative efficiency eventually
crosses one. We define the critical slope by
\begin{equation}
t_H^\star(\gamma)
=
\inf
\left\{
t_H:
\Delta_R(\gamma,t_H)>0
\right\}.
\label{eq:critical-slope}
\end{equation}

The resulting values are reported in Table~\ref{tab:critical-entropy}.

\begin{table}[t]
\centering
\caption{Critical normalized entropy slope \(t_H^\star(\gamma)\) for the
reference QDA configuration. For \(t_H<t_H^\star\), complete classification
has the smaller leading excess risk; for \(t_H>t_H^\star\), informative
partial classification is favourable.}
\label{tab:critical-entropy}
\begin{tabular}{cc}
\toprule
Missing-label proportion \(\gamma\) & Critical slope \(t_H^\star\)\\
\midrule
0.10 & 2.90\\
0.20 & 3.17\\
0.30 & 3.52\\
0.40 & 4.03\\
0.50 & 4.87\\
0.60 & 6.63\\
\bottomrule
\end{tabular}
\end{table}

The critical slope increases monotonically over the range considered and rises
more rapidly at larger missing-label proportions. Stronger dependence between
missingness and classification uncertainty is therefore required to offset the
greater loss of class-label information as \(\gamma\) increases.

Figure~\ref{fig:phase-diagram} displays the corresponding phase boundary. The
numerically determined critical slopes satisfy
\(\Delta_R=0\), equivalently \(\operatorname{ARE}_R=1\), and separate the
unfavourable and favourable regimes over the range considered.

\begin{figure}[H]
\centering
\includegraphics[width=0.72\textwidth]{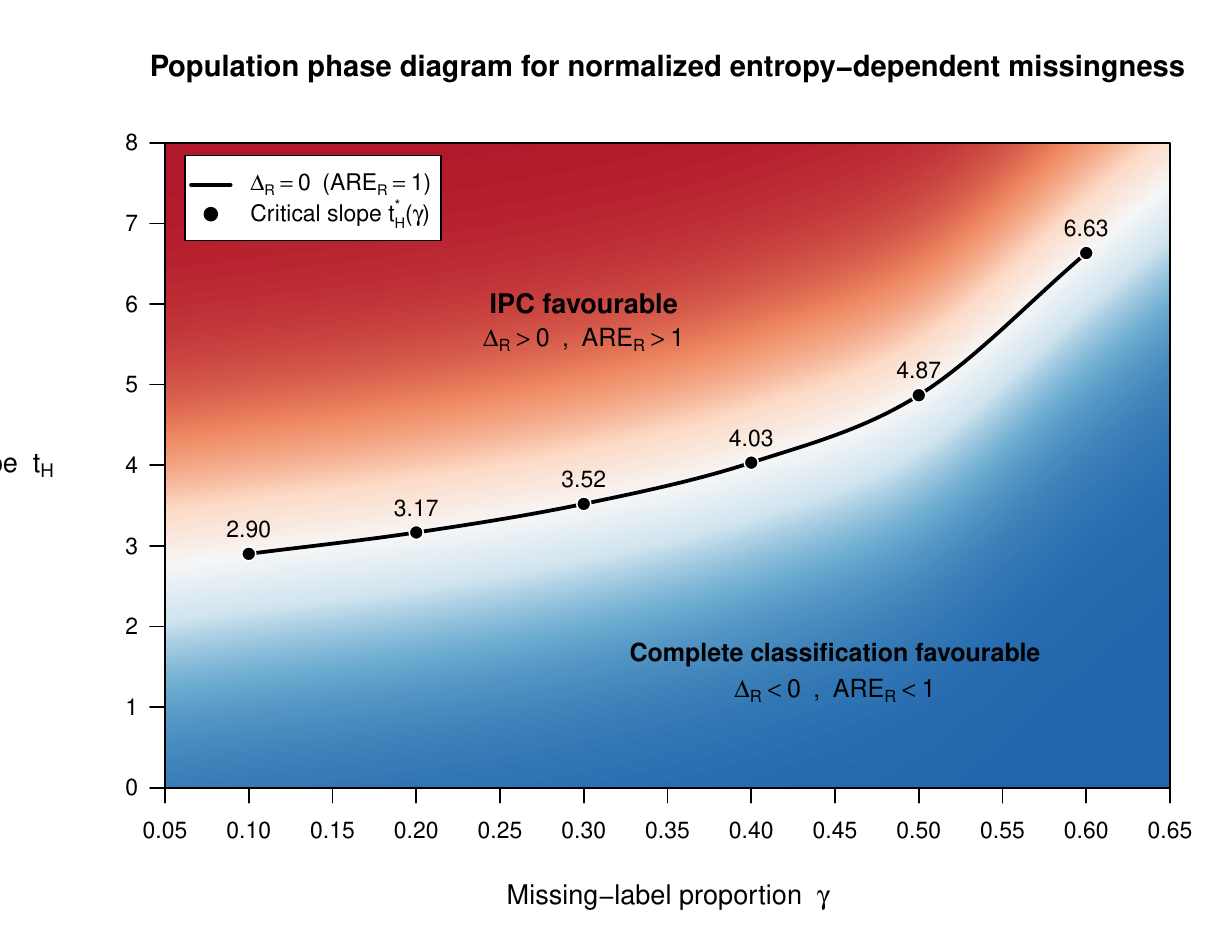}
\caption{\footnotesize Population phase boundary for the reference QDA model
under normalized entropy-dependent missingness. Points denote the numerically
determined critical slopes satisfying \(\Delta_R=0\), equivalently
\(\operatorname{ARE}_R=1\); the connecting curve displays the evolution of the
boundary over the range considered.}
\label{fig:phase-diagram}
\end{figure}

This phase structure is consistent with
Proposition~\ref{prop:local-MCAR}. Near MCAR, changing the uncertainty slope
first redistributes the loss of class-label information, whereas the efficient
information supplied by the missingness indicators enters only at second
order. The dependence must therefore become sufficiently strong before the
latter contribution can offset the former.

\subsection{Sensitivity to model geometry and uncertainty specification}
\label{subsec:sensitivity}

We next examine how the favourable regime changes when the geometry of the
classification problem is altered. Three perturbations are considered:
class separation, prior imbalance, and covariance heterogeneity. Throughout
these comparisons, unless otherwise stated, the missing-label proportion is
\(\gamma=0.30\) and the normalized entropy slope is \(t_H=4\log3\).

\emph{Class overlap.}
We first scale the reference means according to
\begin{equation}
\boldsymbol{\mu}_k(s)
=
s\boldsymbol{\mu}_k^{(0)},
\qquad s>0,
\label{eq:separation-path}
\end{equation}
while holding the covariance matrices and class probabilities fixed. Values
\(s<1\) increase overlap, whereas \(s>1\) increase separation.

\begin{table}[t]
\centering
\caption{Population classification efficiency along the class-separation path.}
\label{tab:separation-ARE}
\begin{tabular}{cccc}
\toprule
\(s\) &
\(\mathcal E_{\mathrm{CC}}\) &
\(\mathcal E_{\mathrm{PC}}\) &
\(\operatorname{ARE}_R\)\\
\midrule
0.40 & 4.121 & 4.313 & 0.955\\
0.50 & 3.083 & 3.039 & 1.014\\
0.60 & 2.523 & 2.379 & 1.061\\
0.75 & 2.042 & 1.835 & 1.113\\
0.90 & 1.722 & 1.511 & 1.140\\
1.00 & 1.558 & 1.365 & 1.142\\
1.15 & 1.352 & 1.196 & 1.130\\
1.30 & 1.162 & 1.046 & 1.111\\
1.50 & 0.926 & 0.847 & 1.093\\
1.70 & 0.698 & 0.652 & 1.071\\
2.00 & 0.406 & 0.386 & 1.051\\
\bottomrule
\end{tabular}
\end{table}

Under severe overlap (\(s=0.40\)), the missingness pattern does not compensate
for the information carried by the unavailable labels. The comparison becomes
favourable between \(s=0.40\) and \(s=0.50\), reaches its largest relative
advantage near the reference configuration, and then gradually weakens as the
classes become more separated. This nonmonotone behaviour reflects the
competing information sources. Under strong overlap, observing the true class
membership is highly informative, so removing labels is costly. At moderate
separation, posterior uncertainty remains concentrated around active decision
boundaries, allowing the missingness pattern to provide information in
classification-relevant directions. Under strong separation, both experiments
have increasingly small excess-risk coefficients, and the relative advantage
diminishes.

\emph{Prior imbalance.}
We next vary the class-3 prior according to
\begin{equation}
\pi_1(\omega)
=
\pi_2(\omega)
=
\frac{1-\omega}{2},
\qquad
\pi_3(\omega)=\omega,
\label{eq:prior-path}
\end{equation}
while retaining the reference means, covariance matrices, missing-label rate,
and uncertainty slope.

\begin{table}[t]
\centering
\caption{Population classification efficiency as the prior probability of
class 3 varies.}
\label{tab:prior-ARE}
\begin{tabular}{cccc}
\toprule
\(\pi_3\) &
\(\mathcal E_{\mathrm{CC}}\) &
\(\mathcal E_{\mathrm{PC}}\) &
\(\operatorname{ARE}_R\)\\
\midrule
0.03 & 1.634 & 1.862 & 0.877\\
0.05 & 1.633 & 1.742 & 0.937\\
0.08 & 1.620 & 1.602 & 1.011\\
0.10 & 1.615 & 1.546 & 1.044\\
0.20 & 1.578 & 1.407 & 1.121\\
0.30 & 1.558 & 1.365 & 1.142\\
0.40 & 1.542 & 1.345 & 1.147\\
0.50 & 1.529 & 1.338 & 1.143\\
0.60 & 1.513 & 1.338 & 1.131\\
0.70 & 1.487 & 1.343 & 1.107\\
0.80 & 1.435 & 1.354 & 1.060\\
0.90 & 1.322 & 1.367 & 0.968\\
\bottomrule
\end{tabular}
\end{table}

Informative partial classification is favourable over a broad range of class
probabilities but ceases to be favourable when class 3 becomes sufficiently
rare or dominant. The largest relative advantage occurs for moderately balanced
priors. The face-specific calculations clarify the reversal under strong
imbalance. At \(\pi_3=0.05\), the relative efficiencies associated with faces
\(12\), \(13\), and \(23\) are approximately \(1.040\), \(0.968\), and
\(0.856\), respectively. Thus the gain associated with the \(12\) boundary is
insufficient to compensate for losses on the two boundaries involving the rare
third class. At the reference prior \(\pi_3=0.30\), the corresponding
efficiencies are \(1.134\), \(1.139\), and \(1.151\), so all three active faces
contribute favourably. Additional face-specific results are reported in Supplementary
Section~\ref{supp:population-robustness}.

\noindent\textit{Covariance heterogeneity.}
To examine the role of nonlinear boundary geometry, define
\[
\overline{\boldsymbol{\Sigma}}
=
\sum_{k=1}^{3}
\pi_k\boldsymbol{\Sigma}_k^{(0)}
\]
and
\[
\boldsymbol{\Sigma}_k(\rho)
=
(1-\rho)\overline{\boldsymbol{\Sigma}}
+
\rho\boldsymbol{\Sigma}_k^{(0)},
\qquad
0\leq \rho\leq 1,
\]
where
\(\boldsymbol{\Sigma}_k^{(0)}\) denotes the reference covariance matrix.
Thus \(\rho=0\) gives a common-covariance model with linear Bayes boundaries,
whereas \(\rho=1\) recovers the reference QDA model. Detailed results for this
path are reported in Supplementary Table~\ref{supp:tab:covariance-heterogeneity}. The classification advantage is
stable along the path: the relative efficiency decreases only modestly from
\(1.160\) under common covariance matrices to \(1.142\) in the reference QDA
model. Thus, within this controlled family, the favourable regime persists as
the Bayes geometry changes continuously from linear to curved. The
ignored-mechanism coefficient remains substantially larger than both the
complete-classification and correctly modelled partial-classification
coefficients throughout the path.

Taken together, the three geometric perturbations show that the classification
advantage is sensitive to the underlying Bayes geometry but not uniformly so.
Within the ranges examined here, class overlap and prior imbalance can change
the sign of \(\Delta_R\), whereas covariance heterogeneity primarily changes
its magnitude. Additional face-specific and geometry-dependent phase-boundary
calculations are reported in Supplementary Section~\ref{supp:population-robustness}.

Finally, replacing normalized Shannon entropy with normalized Gini uncertainty
produces the same qualitative phase structure. For both uncertainty measures,
the critical slope increases with the marginal missing-label proportion. In
the reference configuration, normalized Gini uncertainty reaches the favourable
region at a smaller slope than normalized Shannon entropy for each value of
\(\gamma\) examined. This comparison is a robustness check rather than an
ordering of the two uncertainty measures; it shows that the phase-transition
phenomenon is not specific to entropy-based missingness. Detailed values are
reported in Supplementary Table~\ref{supp:tab:entropy-gini-critical}.

\subsection{Directional information anatomy}
\label{subsec:spectral-anatomy}

We conclude the population investigation by using the spectral representation
in Theorem~\ref{thm:spectral} to examine why the reference informative
mechanism is favourable. Recall that
\[
\Delta_R
=
\sum_j
w_j
\frac{\lambda_j-1}{\lambda_j}.
\]
The generalized eigenvalues are not uniformly larger than one. Hence the
partially classified experiment does not dominate complete classification in
the Loewner order. Instead, several directions with \(\lambda_j>1\) also carry
substantial classification weights \(w_j\), and their combined contribution
outweighs the losses in directions with \(\lambda_j<1\).

The four largest positive directional contributions are approximately
\[
0.0877,\qquad
0.0571,\qquad
0.0536,\qquad
0.0352,
\]
whereas the two largest negative contributions are approximately
\[
-0.0313
\qquad\text{and}\qquad
-0.0303.
\]
Summing over all generalized directions gives
\[
\sum_j
w_j
\frac{\lambda_j-1}{\lambda_j}
=
0.193601,
\]
which agrees to numerical precision with the direct calculation
\[
\Delta_R
=
\mathcal E_{\mathrm{CC}}
-
\mathcal E_{\mathrm{PC}}
=
0.193601.
\]

This decomposition provides the numerical counterpart of
Theorem~\ref{thm:spectral}. The favourable result does not arise from a
uniform increase in Fisher information: some parameter directions lose
information. The improvement occurs because the gains are concentrated
sufficiently strongly in directions that receive substantial
classification-risk weight.

\section{Finite-sample validation}
\label{sec:finite-sample}

We next examine whether the population information--risk comparison is
reflected in finite samples. The purpose is not to compare alternative
classification procedures broadly, but to assess three implications of the
theory: whether the empirical covariance of the estimators approaches its
information-based limit, whether the excess classification risk exhibits the
predicted \(n^{-1}\) behaviour, and whether the population criterion correctly
predicts the relative classification performance of complete and partially
classified samples. In this section, IPC denotes informative partial
classification.

\subsection{Simulation design and estimation}
\label{subsec:finite-design}

Data are generated from the reference three-class QDA configuration
of Section~\ref{subsec:reference-configuration}. For each
observation,
\[
Z_i\sim\operatorname{Multinomial}(1;\pi_1,\pi_2,\pi_3),
\qquad
\boldsymbol Y_i\mid Z_i=k
\sim
N_2(\boldsymbol\mu_k,\boldsymbol\Sigma_k).
\]
We consider \(n\in\{250,500,1000\}\), with \(B=500\) Monte Carlo replications
at each sample size.

For every generated complete sample, three observation experiments are
constructed. Under complete classification (CC), all class labels are
observed. Under MCAR, each label is independently unavailable with probability
\(\gamma=0.30\). The third experiment uses informative partial classification
with
\begin{equation}
q(\boldsymbol y)
=
\operatorname{expit}
\left\{
\xi_0+t_HU_H(\boldsymbol y)
\right\},
\label{eq:finite-IPC}
\end{equation}
where \(\gamma=0.30\), \(t_H=4\log3\), and \(U_H\) is normalized posterior
entropy. The intercept is calibrated under the generating model so that
\(E\{q(\boldsymbol Y)\}=0.30\), giving \(\xi_0=-2.5583\). Thus MCAR and IPC
have the same marginal proportion of unavailable labels but differ in their
location: under IPC, missing labels are concentrated more strongly in regions
of high posterior uncertainty.

The population coefficients for this configuration are
\[
\mathcal K_{\mathrm{CC}}=1.5581,
\qquad
\mathcal K_{\mathrm{MCAR}}=2.0541,
\qquad
\mathcal K_{\mathrm{IPC}}=1.3645,
\]
corresponding to
\(\operatorname{ARE}_{R,\mathrm{IPC:CC}}=1.1419\) and
\(\operatorname{ARE}_{R,\mathrm{MCAR:CC}}=0.7586\).
The theory therefore predicts a classification advantage for IPC and an
efficiency loss under MCAR.

The same generated complete sample is used to construct all three experiments
within each replication, reducing Monte Carlo variation in their comparison.
Under IPC, the missingness parameters
\(\boldsymbol\xi=(\xi_0,t_H)^\top\) are estimated jointly with
\(\boldsymbol\theta\); the population calibration
\(E\{q(\boldsymbol Y)\}=\gamma\) is used only to define the generating
experiment and is not imposed during estimation. Covariance matrices are
parameterized through lower log-Cholesky factors to ensure positive
definiteness. Numerical optimization uses multiple starting values and
standard convergence and admissibility checks. Further computational details
are provided in Supplementary Section~\ref{supp:finite-computation}.

\subsection{Evaluation criteria}
\label{subsec:finite-criteria}

For an estimate \(\widehat{\boldsymbol\theta}\), let
\(\widehat C(\boldsymbol y)
=\arg\max_k r_k(\boldsymbol y;\widehat{\boldsymbol\theta})\)
denote the corresponding plug-in Bayes classifier. We evaluate its excess
population risk using
\begin{equation}
\mathcal X(\widehat{\boldsymbol\theta})
=
E_{\boldsymbol\theta_0}
\left[
\max_{1\leq k\leq3}\tau_{0k}(\boldsymbol Y)
-
\tau_{0,\widehat C(\boldsymbol Y)}(\boldsymbol Y)
\right],
\label{eq:finite-excess-posterior}
\end{equation}
where \(\tau_{0k}\) is the posterior class probability under the generating
parameter. This criterion evaluates the fitted decision rule under the true
population distribution rather than through empirical error on a finite test
sample.

The expectation in \eqref{eq:finite-excess-posterior} is evaluated using a
common stratified Monte Carlo population sample of size \(600{,}000\), with
\(200{,}000\) observations generated conditionally from each class and
class-specific averages weighted by the true mixing proportions. The same
population sample is used for every fitted classifier.

For method \(m\), define the scaled excess-risk estimate
\begin{equation}
\widehat{\mathcal K}_{m,n}
=
2n
\frac{1}{B_m}
\sum_{b\in\mathcal C_m}
\mathcal X
\left(
\widehat{\boldsymbol\theta}_m^{(b)}
\right),
\label{eq:scaled-excess-risk}
\end{equation}
where \(\mathcal C_m\) is the set of usable replications and
\(B_m=|\mathcal C_m|\). The theory shows that
\begin{equation}
\widehat{\mathcal K}_{m,n}
\longrightarrow
\mathcal K_m
=
\operatorname{tr}
\left(
\boldsymbol H_R\boldsymbol V_m
\right),
\label{eq:scaled-risk-limit}
\end{equation}
where
\(\boldsymbol V_{\mathrm{CC}}=\boldsymbol A^{-1}\),
\(\boldsymbol V_{\mathrm{IPC}}=\boldsymbol J^{-1}\), and
\(\boldsymbol V_{\mathrm{MCAR}}\) is obtained from the corresponding
partially classified information matrix.

As an independent covariance check, let
\[
\widehat{\boldsymbol V}_{m,n}
=
n\,\widehat{\operatorname{Cov}}
(\widehat{\boldsymbol\theta}_m).
\]
We assess the classification-relevant discrepancy through
\begin{equation}
D_{R,m}(n)
=
\frac{
\left|
\operatorname{tr}
\left(
\boldsymbol H_R\widehat{\boldsymbol V}_{m,n}
\right)
-
\operatorname{tr}
\left(
\boldsymbol H_R\boldsymbol V_m
\right)
\right|
}{
\operatorname{tr}
\left(
\boldsymbol H_R\boldsymbol V_m
\right)
}.
\label{eq:risk-weighted-discrepancy}
\end{equation}

\subsection{Finite-sample results}
\label{subsec:finite-results}

Table~\ref{tab:finite-risk} reports the scaled excess classification risks.
The theoretical ordering is
\begin{equation}
\mathcal K_{\mathrm{IPC}}
<
\mathcal K_{\mathrm{CC}}
<
\mathcal K_{\mathrm{MCAR}},
\label{eq:predicted-risk-order}
\end{equation}
and this ordering is reproduced at every sample size considered.

\begin{table}[t]
\centering
\caption{Finite-sample scaled excess classification risk. Values are Monte Carlo averages over usable fits of
\(2n\{R(\widehat{\boldsymbol{\theta}})-R^\ast\}\), with Monte
Carlo standard errors in parentheses. The final column gives the corresponding
asymptotic coefficient
\(\mathcal K_m=\operatorname{tr}(\boldsymbol H_R\boldsymbol V_m)\).}
\label{tab:finite-risk}
\begin{tabular}{llccc}
\toprule
\(n\) & Method & Usable fits &
\(2n\{R(\widehat{\boldsymbol\theta})-R^\ast\}\) &
\(\mathcal K_m\)\\
\midrule
250
& CC   & 500 & 1.5756 (0.0466) & 1.5581\\
& MCAR & 500 & 2.2367 (0.0697) & 2.0541\\
& IPC  & 463 & 1.4971 (0.0497) & 1.3645\\[2pt]

500
& CC   & 500 & 1.5008 (0.0416) & 1.5581\\
& MCAR & 500 & 2.0217 (0.0583) & 2.0541\\
& IPC  & 485 & 1.4521 (0.0416) & 1.3645\\[2pt]

1000
& CC   & 500 & 1.5247 (0.0424) & 1.5581\\
& MCAR & 499 & 2.0294 (0.0559) & 2.0541\\
& IPC  & 491 & 1.3682 (0.0398) & 1.3645\\
\bottomrule
\end{tabular}
\end{table}

The finite-sample results approach the population predictions as the sample
size increases. At \(n=1000\), the scaled risks are \(1.5247\), \(2.0294\),
and \(1.3682\) for CC, MCAR, and IPC, respectively, compared with theoretical
limits \(1.5581\), \(2.0541\), and \(1.3645\). In particular, the IPC result
differs from its first-order limit by only about \(0.004\).

The corresponding finite-sample IPC-to-CC relative efficiencies are
approximately \(1.052\), \(1.034\), and \(1.114\) for
\(n=250,500,\) and \(1000\), respectively, compared with the population value
\(1.142\). Thus the favourable population comparison is already clearly
visible at the largest sample size.

\begin{figure}[H]
\centering
\includegraphics[width=0.78\textwidth]{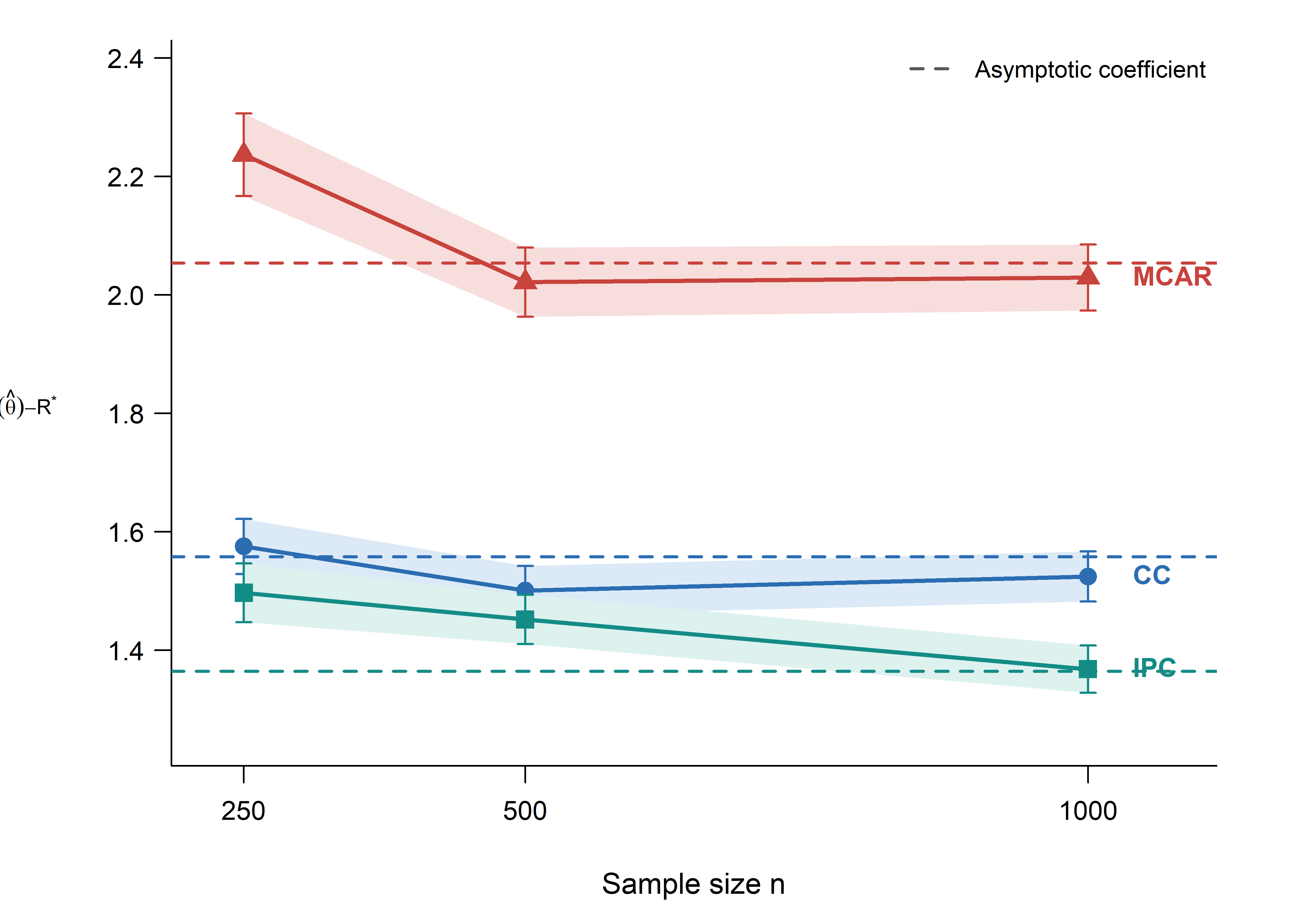}
\caption{\footnotesize Finite-sample scaled excess classification risk for
complete classification (CC), missing completely at random (MCAR), and
informative partial classification (IPC). Points show Monte Carlo averages of
\(2n\{R(\widehat{\boldsymbol\theta})-R^\ast\}\), with error bars representing
one Monte Carlo standard error across usable fits. Dashed horizontal lines show
the corresponding asymptotic coefficients
\(\operatorname{tr}(\boldsymbol H_R\boldsymbol V_m)\).}
\label{fig:finite-risk}
\end{figure}

Numerical convergence also improves with sample size. All CC fits were usable,
while the IPC convergence rates were \(92.6\%\), \(97.0\%\), and \(98.2\%\)
for \(n=250,500,\) and \(1000\), respectively.

The covariance calculation provides an independent validation of the
information approximation; detailed covariance diagnostics are reported in
Supplementary Table~\ref{supp:tab:covariance}. At \(n=1000\), the classification-weighted covariance
discrepancies are approximately \(1.4\%\), \(0.5\%\), and \(0.7\%\) for CC,
MCAR, and IPC, respectively. A further check based directly on the quadratic
risk approximation gives \(1.5341\), \(2.0407\), and \(1.3735\),
respectively. These values are close both to the directly evaluated scaled
risks \(1.5247\), \(2.0294\), and \(1.3682\), and to their theoretical
limits. The agreement among direct population risk, the local quadratic
approximation, and the information-based covariance calculation provides a
numerical validation of the mechanism underlying \eqref{eq:scaled-risk-limit}.

The paired comparisons require a different interpretation. At \(n=1000\),
IPC has smaller excess risk than CC in approximately \(55.6\%\) of paired
usable replications, while the average paired difference favours IPC. This is
not inconsistent with the theory, which concerns expected excess
classification risk rather than the probability that IPC outperforms CC in
each individual sample.

Overall, the finite-sample experiment reproduces the central population
prediction. Removing \(30\%\) of labels under MCAR increases classification
risk, whereas under the informative mechanism the missingness indicators
provide additional information about classifier-relevant parameters. In the
configuration considered here, this contribution is sufficiently aligned with
the active Bayes boundary that IPC has smaller expected excess classification
risk than complete classification, both asymptotically and in the finite
samples examined.

\section{Application to the Vertebral Column data}
\label{sec:realdata}

To examine how informative label missingness and explicit modelling of
its mechanism behave in a real multiclass classification geometry, we
considered the three-class Vertebral Column data set from the UCI
Machine Learning Repository \citep{vertebralUCI}. The data contain 310
observations classified as disk hernia (DH), spondylolisthesis (SL), or
normal (NO), with six continuous biomechanical measurements. Because
the original data are fully labelled, the observed feature vectors and
class labels were retained, while label availability was generated
according to the uncertainty-dependent mechanism considered in the
preceding sections. The analysis is therefore semi-synthetic: the
classification problem is observed, whereas the missing-label process
is imposed.

The QDA implementation in Section~\ref{sec:qda} is two-dimensional.
Variable selection was therefore carried out within each outer training
sample to avoid selecting the most favourable pair using observations
on which predictive performance was subsequently evaluated.
Specifically, for every outer cross-validation fold, all 15 pairs formed
from the six biomechanical variables were compared by an inner
five-fold stratified cross-validation, and the pair with the largest
complete-classification accuracy was retained. The purpose of this step
was to define a common two-dimensional classification representation
before generating the semi-synthetic missing-label indicators. Feature
standardization, variable selection, generation of the missing-label
indicators, and model fitting were all performed using the outer
training sample only. Across the 50 outer folds of the main experiment,
sacral slope together with degree of spondylolisthesis was selected in
72\% of the folds, pelvic radius together with degree of
spondylolisthesis in 18\%, and lumbar lordosis angle together with degree
of spondylolisthesis in the remaining 10\%.

For a feature vector $y$, let
\[
U_H(y;\theta)
=
-\frac{1}{\log 3}
\sum_{k=1}^{3}
\tau_k(y;\theta)\log\{\tau_k(y;\theta)\}
\]
denote normalized classification entropy. Within each outer training
sample, a complete-classification QDA fit with estimate
$\widehat{\theta}_{\mathrm{CC}}$ was first obtained and used only to
construct the semi-synthetic missing-label mechanism. Writing
\[
U_H^{\mathrm{ref}}(y)
=
U_H(y;\widehat{\theta}_{\mathrm{CC}}),
\]
labels were made unavailable according to
\begin{equation}
\operatorname{logit}\{q_{\mathrm{gen}}(y)\}
=
\xi_0+\xi_1 U_H^{\mathrm{ref}}(y),
\label{eq:real-missing}
\end{equation}
where $\xi_0$ was calibrated within the outer training sample so that
the average missing-label probability equalled a prespecified value
$\gamma$. The main setting used $\gamma=0.30$ and
$\xi_1=4\log 3$, matching the reference mechanism used in the
population and finite-sample investigations. The
complete-classification estimate was used only to generate the
missing-label indicators. In the IPC fit, the classification and
missingness parameters were subsequently estimated jointly.

Four procedures were compared. CC used all training labels. MCAR
removed labels independently of the features at the same nominal
missing-label proportion and treated the missingness mechanism as
ignorable. IG used the informatively incomplete training sample but
ignored the missingness mechanism. IPC used exactly the same
informatively incomplete training sample as IG while jointly modelling
the classification model and the uncertainty-dependent missing-label
mechanism. Predictive performance was evaluated on the untouched outer
test folds. The main analysis used ten repetitions of five-fold
stratified cross-validation. All four procedures converged in all 50
outer fits.

Table~\ref{tab:vertebral-main} summarizes the resulting predictive
performance. Complete classification had the smallest mean
misclassification rate, $0.2119$, followed by MCAR, IPC, and IG. IPC
therefore did not improve the $0$--$1$ error over CC in the reference
setting. The comparison between IG and IPC is more directly informative
about the value of modelling the missing-label mechanism because the
two procedures use the same informatively incomplete training samples.
Relative to IG, IPC reduced the mean log loss from $0.5185$ to $0.5015$
and the mean Brier score from $0.2899$ to $0.2822$, whereas the
reduction in mean misclassification error was much smaller, from
$0.2248$ to $0.2235$.

At the repeat level, the mean paired differences
$R_{\mathrm{IG}}-R_{\mathrm{IPC}}$ were $0.0013$ for
misclassification error, $0.0171$ for log loss, and $0.0077$ for Brier
score. Thus, under the reference mechanism, explicit modelling of the
informative missing-label process had a clearer effect on probabilistic
prediction than on the resulting $0$--$1$ decision rule.

\begin{table}[t]
\centering
\caption{Repeated nested cross-validation results for the Vertebral
Column data under the reference missingness mechanism
$\gamma=0.30$ and $\xi_1=4\log 3$. Values are averages over ten
repeated five-fold outer cross-validations.}
\label{tab:vertebral-main}
\begin{tabular}{lccc}
\toprule
Method & Error rate & Log loss & Brier score \\
\midrule
CC   & 0.2119 & 0.4803 & 0.2749 \\
MCAR & 0.2152 & 0.5064 & 0.2813 \\
IG   & 0.2248 & 0.5185 & 0.2899 \\
IPC  & 0.2235 & 0.5015 & 0.2822 \\
\bottomrule
\end{tabular}
\end{table}

The generated missing-label pattern was clearly uncertainty-selective.
Under the reference mechanism, the empirical missing-label proportion
was $0.300$ on average. The mean normalized entropy among observations
whose labels were unavailable was $0.641$, compared with $0.249$ among
labelled observations, corresponding to an average fold-level
difference of approximately $0.392$. Thus the mechanism systematically
concentrated missing labels in regions of substantially greater
classification uncertainty. The average fitted slope of the
missingness model was $4.86$, compared with the generating value
$4\log 3\simeq4.39$.

To examine the role of informativeness more directly, we repeated the
analysis over the prespecified grid
\[
\gamma\in\{0.20,0.30,0.40\},
\qquad
\xi_1\in\{2\log 3,4\log 3,6\log 3\}.
\]
Five repeated outer cross-validations were used for each configuration,
with the same nested variable-selection procedure. Table~
\ref{tab:vertebral-grid} reports the CC, IG, and IPC error rates,
together with the paired repeat-level difference
\[
\Delta_{\mathrm{IG,IPC}}
=
R_{\mathrm{IG}}-R_{\mathrm{IPC}},
\]
so that positive values favour explicit modelling of the informative
missing-label mechanism.

\begin{table}[t]
\centering
\caption{\footnotesize Sensitivity to the missing-label proportion $\gamma$ and
informativeness strength $\xi_1$. Positive values of
$\Delta_{\mathrm{IG,IPC}}$ favour IPC.}
\label{tab:vertebral-grid}

\begin{tabular}{cccccc}
\toprule
$\gamma$ & $\xi_1/\log 3$ & CC & IG & IPC
& $\Delta_{\mathrm{IG,IPC}}$ \\
\midrule
0.20 & 2 & 0.2110 & 0.2129 & 0.2071 & 0.0058 \\
0.30 & 2 & 0.2110 & 0.2226 & 0.2142 & 0.0084 \\
0.40 & 2 & 0.2110 & 0.2290 & 0.2284 & 0.0006 \\
\addlinespace
0.20 & 4 & 0.2110 & 0.2200 & 0.2077 & 0.0123 \\
0.30 & 4 & 0.2110 & 0.2303 & 0.2258 & 0.0045 \\
0.40 & 4 & 0.2110 & 0.2361 & 0.2194 & 0.0168 \\
\addlinespace
0.20 & 6 & 0.2110 & 0.2213 & 0.2019 & 0.0194 \\
0.30 & 6 & 0.2110 & 0.2406 & 0.2323 & 0.0185$^{\dagger}$ \\
0.40 & 6 & 0.2110 & 0.2445 & 0.2252 & 0.0194 \\
\bottomrule
\end{tabular}

\vspace{2pt}

\parbox{0.92\textwidth}{%
\footnotesize $^{\dagger}$ For $\gamma=0.30$ and $\xi_1=6\log 3$, one IPC fold did
not satisfy the convergence criterion. The displayed IG and IPC values are
method-specific averages over complete repetitions, whereas
$\Delta_{\mathrm{IG,IPC}}$ is the paired mean difference over the four
repetitions complete for both methods. Consequently, for this setting,
$\Delta_{\mathrm{IG,IPC}}$ need not equal the difference between the two
displayed marginal means.
}
\end{table}

Across the sensitivity grid, the benefit of modelling the informative
mechanism became more consistent as the dependence of missingness on
classification uncertainty strengthened. At $\xi_1=2\log 3$, the
paired IPC improvement over IG was small for all three missing-label
proportions. At $\xi_1=4\log 3$, IPC again improved on IG throughout the
grid, although the magnitude varied with $\gamma$. Under the strongest
mechanism, $\xi_1=6\log 3$, the paired reductions in error were
approximately $0.0194$, $0.0185$, and $0.0194$ for
$\gamma=0.20$, $0.30$, and $0.40$, respectively. In the corresponding
repeat-level comparisons, IPC had lower error than IG in all paired
repetitions for $\gamma=0.20$ and $\gamma=0.30$, and in four of the
five repetitions for $\gamma=0.40$.

Numerical convergence was stable across the sensitivity analysis. All
methods converged in every fit for eight of the nine configurations.
The only exception was IPC at $\gamma=0.30$ and
$\xi_1=6\log 3$, for which 24 of the 25 outer fits were usable.

This progression is displayed in Figure~\ref{fig:vertebral-grid}.
The vertical bars summarize repeat-level variation and are intended as
descriptive stability intervals rather than independent-sample
confidence intervals.

\begin{figure}[H]
\centering
\includegraphics[width=0.72\textwidth]{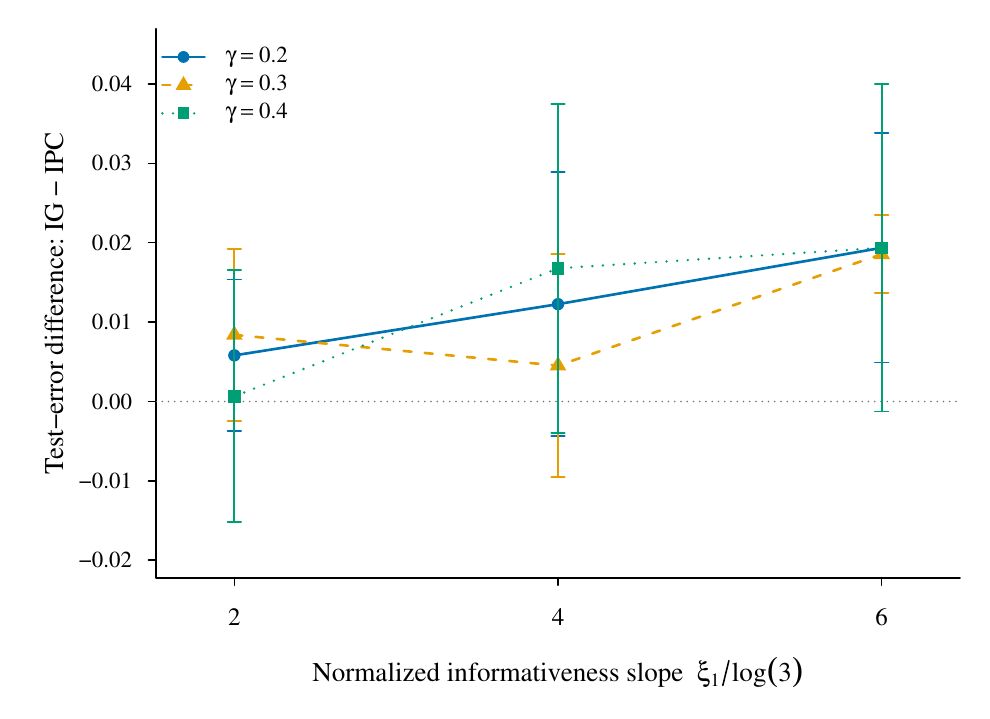}
\caption{\footnotesize Paired test-error difference between ignoring (IG) and modelling
(IPC) the informative missing-label mechanism in the Vertebral Column
application, plotted against the normalized informativeness slope
$\xi_1/\log 3$. Positive values favour IPC. Points are mean paired
differences across complete cross-validation repetitions; vertical bars
are descriptive 95\% $t$-intervals across the repeat-level paired
differences and are not interpreted as independent-sample confidence
intervals.}
\label{fig:vertebral-grid}
\end{figure}

The comparison with complete classification remained
regime-dependent. IPC did not uniformly outperform CC across the
configurations considered. For example, at
$\gamma=0.20$ and $\xi_1=6\log 3$, the mean IPC error was $0.2019$,
compared with $0.2110$ for CC, whereas CC had the smaller mean error at
$\gamma=0.30$ and $\gamma=0.40$ under the same uncertainty slope.
These finite-sample comparisons are not interpreted as evidence of
general dominance of partial over complete classification.

Rather, the semi-synthetic application is consistent with the
regime-dependent behavior emphasized by the theoretical development.
Weak uncertainty dependence provides only limited benefit from
modelling the missing-label process, whereas stronger dependence can
make the observed pattern of label availability increasingly useful.
At the same time, informative missingness alone does not guarantee that
partial classification will outperform complete classification. The population and Monte Carlo investigations in
Sections~\ref{sec:numerical} and~\ref{sec:finite-sample}
provide the direct evaluation of the classification-weighted information
criterion and its asymptotic risk implications; the present application
provides a complementary finite-sample illustration on an observed multiclass
classification geometry.
\section{Discussion}
\label{sec:discussion}

This paper has examined informative label missingness from the perspective of
classification risk rather than parameter estimation alone. The central issue
is not simply whether a partially classified experiment contains more or less
Fisher information than complete classification, but whether the information
that is lost and gained lies in parameter directions that affect the Bayes
decision boundary.

The efficient-information decomposition separates two effects of informative
label missingness: the loss of conditional class-label information when labels
are unavailable and the efficient information carried by the missing-label
indicators themselves. This distinction also makes clear that the cost of
missing labels depends on where they occur in feature space, rather than only
on their marginal frequency. Under missing completely at random, the
missingness indicators carry no information about the classification
parameters, and only the loss of class-label information remains.

The excess-risk analysis provides the corresponding decision-theoretic
interpretation. To second order, classification risk is governed by
perturbations of the active pairwise Bayes faces. Pairwise equality surfaces
that are not active decision boundaries do not contribute, and, under the
regularity and transversality conditions considered here, generic triple and
higher-order junctions do not contribute to the leading quadratic term. The
resulting matrix \(\boldsymbol H_R\) therefore defines a
classification-specific weighting of parameter uncertainty: estimation error
matters according to the extent to which it perturbs the active Bayes
geometry.

Combining these two results shows why a comparison based solely on the Loewner
ordering of Fisher information matrices can be too strong for classification.
The partially classified experiment may lose information in some directions
and gain it in others while nevertheless achieving smaller leading excess
classification risk. What matters is whether the gains occur in directions
that receive sufficiently large classification weights. The generalized
eigenvalue representation makes this alignment between information and
decision-boundary geometry explicit. Thus favourable informative missingness
is a directional phenomenon rather than a consequence of global information
dominance.

The local analysis around missing completely at random further qualifies this
conclusion. With the marginal missing-label proportion held fixed, a small
departure from MCAR redistributes the loss of class-label information at first
order, whereas the efficient information supplied by the missingness pattern
appears only at second order. Informativeness is therefore not automatically
beneficial: weak dependence between missingness and classification uncertainty
may initially worsen classification, and a favourable regime, when it exists,
may arise only after that dependence becomes sufficiently strong.

The three-class QDA calculations illustrate these mechanisms in a setting with
curved decision boundaries and a genuine multiclass junction. The numerical
results also show that favourable missingness is not universal. Its magnitude
is reduced under severe class overlap, can reverse under sufficiently strong
class imbalance, and is comparatively stable along the covariance-heterogeneity
path examined here. Entropy- and Gini-based missingness mechanisms yield the
same qualitative phase structure, while the face-specific and spectral
decompositions show that an overall classification advantage can coexist with
losses on particular decision faces or information directions.

The finite-sample experiment provides a separate check of the asymptotic predictions.
Under the reference configuration, the ordering implied by the population risk coefficients
is reproduced over the sample sizes considered. Moreover, the directly evaluated excess
risks, their quadratic approximations, and the risk-weighted empirical covariance calculations
approach the corresponding theoretical quantities as the sample size increases. This
agreement supports the information--geometry explanation of the observed classification
advantage, rather than only the sign of a particular numerical comparison.

The real-data application provides a complementary finite-sample illustration.
The observed feature vectors and class labels are retained from the Vertebral Column data,
while label availability is generated according to the uncertainty-dependent mechanism.
Informative label removal was concentrated among observations with substantially greater
classification uncertainty, and explicitly modelling the missingness mechanism generally
improved prediction relative to treating the same informative pattern as ignorable.
The improvement in misclassification error was modest under weak or moderate uncertainty
dependence but became appreciable under the strongest mechanism considered. At the same
time, informative partial classification did not uniformly outperform complete
classification. This behavior is consistent with the theoretical distinction developed
above: informativeness alone is not sufficient for favourable partial classification,
because the information carried by the missingness indicators must also compensate for
the lost class-label information in classifier-relevant directions.

There is no paradox in the possibility that informative partial classification outperforms
complete classification under the experiments compared here. The
two experiments do not differ only in the number of observed labels. Under
informative partial classification, the missing-label indicator is itself an
observed random variable whose distribution may depend on the classification
parameters. It can therefore contribute information that is absent from the
complete-classification experiment, in which the corresponding missingness
process is not observed.

The scope of the results is nevertheless limited by the assumptions underlying
the analysis. The theory is parametric and relies on regular likelihood
asymptotics and sufficiently smooth, transversal Bayes-boundary geometry.
Nonregular boundaries, singular models, and high-dimensional regimes in which
the parameter dimension grows with the sample size require different
arguments. The conditional-independence assumption
\(M\perp Z\mid\boldsymbol Y\) also excludes mechanisms in which label
availability depends directly on the latent class after conditioning on the
features; such settings raise additional identification issues. Furthermore,
any information gain from the missingness indicators depends on an adequate
model for \(q(\boldsymbol y;\boldsymbol\theta,\boldsymbol\xi)\).
Misspecification may remove that gain and can introduce bias, making
sensitivity analysis and robust or semiparametric formulations important
extensions.

The present analysis treats the label-missingness mechanism as given. An
important extension is to reverse the problem and choose a labelling or
abstention mechanism to minimize classification risk subject to a labelling
budget. The classification-weighted information criterion developed here
provides a natural starting point for such a design problem. Further extensions
include high-dimensional classifiers, covariate-dependent class probabilities,
alternative classification losses, semiparametric missingness mechanisms, and
settings involving multiple forms of incomplete information.

\section{Conclusion}
\label{sec:conclusion}

The main conclusion is that the statistical value of a partially labelled
sample cannot be determined by the proportion of observed labels or by an
unweighted comparison of Fisher information matrices alone. When label
missingness is informative, classification performance depends on how the
resulting information gains and losses align with the parameter directions
that perturb the active Bayes decision boundary.

The theory developed here formalizes this principle by linking efficient
information under informative partial classification to the local geometry of
multiclass excess risk. The numerical investigations and the application to
the Vertebral Column data further illustrate that the resulting classification
advantage is regime-dependent rather than universal. It explains how partial
classification can be unfavourable near MCAR yet become favourable as the
missingness mechanism becomes sufficiently informative, and why such an
improvement can occur without global information dominance over complete
classification. The
relevant comparison is therefore not simply how much information is available,
but where that information lies relative to the geometry of the classification
problem.





\clearpage

\setcounter{section}{0}
\setcounter{subsection}{0}
\setcounter{equation}{0}
\setcounter{table}{0}
\setcounter{figure}{0}

\renewcommand{\thesection}{S\arabic{section}}
\renewcommand{\thesubsection}{\thesection.\arabic{subsection}}
\renewcommand{\theequation}{\thesection.\arabic{equation}}
\renewcommand{\thetable}{S\arabic{table}}
\renewcommand{\thefigure}{S\arabic{figure}}
\section{Proof of the information decomposition}
\label{supp:proof-information}

This section proves Theorem~\ref{thm:information-decomposition}
and Corollary~\ref{cor:MCAR-information}.

\medskip
\noindent\textit{Proof of Theorem~\ref{thm:information-decomposition}.}
For one observation, write
\(\mathcal{O}=(\boldsymbol{Y},M,(1-M)Z)\). Under
\(M\perp Z\mid\boldsymbol{Y}\), the observed-data log-likelihood is
\begin{align}
\ell_{\mathrm{obs}}
(\boldsymbol{\theta},\boldsymbol{\xi})
={}&
\log p_{\boldsymbol{\theta}}(\boldsymbol{Y})
+
(1-M)
\log
\tau_Z(\boldsymbol{Y};\boldsymbol{\theta})
\nonumber\\
&+
M
\log
q(\boldsymbol{Y};\boldsymbol{\theta},\boldsymbol{\xi})
+
(1-M)
\log
\left\{
1-q(\boldsymbol{Y};\boldsymbol{\theta},\boldsymbol{\xi})
\right\}.
\label{supp:eq:obs-loglik}
\end{align}
Let
\[
\boldsymbol{S}_Y
=
\nabla_{\boldsymbol{\theta}}
\log p_{\boldsymbol{\theta}}(\boldsymbol{Y}),
\qquad
\boldsymbol{S}_{Z\mid\boldsymbol{Y}}
=
\nabla_{\boldsymbol{\theta}}
\log\tau_Z(\boldsymbol{Y};\boldsymbol{\theta}),
\]
and, for notational simplicity, write
\(q=q(\boldsymbol{Y};\boldsymbol{\theta},\boldsymbol{\xi})\).
The score contributions arising from the Bernoulli missing-label mechanism are
\begin{equation}
\boldsymbol{S}_{M,\boldsymbol{\theta}}
=
\frac{M-q}{q(1-q)}
q_{\boldsymbol{\theta}},
\qquad
\boldsymbol{S}_{M,\boldsymbol{\xi}}
=
\frac{M-q}{q(1-q)}
q_{\boldsymbol{\xi}},
\label{supp:eq:missing-scores}
\end{equation}
where
\(q_{\boldsymbol{\theta}}=\partial q/\partial\boldsymbol{\theta}\) and
\(q_{\boldsymbol{\xi}}=\partial q/\partial\boldsymbol{\xi}\). Hence
\begin{equation}
\boldsymbol{S}_{\boldsymbol{\theta}}
=
\boldsymbol{S}_Y
+
(1-M)\boldsymbol{S}_{Z\mid\boldsymbol{Y}}
+
\boldsymbol{S}_{M,\boldsymbol{\theta}},
\qquad
\boldsymbol{S}_{\boldsymbol{\xi}}
=
\boldsymbol{S}_{M,\boldsymbol{\xi}}.
\label{supp:eq:joint-scores}
\end{equation}

The conditional class score has mean zero:
\begin{align}
E
\left[
\boldsymbol{S}_{Z\mid\boldsymbol{Y}}
\mid
\boldsymbol{Y}=\boldsymbol{y}
\right]
&=
\sum_{k=1}^{g}
\tau_k(\boldsymbol{y};\boldsymbol{\theta})
\nabla_{\boldsymbol{\theta}}
\log
\tau_k(\boldsymbol{y};\boldsymbol{\theta})
\nonumber\\
&=
\sum_{k=1}^{g}
\nabla_{\boldsymbol{\theta}}
\tau_k(\boldsymbol{y};\boldsymbol{\theta})
=
\boldsymbol{0}.
\label{supp:eq:conditional-score-zero}
\end{align}
Also, since
\(M\mid\boldsymbol{Y}\sim\operatorname{Bernoulli}(q)\),
\[
E(M-q\mid\boldsymbol{Y})=0,
\]
and therefore, by \eqref{supp:eq:missing-scores},
\begin{equation}
E
\left[
\boldsymbol{S}_{M,\boldsymbol{\theta}}
\mid\boldsymbol{Y}
\right]
=
\boldsymbol{0},
\qquad
E
\left[
\boldsymbol{S}_{M,\boldsymbol{\xi}}
\mid\boldsymbol{Y}
\right]
=
\boldsymbol{0}.
\label{supp:eq:missing-score-zero}
\end{equation}

These conditional-mean identities imply the orthogonality needed below.
First,
\[
E
\left[
\boldsymbol{S}_Y
\left\{
(1-M)\boldsymbol{S}_{Z\mid\boldsymbol{Y}}
\right\}^{\top}
\right]
=
\boldsymbol{0},
\]
because, conditional on \(\boldsymbol{Y}\),
\(M\) and \(Z\) are independent and
\(E(\boldsymbol{S}_{Z\mid\boldsymbol{Y}}\mid\boldsymbol{Y})
=\boldsymbol{0}\).
Similarly,
\[
E
\left[
\boldsymbol{S}_Y
\boldsymbol{S}_{M,\boldsymbol{\theta}}^{\top}
\right]
=
E
\left[
\boldsymbol{S}_Y
\boldsymbol{S}_{M,\boldsymbol{\xi}}^{\top}
\right]
=
\boldsymbol{0}.
\]
Finally, conditioning on \(\boldsymbol{Y}\) and using
\(M\perp Z\mid\boldsymbol{Y}\),
\begin{align*}
&E
\left[
(1-M)
\boldsymbol{S}_{Z\mid\boldsymbol{Y}}
\boldsymbol{S}_{M,\boldsymbol{\theta}}^{\top}
\mid
\boldsymbol{Y}
\right]
\\
&\qquad=
E
\left[
(1-M)
\boldsymbol{S}_{M,\boldsymbol{\theta}}^{\top}
\mid
\boldsymbol{Y}
\right]
E
\left[
\boldsymbol{S}_{Z\mid\boldsymbol{Y}}
\mid
\boldsymbol{Y}
\right]
=
\boldsymbol{0},
\end{align*}
with the same conclusion for
\(\boldsymbol{S}_{M,\boldsymbol{\xi}}\).
Thus the marginal feature score, the observed-label score, and the
missingness score contribute orthogonally to Fisher information.

If all labels were observed, the score for
\(\boldsymbol{\theta}\) would be
\(\boldsymbol{S}_{\mathrm{CC}}
=\boldsymbol{S}_Y+\boldsymbol{S}_{Z\mid\boldsymbol{Y}}\).
Using \eqref{supp:eq:conditional-score-zero},
\begin{equation}
\boldsymbol{I}_{\mathrm{CC}}
=
\boldsymbol{I}_Y
+
E
\left[
\boldsymbol{I}_{Z\mid\boldsymbol{Y}}
(\boldsymbol{\theta};\boldsymbol{Y})
\right],
\qquad
\boldsymbol{I}_Y
=
E
\left[
\boldsymbol{S}_Y
\boldsymbol{S}_Y^{\top}
\right].
\label{supp:eq:ICC-decomp}
\end{equation}

Under partial classification, the observed-label score is
\((1-M)\boldsymbol{S}_{Z\mid\boldsymbol{Y}}\). Its conditional
Fisher-information contribution is therefore
\[
E
\left[
(1-M)^2
\boldsymbol{S}_{Z\mid\boldsymbol{Y}}
\boldsymbol{S}_{Z\mid\boldsymbol{Y}}^{\top}
\mid
\boldsymbol{Y}
\right].
\]
Since \((1-M)^2=1-M\), conditional independence gives
\begin{align}
&E
\left[
(1-M)
\boldsymbol{S}_{Z\mid\boldsymbol{Y}}
\boldsymbol{S}_{Z\mid\boldsymbol{Y}}^{\top}
\mid
\boldsymbol{Y}
\right]
\nonumber\\
&\qquad=
E(1-M\mid\boldsymbol{Y})
E
\left[
\boldsymbol{S}_{Z\mid\boldsymbol{Y}}
\boldsymbol{S}_{Z\mid\boldsymbol{Y}}^{\top}
\mid
\boldsymbol{Y}
\right]
\nonumber\\
&\qquad=
(1-q)
\boldsymbol{I}_{Z\mid\boldsymbol{Y}}
(\boldsymbol{\theta};\boldsymbol{Y}).
\label{supp:eq:retained-label-info}
\end{align}
Hence the information lost relative to complete classification is
\begin{equation}
\boldsymbol{D}
=
E
\left[
q(\boldsymbol{Y};\boldsymbol{\theta},\boldsymbol{\xi})
\boldsymbol{I}_{Z\mid\boldsymbol{Y}}
(\boldsymbol{\theta};\boldsymbol{Y})
\right].
\label{supp:eq:D}
\end{equation}

It remains to determine the information supplied by the missing-label
indicators. From \eqref{supp:eq:missing-scores} and
\[
E
\left[
(M-q)^2
\mid
\boldsymbol{Y}
\right]
=
q(1-q),
\]
their Fisher-information blocks are
\begin{equation}
\boldsymbol{B}_{ab}
=
E
\left[
\frac{
q_a q_b^{\top}
}{
q(1-q)
}
\right],
\qquad
a,b\in
\{\boldsymbol{\theta},\boldsymbol{\xi}\}.
\label{supp:eq:B-blocks}
\end{equation}
Combining these blocks with the preceding orthogonality relations, the joint
observed-data Fisher information for
\((\boldsymbol{\theta}^{\top},
\boldsymbol{\xi}^{\top})^{\top}\) is
\begin{equation}
\boldsymbol{I}_{\mathrm{obs}}
=
\begin{pmatrix}
\boldsymbol{I}_{\mathrm{CC}}
-
\boldsymbol{D}
+
\boldsymbol{B}_{\boldsymbol{\theta}\boldsymbol{\theta}}
&
\boldsymbol{B}_{\boldsymbol{\theta}\boldsymbol{\xi}}
\\[2mm]
\boldsymbol{B}_{\boldsymbol{\xi}\boldsymbol{\theta}}
&
\boldsymbol{B}_{\boldsymbol{\xi}\boldsymbol{\xi}}
\end{pmatrix}.
\label{supp:eq:joint-information}
\end{equation}

The efficient Fisher information for
\(\boldsymbol{\theta}\), after eliminating the missingness-specific nuisance
parameter \(\boldsymbol{\xi}\), is the Schur complement of
\(\boldsymbol{B}_{\boldsymbol{\xi}\boldsymbol{\xi}}\):
\begin{align}
\boldsymbol{I}_{\mathrm{PC}}^{\mathrm{eff}}
={}&
\boldsymbol{I}_{\mathrm{CC}}
-
\boldsymbol{D}
+
\boldsymbol{B}_{\boldsymbol{\theta}\boldsymbol{\theta}}
\nonumber\\
&-
\boldsymbol{B}_{\boldsymbol{\theta}\boldsymbol{\xi}}
\boldsymbol{B}_{\boldsymbol{\xi}\boldsymbol{\xi}}^{-1}
\boldsymbol{B}_{\boldsymbol{\xi}\boldsymbol{\theta}}.
\label{supp:eq:Schur-information}
\end{align}
Therefore, with
\begin{equation}
\boldsymbol{I}_{M}^{\mathrm{eff}}
=
\boldsymbol{B}_{\boldsymbol{\theta}\boldsymbol{\theta}}
-
\boldsymbol{B}_{\boldsymbol{\theta}\boldsymbol{\xi}}
\boldsymbol{B}_{\boldsymbol{\xi}\boldsymbol{\xi}}^{-1}
\boldsymbol{B}_{\boldsymbol{\xi}\boldsymbol{\theta}},
\label{supp:eq:IM-eff}
\end{equation}
we obtain
\begin{equation}
\boldsymbol{I}_{\mathrm{PC}}^{\mathrm{eff}}
=
\boldsymbol{I}_{\mathrm{CC}}
-
\boldsymbol{D}
+
\boldsymbol{I}_{M}^{\mathrm{eff}}.
\label{supp:eq:information-decomposition}
\end{equation}

It remains only to verify that
\(\boldsymbol{I}_{M}^{\mathrm{eff}}\) is positive semidefinite. The Bernoulli
information matrix
\[
\boldsymbol{B}
=
\begin{pmatrix}
\boldsymbol{B}_{\boldsymbol{\theta}\boldsymbol{\theta}}
&
\boldsymbol{B}_{\boldsymbol{\theta}\boldsymbol{\xi}}
\\
\boldsymbol{B}_{\boldsymbol{\xi}\boldsymbol{\theta}}
&
\boldsymbol{B}_{\boldsymbol{\xi}\boldsymbol{\xi}}
\end{pmatrix}
\]
is the covariance matrix of the joint missingness score
\[
\left(
\boldsymbol{S}_{M,\boldsymbol{\theta}}^{\top},
\boldsymbol{S}_{M,\boldsymbol{\xi}}^{\top}
\right)^{\top},
\]
and is therefore positive semidefinite. By assumption,
\(\boldsymbol{B}_{\boldsymbol{\xi}\boldsymbol{\xi}}\) is nonsingular; being a
nonsingular principal submatrix of a positive semidefinite matrix, it is
positive definite. Its Schur complement is therefore positive semidefinite,
which gives
\[
\boldsymbol{I}_{M}^{\mathrm{eff}}
\succeq
\boldsymbol{0}.
\]
This proves Theorem~\ref{thm:information-decomposition}.
\hfill\(\square\)

\medskip
\noindent\textit{roof of Corollary~\ref{cor:MCAR-information}.}
Under missing completely at random,
\[
q(\boldsymbol{Y};\boldsymbol{\theta},\boldsymbol{\xi})
=
\gamma,
\qquad
0<\gamma<1,
\]
where \(\gamma\) is variation-independent of
\(\boldsymbol{\theta}\). Hence
\(q_{\boldsymbol{\theta}}=\boldsymbol{0}\), and consequently
\[
\boldsymbol{B}_{\boldsymbol{\theta}\boldsymbol{\theta}}
=
\boldsymbol{0},
\qquad
\boldsymbol{B}_{\boldsymbol{\theta}\boldsymbol{\xi}}
=
\boldsymbol{0}.
\]
It follows that
\(\boldsymbol{I}_{M}^{\mathrm{eff}}=\boldsymbol{0}\).
Moreover, the label-information loss reduces to
\[
\boldsymbol{D}
=
\gamma
E
\left[
\boldsymbol{I}_{Z\mid\boldsymbol{Y}}
(\boldsymbol{\theta};\boldsymbol{Y})
\right].
\]
Substituting these expressions into the information decomposition established
above gives
\begin{equation}
\boldsymbol{I}_{\mathrm{PC}}^{\mathrm{eff}}
=
\boldsymbol{I}_{\mathrm{CC}}
-
\gamma
E
\left[
\boldsymbol{I}_{Z\mid\boldsymbol{Y}}
(\boldsymbol{\theta};\boldsymbol{Y})
\right].
\label{supp:eq:MCAR-information}
\end{equation}
Since
\(\boldsymbol{I}_{Z\mid\boldsymbol{Y}}
(\boldsymbol{\theta};\boldsymbol{Y})\)
is positive semidefinite for every \(\boldsymbol{Y}\), its expectation is
positive semidefinite, and therefore
\[
\boldsymbol{I}_{\mathrm{PC}}^{\mathrm{eff}}
\preceq
\boldsymbol{I}_{\mathrm{CC}}.
\]
This proves Corollary~\ref{cor:MCAR-information}.
\hfill\(\square\)

\section{Proof of the nuisance-parameter information result}
\label{supp:nuisance-information}

This section proves Proposition~\ref{prop:nuisance-coupling}.

\medskip
\noindent\textit{Proof of Proposition~\ref{prop:nuisance-coupling}.}
Partition the complete-classification information matrix according to
\(\boldsymbol{\theta}
=
(\boldsymbol{\beta}^{\top},
\boldsymbol{\lambda}^{\top})^{\top}\) as
\[
\boldsymbol{A}
=
\begin{pmatrix}
\boldsymbol{A}_{\beta\beta}
&
\boldsymbol{A}_{\beta\lambda}
\\
\boldsymbol{A}_{\lambda\beta}
&
\boldsymbol{A}_{\lambda\lambda}
\end{pmatrix},
\]
and suppose that
\(\boldsymbol{A}_{\lambda\lambda}\) is nonsingular. Since
\(\boldsymbol{A}\) is a Fisher-information matrix and
\(\boldsymbol{A}_{\lambda\lambda}\) is a nonsingular principal block,
\(\boldsymbol{A}_{\lambda\lambda}\) is positive definite. Define
\[
\boldsymbol{R}
=
\boldsymbol{A}_{\beta\lambda}
\boldsymbol{A}_{\lambda\lambda}^{-1}.
\]
If
\(\boldsymbol{S}_{\beta}\) and
\(\boldsymbol{S}_{\lambda}\) are the corresponding complete-classification
score components, then the efficient score for
\(\boldsymbol{\beta}\) relative to
\(\boldsymbol{\lambda}\) is
\[
\boldsymbol{S}_{\beta}^{\,\mathrm{eff}}
=
\boldsymbol{S}_{\beta}
-
\boldsymbol{R}\boldsymbol{S}_{\lambda}.
\]
Indeed,
\[
E
\left[
\boldsymbol{S}_{\beta}^{\,\mathrm{eff}}
\boldsymbol{S}_{\lambda}^{\top}
\right]
=
\boldsymbol{A}_{\beta\lambda}
-
\boldsymbol{R}
\boldsymbol{A}_{\lambda\lambda}
=
\boldsymbol{0},
\]
and its covariance is the usual Schur complement
\begin{equation}
\boldsymbol{A}_{\mathrm{eff}}(\boldsymbol{\beta})
=
\boldsymbol{A}_{\beta\beta}
-
\boldsymbol{A}_{\beta\lambda}
\boldsymbol{A}_{\lambda\lambda}^{-1}
\boldsymbol{A}_{\lambda\beta}.
\label{supp:eq:Aeff-beta}
\end{equation}

After eliminating the missingness-specific parameter
\(\boldsymbol{\xi}\), Theorem~\ref{thm:information-decomposition} gives
\[
\boldsymbol{J}
=
\boldsymbol{A}
+
\boldsymbol{K},
\qquad
\boldsymbol{K}
=
\boldsymbol{I}_{M}^{\mathrm{eff}}
-
\boldsymbol{D}.
\]
Write
\[
\boldsymbol{K}
=
\begin{pmatrix}
\boldsymbol{K}_{\beta\beta}
&
\boldsymbol{K}_{\beta\lambda}
\\
\boldsymbol{K}_{\lambda\beta}
&
\boldsymbol{K}_{\lambda\lambda}
\end{pmatrix}
\]
and introduce the nonsingular block-triangular matrix
\[
\boldsymbol{T}
=
\begin{pmatrix}
\boldsymbol{I}_{r}
&
-\boldsymbol{R}
\\
\boldsymbol{0}
&
\boldsymbol{I}_{s}
\end{pmatrix}.
\]
Multiplication of the score vector by
\(\boldsymbol{T}\) replaces
\(\boldsymbol{S}_{\beta}\) by
\(\boldsymbol{S}_{\beta}
-\boldsymbol{R}\boldsymbol{S}_{\lambda}\)
while leaving
\(\boldsymbol{S}_{\lambda}\) unchanged. The corresponding information
matrix is therefore transformed by congruence.

For the complete-classification information,
\begin{equation}
\boldsymbol{T}
\boldsymbol{A}
\boldsymbol{T}^{\top}
=
\begin{pmatrix}
\boldsymbol{A}_{\mathrm{eff}}(\boldsymbol{\beta})
&
\boldsymbol{0}
\\
\boldsymbol{0}
&
\boldsymbol{A}_{\lambda\lambda}
\end{pmatrix}.
\label{supp:eq:TAT}
\end{equation}
The vanishing off-diagonal block follows from
\(\boldsymbol{A}_{\beta\lambda}
-\boldsymbol{R}\boldsymbol{A}_{\lambda\lambda}
=\boldsymbol{0}\), while the upper-left block is
\eqref{supp:eq:Aeff-beta}.

Applying the same transformation to the perturbation
\(\boldsymbol{K}\) gives
\begin{equation}
\boldsymbol{T}
\boldsymbol{K}
\boldsymbol{T}^{\top}
=
\begin{pmatrix}
\widetilde{\boldsymbol{K}}_{\beta\beta}
&
\widetilde{\boldsymbol{K}}_{\beta\lambda}
\\
\widetilde{\boldsymbol{K}}_{\lambda\beta}
&
\boldsymbol{K}_{\lambda\lambda}
\end{pmatrix},
\label{supp:eq:TKT}
\end{equation}
where
\begin{align}
\widetilde{\boldsymbol{K}}_{\beta\beta}
={}&
\boldsymbol{K}_{\beta\beta}
-
\boldsymbol{R}\boldsymbol{K}_{\lambda\beta}
-
\boldsymbol{K}_{\beta\lambda}\boldsymbol{R}^{\top}
+
\boldsymbol{R}
\boldsymbol{K}_{\lambda\lambda}
\boldsymbol{R}^{\top},
\label{supp:eq:Ktilde-bb}\\
\widetilde{\boldsymbol{K}}_{\beta\lambda}
={}&
\boldsymbol{K}_{\beta\lambda}
-
\boldsymbol{R}
\boldsymbol{K}_{\lambda\lambda},
\label{supp:eq:Ktilde-bl}
\end{align}
and
\(\widetilde{\boldsymbol{K}}_{\lambda\beta}
=
\widetilde{\boldsymbol{K}}_{\beta\lambda}^{\top}\)
because \(\boldsymbol{K}\) is symmetric.

Combining \eqref{supp:eq:TAT} and
\eqref{supp:eq:TKT} yields
\begin{equation}
\boldsymbol{J}^{\star}
:=
\boldsymbol{T}
\boldsymbol{J}
\boldsymbol{T}^{\top}
=
\begin{pmatrix}
\boldsymbol{A}_{\mathrm{eff}}(\boldsymbol{\beta})
+
\widetilde{\boldsymbol{K}}_{\beta\beta}
&
\widetilde{\boldsymbol{K}}_{\beta\lambda}
\\
\widetilde{\boldsymbol{K}}_{\lambda\beta}
&
\boldsymbol{J}_{\lambda\lambda}
\end{pmatrix},
\label{supp:eq:Jstar}
\end{equation}
where
\[
\boldsymbol{J}_{\lambda\lambda}
=
\boldsymbol{A}_{\lambda\lambda}
+
\boldsymbol{K}_{\lambda\lambda}.
\]

Because \(\boldsymbol{T}\) is nonsingular and its transformation replaces
the score for \(\boldsymbol{\beta}\) by that score minus a linear combination
of the nuisance score while leaving the nuisance-score space unchanged, the
efficient information for \(\boldsymbol{\beta}\) is invariant under this
transformation. Equivalently, it is obtained as the Schur complement of the
lower-right block of \(\boldsymbol{J}^{\star}\). Therefore
\begin{equation}
\boldsymbol{J}_{\mathrm{eff}}(\boldsymbol{\beta})
=
\boldsymbol{A}_{\mathrm{eff}}(\boldsymbol{\beta})
+
\widetilde{\boldsymbol{K}}_{\beta\beta}
-
\widetilde{\boldsymbol{K}}_{\beta\lambda}
\boldsymbol{J}_{\lambda\lambda}^{-1}
\widetilde{\boldsymbol{K}}_{\lambda\beta}.
\label{supp:eq:Jeff-proof}
\end{equation}
This is the first assertion of Proposition~\ref{prop:nuisance-coupling}.

Now define
\begin{equation}
\boldsymbol{C}_{K}
=
\widetilde{\boldsymbol{K}}_{\beta\lambda}
\boldsymbol{J}_{\lambda\lambda}^{-1}
\widetilde{\boldsymbol{K}}_{\lambda\beta}.
\label{supp:eq:CK}
\end{equation}
The matrix
\(\boldsymbol{J}_{\lambda\lambda}\) is a principal block of the efficient
Fisher-information matrix \(\boldsymbol{J}\). Under the assumed
nonsingularity, it is therefore positive definite, and so is
\(\boldsymbol{J}_{\lambda\lambda}^{-1}\). Consequently, for every
\(\boldsymbol{x}\in\mathbb{R}^{r}\),
\begin{align}
\boldsymbol{x}^{\top}
\boldsymbol{C}_{K}
\boldsymbol{x}
&=
\left(
\widetilde{\boldsymbol{K}}_{\lambda\beta}
\boldsymbol{x}
\right)^{\top}
\boldsymbol{J}_{\lambda\lambda}^{-1}
\left(
\widetilde{\boldsymbol{K}}_{\lambda\beta}
\boldsymbol{x}
\right)
\nonumber\\
&\geq
0.
\label{supp:eq:CK-psd}
\end{align}
Thus
\[
\boldsymbol{C}_{K}
\succeq
\boldsymbol{0},
\]
which proves the second assertion.

For completeness, the sequential nuisance elimination used above is equivalent
to eliminating
\(\boldsymbol{\lambda}\) and
\(\boldsymbol{\xi}\) jointly from the full observed-data information matrix.
If that matrix is partitioned according to
\((\boldsymbol{\beta}^{\top},
\boldsymbol{\lambda}^{\top},
\boldsymbol{\xi}^{\top})^{\top}\),
the quotient identity for Schur complements gives, under the required
invertibility conditions,
\begin{equation}
\operatorname{Schur}_{(\lambda,\xi)}
\left(
\boldsymbol{I}_{\mathrm{obs}}
\right)
=
\operatorname{Schur}_{\lambda}
\left\{
\operatorname{Schur}_{\xi}
\left(
\boldsymbol{I}_{\mathrm{obs}}
\right)
\right\}.
\label{supp:eq:sequential-Schur}
\end{equation}
Thus eliminating the missingness-specific nuisance parameter first and then
the data-model nuisance parameter gives exactly the same efficient information
for \(\boldsymbol{\beta}\) as their joint elimination.

This completes the proof of Proposition~\ref{prop:nuisance-coupling}.
\hfill\(\square\)

\section{Proof of the multiclass excess-risk expansion}
\label{supp:risk-proof}

This section establishes the local boundary calculation underlying
Theorem~\ref{thm:multiclass-risk}.

\medskip
\noindent\textit{Proof of Theorem~\ref{thm:multiclass-risk}.}
For an arbitrary deterministic classifier \(C\), evaluated under the true
parameter \(\boldsymbol{\theta}_0\),
\[
R(C)
=
1-
\int
r_{C(\boldsymbol{y})}^{0}(\boldsymbol{y})
\,d\boldsymbol{y},
\]
where
\(r_k^0(\boldsymbol{y})
=
r_k(\boldsymbol{y};\boldsymbol{\theta}_0)\).
Since the Bayes classifier \(C_0\) maximizes the prior-weighted class density
pointwise,
\[
R^\ast
=
1-
\int
\max_{1\leq k\leq g}
r_k^0(\boldsymbol{y})
\,d\boldsymbol{y}.
\]
Consequently,
\begin{equation}
R(C)-R^\ast
=
\int
\left\{
\max_k r_k^0(\boldsymbol{y})
-
r_{C(\boldsymbol{y})}^0(\boldsymbol{y})
\right\}
\,d\boldsymbol{y}.
\label{supp:eq:exact-risk}
\end{equation}
For the classifier
\(C_{\boldsymbol{h}}\) determined by
\(\boldsymbol{\theta}_0+\boldsymbol{h}\), this becomes
\begin{equation}
R(\boldsymbol{\theta}_0+\boldsymbol{h})
-
R^\ast
=
\int
\left\{
r_{C_0(\boldsymbol{y})}^0(\boldsymbol{y})
-
r_{C_{\boldsymbol{h}}(\boldsymbol{y})}^0(\boldsymbol{y})
\right\}
\,d\boldsymbol{y},
\label{supp:eq:plugin-risk}
\end{equation}
up to the immaterial choice of classifier on sets of Bayes ties of probability
zero.

We first localize the region in which
\(C_{\boldsymbol{h}}\) can differ from \(C_0\). Let
\[
m(\boldsymbol{y})
=
r_{(1)}^0(\boldsymbol{y})
-
r_{(2)}^0(\boldsymbol{y}),
\]
where \(r_{(1)}^0\) and \(r_{(2)}^0\) denote the largest and second-largest
values among
\(\{r_1^0(\boldsymbol{y}),\ldots,r_g^0(\boldsymbol{y})\}\).
On any compact set separated from the Bayes boundary,
continuity implies
\(m(\boldsymbol{y})\geq c\) for some \(c>0\).
Joint differentiability in
\((\boldsymbol{y},\boldsymbol{\theta})\) gives, uniformly on such a set,
\[
\max_k
\left|
r_k(\boldsymbol{y};\boldsymbol{\theta}_0+\boldsymbol{h})
-
r_k^0(\boldsymbol{y})
\right|
=
O(\|\boldsymbol{h}\|).
\]
It follows that, for sufficiently small \(\boldsymbol{h}\), a change in the
winning class is possible only where
\[
m(\boldsymbol{y})
=
O(\|\boldsymbol{h}\|).
\]
Thus the disagreement between the true and perturbed classifiers is confined
to a shrinking neighborhood of the Bayes boundary.

Consider now an interior point
\(\boldsymbol{s}\) of an active face
\(\mathcal{F}_{kl}\). By definition,
\[
r_k^0(\boldsymbol{s})
=
r_l^0(\boldsymbol{s})
>
r_m^0(\boldsymbol{s}),
\qquad
m\notin\{k,l\}.
\]
The strict inequality implies that, in a sufficiently small neighborhood of
\(\boldsymbol{s}\), classes other than \(k\) and \(l\) remain separated from
the two leading classes. By continuity in the parameter, this remains true for
all sufficiently small \(\boldsymbol{h}\). Hence, locally, both
\(C_0\) and \(C_{\boldsymbol{h}}\) are determined solely by the sign of
\[
g_{kl}(\boldsymbol{y};\boldsymbol{\theta})
=
r_k(\boldsymbol{y};\boldsymbol{\theta})
-
r_l(\boldsymbol{y};\boldsymbol{\theta}).
\]

Write
\(g_{kl}^0(\boldsymbol{y})
=
g_{kl}(\boldsymbol{y};\boldsymbol{\theta}_0)\).
At a regular point of the face,
\(\nabla_{\boldsymbol{y}}g_{kl}^0(\boldsymbol{s})\neq\boldsymbol{0}\).
On a sufficiently small tubular neighborhood of a compact regular portion of
\(\mathcal{F}_{kl}\), define
\[
\boldsymbol{V}_{kl}(\boldsymbol{y})
=
\frac{
\nabla_{\boldsymbol{y}}g_{kl}^0(\boldsymbol{y})
}{
\|
\nabla_{\boldsymbol{y}}g_{kl}^0(\boldsymbol{y})
\|^2
}.
\]
Let
\(\boldsymbol{\Phi}_{kl}(\boldsymbol{s},u)\) be the associated local flow,
initialized at
\(\boldsymbol{\Phi}_{kl}(\boldsymbol{s},0)=\boldsymbol{s}\).
Along this flow,
\begin{align*}
\frac{\partial}{\partial u}
g_{kl}^0
\left\{
\boldsymbol{\Phi}_{kl}(\boldsymbol{s},u)
\right\}
&=
\nabla_{\boldsymbol{y}}g_{kl}^0
\left\{
\boldsymbol{\Phi}_{kl}(\boldsymbol{s},u)
\right\}^{\top}
\boldsymbol{V}_{kl}
\left\{
\boldsymbol{\Phi}_{kl}(\boldsymbol{s},u)
\right\}
\\
&=1.
\end{align*}
Since
\(g_{kl}^0(\boldsymbol{s})=0\), it follows that
\begin{equation}
g_{kl}^0
\left\{
\boldsymbol{\Phi}_{kl}(\boldsymbol{s},u)
\right\}
=
u.
\label{supp:eq:g-equals-u}
\end{equation}
Thus \(u\) is the pairwise contrast itself and provides a normal coordinate
to the true boundary.

The corresponding volume element satisfies, uniformly on compact regular
portions of the face,
\begin{equation}
d\boldsymbol{y}
=
\left\{
\frac{
1
}{
\|
\nabla_{\boldsymbol{y}}
g_{kl}^0(\boldsymbol{s})
\|
}
+
O(|u|)
\right\}
dS(\boldsymbol{s})\,du.
\label{supp:eq:coarea-local}
\end{equation}
This follows from the coarea formula, or equivalently from the Jacobian of the
local tubular coordinate map.

Define
\[
\boldsymbol{b}_{kl}(\boldsymbol{s})
=
\nabla_{\boldsymbol{\theta}}
g_{kl}(\boldsymbol{s};\boldsymbol{\theta}_0).
\]
A Taylor expansion in the parameter, together with smoothness of the coordinate
map, gives
\begin{align}
g_{kl}
\left\{
\boldsymbol{\Phi}_{kl}(\boldsymbol{s},u);
\boldsymbol{\theta}_0+\boldsymbol{h}
\right\}
={}&
u
+
\boldsymbol{b}_{kl}(\boldsymbol{s})^{\top}
\boldsymbol{h}
\nonumber\\
&+
O
\left(
|u|\,\|\boldsymbol{h}\|
+
\|\boldsymbol{h}\|^2
\right),
\label{supp:eq:perturbed-g}
\end{align}
uniformly on compact regular portions of
\(\mathcal{F}_{kl}\).
Let \(u_{\boldsymbol{h}}(\boldsymbol{s})\) denote the normal coordinate of the
perturbed \(k\)-versus-\(l\) boundary. The implicit-function theorem applied
to \eqref{supp:eq:perturbed-g} yields
\begin{equation}
u_{\boldsymbol{h}}(\boldsymbol{s})
=
-
\boldsymbol{b}_{kl}(\boldsymbol{s})^{\top}
\boldsymbol{h}
+
O(\|\boldsymbol{h}\|^2),
\label{supp:eq:boundary-displacement}
\end{equation}
again uniformly on such compact portions.

Within this local binary neighborhood, disagreement between the true and
perturbed rules occurs precisely in the strip between the two boundaries.
The excess loss at
\(\boldsymbol{y}\) is
\[
\left|
r_k^0(\boldsymbol{y})
-
r_l^0(\boldsymbol{y})
\right|
=
|g_{kl}^0(\boldsymbol{y})|
=
|u|,
\]
where the last equality follows from
\eqref{supp:eq:g-equals-u}. Therefore, for a fixed boundary point
\(\boldsymbol{s}\), the excess-risk contribution per unit surface measure
from the displaced strip is
\begin{align}
\Delta R_{kl,\boldsymbol{s}}(\boldsymbol{h})
&=
\int_{\min\{0,u_{\boldsymbol{h}}(\boldsymbol{s})\}}^{
      \max\{0,u_{\boldsymbol{h}}(\boldsymbol{s})\}}
|u|
\left\{
\frac{1}{
\|\nabla_{\boldsymbol{y}}g_{kl}^0(\boldsymbol{s})\|
}
+
O(|u|)
\right\}
du
\nonumber\\
&=
\frac{1}{2}
\frac{
u_{\boldsymbol{h}}(\boldsymbol{s})^2
}{
\|
\nabla_{\boldsymbol{y}}
g_{kl}^0(\boldsymbol{s})
\|
}
+
O(\|\boldsymbol{h}\|^3).
\label{supp:eq:local-strip-risk}
\end{align}
Using \eqref{supp:eq:boundary-displacement},
\begin{equation}
\Delta R_{kl,\boldsymbol{s}}(\boldsymbol{h})
=
\frac{1}{2}
\frac{
\left\{
\boldsymbol{b}_{kl}(\boldsymbol{s})^{\top}
\boldsymbol{h}
\right\}^{2}
}{
\|
\nabla_{\boldsymbol{y}}
g_{kl}^0(\boldsymbol{s})
\|
}
+
O(\|\boldsymbol{h}\|^3).
\label{supp:eq:local-risk}
\end{equation}
Integrating over any compact regular portion
\(\mathcal{F}_{kl}^{(0)}\subset\mathcal{F}_{kl}\) that is bounded away
from higher-order Bayes tie sets gives
\begin{equation}
\Delta R_{kl}^{(0)}(\boldsymbol{h})
=
\frac{1}{2}
\boldsymbol{h}^{\top}
\boldsymbol{H}_{kl}^{(0)}
\boldsymbol{h}
+
o(\|\boldsymbol{h}\|^2),
\label{supp:eq:face-risk}
\end{equation}
where
\begin{equation}
\boldsymbol{H}_{kl}^{(0)}
=
\int_{\mathcal{F}_{kl}^{(0)}}
\frac{
\boldsymbol{b}_{kl}(\boldsymbol{s})
\boldsymbol{b}_{kl}(\boldsymbol{s})^{\top}
}{
\|
\nabla_{\boldsymbol{y}}
g_{kl}^0(\boldsymbol{s})
\|
}
\,dS(\boldsymbol{s}).
\label{supp:eq:Hkl-local-proof}
\end{equation}

It remains to control neighborhoods of higher-order Bayes tie sets and to
justify extending the preceding facewise calculation to the whole active
boundary. Let
\(\varepsilon=\|\boldsymbol{h}\|\).
Consider a point at which \(r\geq3\) classes tie at the Bayes maximum.
By the transversality assumption, in a neighborhood of such a point one may
choose \(r-1\) independent pairwise contrasts as local normal coordinates,
say
\[
\boldsymbol{u}
=
(u_1,\ldots,u_{r-1})^{\top},
\qquad
u_j
=
g_{k_1k_{j+1}}^0(\boldsymbol{y}).
\]
The constant-rank theorem then provides local coordinates
\((\boldsymbol{s},\boldsymbol{u})\), where
\(\boldsymbol{s}\) parameterizes the tie stratum and the associated Jacobian
is bounded above and below on compact coordinate neighborhoods.

A perturbation
\(\boldsymbol{\theta}_0+\boldsymbol{h}\) changes each of these pairwise
contrasts by \(O(\varepsilon)\), uniformly on such a compact neighborhood.
Consequently, a change in the genuinely \(r\)-way ordering can occur only
when
\[
\|\boldsymbol{u}\|=O(\varepsilon).
\]
Since there are \(r-1\) independent normal coordinates, the volume of this
region is \(O(\varepsilon^{r-1})\) per unit measure of the tie stratum.
Moreover, because the competing prior-weighted densities agree on the tie
stratum and are continuously differentiable, their differences throughout
this region are \(O(\varepsilon)\). Hence the excess-risk contribution of
the neighborhood of an \(r\)-way transversal tie is
\[
O(\varepsilon^{r-1})\,O(\varepsilon)
=
O(\varepsilon^r)
=
o(\varepsilon^2),
\qquad r\geq3.
\]

The same argument controls the portions of the adjacent active pairwise
faces removed when forming regular tubular neighborhoods. For a triple tie,
for example, the excluded portion of an adjacent face has surface measure
\(O(\varepsilon)\), while its displaced-strip contribution per unit surface
measure is \(O(\varepsilon^2)\), giving \(O(\varepsilon^3)\). More generally,
the corresponding contribution near an \(r\)-way transversal tie is of
order \(O(\varepsilon^r)\).

Therefore neighborhoods of higher-order tie strata contribute only
\(o(\varepsilon^2)\). Away from these neighborhoods, the active pairwise
faces are regular and the preceding tubular-coordinate calculation applies
uniformly on compact portions. Moreover, by transversality and smoothness,
the integrand
\[
\frac{
\boldsymbol{b}_{kl}(\boldsymbol{s})
\boldsymbol{b}_{kl}(\boldsymbol{s})^{\top}
}{
\|
\nabla_{\boldsymbol{y}}g_{kl}^0(\boldsymbol{s})
\|
}
\]
is locally bounded near each higher-order tie stratum. The portion of an
active pairwise face lying within an \(O(\varepsilon)\) neighborhood of a
triple-tie stratum has surface measure \(O(\varepsilon)\), and the
corresponding difference between the truncated and full quadratic
coefficients is therefore \(O(\varepsilon)\). After multiplication by
\(\|\boldsymbol{h}\|^2=\varepsilon^2\), this contributes
\(O(\varepsilon^3)=o(\varepsilon^2)\). Higher-order tie strata give still
smaller orders. Consequently, the truncated face integrals may be replaced
by the integrals over the full active faces at quadratic order. Thus, for
each active pair \(k<l\),
\begin{equation}
\Delta R_{kl}(\boldsymbol{h})
=
\frac{1}{2}
\boldsymbol{h}^{\top}
\boldsymbol{H}_{kl}
\boldsymbol{h}
+
o(\|\boldsymbol{h}\|^2),
\end{equation}
where
\begin{equation}
\boldsymbol{H}_{kl}
=
\int_{\mathcal{F}_{kl}}
\frac{
\boldsymbol{b}_{kl}(\boldsymbol{s})
\boldsymbol{b}_{kl}(\boldsymbol{s})^{\top}
}{
\|
\nabla_{\boldsymbol{y}}
g_{kl}^0(\boldsymbol{s})
\|
}
\,dS(\boldsymbol{s}).
\label{supp:eq:Hkl-proof}
\end{equation}
Thus only the regular active pairwise faces contribute to the quadratic term.

For compact active boundaries on bounded feature support, summing the
preceding facewise expansions over all active pairs $k<l$ gives
\[
R(\theta_0+h)-R^\ast
=
\frac12 h^\top H_R h+o(\|h\|^2),
\qquad
H_R=\sum_{k<l}H_{kl}.
\]
This establishes the quadratic expansion directly in the bounded-support
case. Supplementary Section~\ref{supp:noncompact} extends the argument to
unbounded feature supports under the stated tail condition and, when one or more active faces
are noncompact, the additional regular-exhaustion and
boundary-integrability conditions.

Finally, for any vector \(\boldsymbol{v}\),
\begin{align*}
\boldsymbol{v}^{\top}
\boldsymbol{H}_{R}
\boldsymbol{v}
&=
\sum_{k<l}
\int_{\mathcal{F}_{kl}}
\frac{
\left\{
\boldsymbol{b}_{kl}(\boldsymbol{s})^{\top}
\boldsymbol{v}
\right\}^{2}
}{
\|
\nabla_{\boldsymbol{y}}
g_{kl}^0(\boldsymbol{s})
\|
}
\,dS(\boldsymbol{s})
\\
&\geq0.
\end{align*}
Hence
\(\boldsymbol{H}_{R}\succeq\boldsymbol{0}\), completing the proof of
Theorem~\ref{thm:multiclass-risk}.
\hfill\(\square\)


\section{Noncompact boundaries and tail conditions}
\label{supp:noncompact}

This section states the tail condition used when the feature support is
unbounded and the additional regular-exhaustion and boundary-integrability
conditions required when one or more active Bayes faces are noncompact. It
then completes the corresponding extension of
Theorem~\ref{thm:multiclass-risk}.

Let
\[
\mathcal{K}_L
=
\left\{
\boldsymbol{y}\in\mathcal{Y}:
\|\boldsymbol{y}\|\leq L
\right\},
\qquad
\mathcal{F}_{kl}^{(L)}
=
\mathcal{F}_{kl}\cap\mathcal{K}_L .
\]
For each active pair \(k<l\), define the truncated curvature matrix
\begin{equation}
\boldsymbol{H}_{kl}^{(L)}
=
\int_{\mathcal{F}_{kl}^{(L)}}
\frac{
\boldsymbol{b}_{kl}(\boldsymbol{s})
\boldsymbol{b}_{kl}(\boldsymbol{s})^{\top}
}{
\left\|
\nabla_{\boldsymbol{y}}
g_{kl}(\boldsymbol{s};\boldsymbol{\theta}_0)
\right\|
}
\,dS(\boldsymbol{s}),
\label{supp:eq:Hkl-truncated}
\end{equation}
where
\[
\boldsymbol{b}_{kl}(\boldsymbol{s})
=
\nabla_{\boldsymbol{\theta}}
g_{kl}(\boldsymbol{s};\boldsymbol{\theta}_0).
\]

For noncompact active faces, we assume the following regular-exhaustion
condition in addition to boundary integrability. Tail control is imposed
separately whenever the feature support is unbounded. There exists
an increasing sequence \(L_m\to\infty\) such that, for every \(m\), each
regular active face intersects \(\partial\mathcal{K}_{L_m}\) transversely
whenever the intersection is nonempty.

We further assume the boundary-integrability condition
\begin{equation}
\int_{\mathcal{F}_{kl}}
\frac{
\left\|
\boldsymbol{b}_{kl}(\boldsymbol{s})
\right\|^2
}{
\left\|
\nabla_{\boldsymbol{y}}
g_{kl}(\boldsymbol{s};\boldsymbol{\theta}_0)
\right\|
}
\,dS(\boldsymbol{s})
<
\infty
\qquad
\text{for every nonempty }\mathcal{F}_{kl}.
\label{supp:eq:boundary-integrability}
\end{equation}
This condition guarantees that the matrix integral defining
\(\boldsymbol{H}_{kl}\) is finite and that
\[
\boldsymbol{H}_{kl}^{(L)}
\longrightarrow
\boldsymbol{H}_{kl}
\qquad
\text{as }L\to\infty.
\]

A separate condition is required to control classification disagreement in the
tails. Let
\[
\Delta_{\boldsymbol{h}}(\boldsymbol{y})
=
r_{C_0(\boldsymbol{y})}^{0}(\boldsymbol{y})
-
r_{C_{\boldsymbol{h}}(\boldsymbol{y})}^{0}(\boldsymbol{y})
\geq0
\]
denote the pointwise excess loss. We assume
\begin{equation}
\lim_{L\to\infty}
\;
\limsup_{\boldsymbol{h}\to\boldsymbol{0}}
\frac{
1
}{
\|\boldsymbol{h}\|^2
}
\int_{\mathcal{K}_L^{\,c}}
\Delta_{\boldsymbol{h}}(\boldsymbol{y})
\,d\boldsymbol{y}
=
0.
\label{supp:eq:tail-condition}
\end{equation}
Thus, after scaling by the quadratic order
\(\|\boldsymbol{h}\|^2\), the contribution to excess risk from sufficiently
far into the tails is uniformly negligible for small perturbations.

To complete the proof of Theorem~\ref{thm:multiclass-risk}, fix \(L<\infty\) such that the
active faces intersect \(\partial\mathcal{K}_L\) transversely whenever
the intersections are nonempty. For the noncompact extension we use an
increasing sequence of such regular truncation radii tending to infinity.
On \(\mathcal{K}_L\), the regular portions of the active faces are compact
after excluding arbitrarily small neighborhoods of transversal higher-order
junctions and of their intersections with
\(\partial\mathcal{K}_L\). The latter intersections are codimension two in
the ambient feature space. Their \(O(\|\boldsymbol{h}\|)\) neighborhoods
within an active face have surface measure \(O(\|\boldsymbol{h}\|)\), while
the displaced-strip excess risk per unit surface measure is
\(O(\|\boldsymbol{h}\|^2)\). Their total contribution is therefore
\(O(\|\boldsymbol{h}\|^3)=o(\|\boldsymbol{h}\|^2)\). Hence the local
argument of Supplementary Section~\ref{supp:risk-proof} gives
\begin{equation}
\int_{\mathcal{K}_L}
\Delta_{\boldsymbol{h}}(\boldsymbol{y})
\,d\boldsymbol{y}
=
\frac{1}{2}
\boldsymbol{h}^{\top}
\boldsymbol{H}_{R}^{(L)}
\boldsymbol{h}
+
o_L
\left(
\|\boldsymbol{h}\|^2
\right),
\label{supp:eq:truncated-risk-expansion}
\end{equation}
where
\[
\boldsymbol{H}_{R}^{(L)}
=
\sum_{k<l}
\boldsymbol{H}_{kl}^{(L)}.
\]
For each fixed \(L\), the remainder in
\eqref{supp:eq:truncated-risk-expansion} is
\(o(\|\boldsymbol{h}\|^2)\) as
\(\boldsymbol{h}\to\boldsymbol{0}\).

By \eqref{supp:eq:boundary-integrability},
\[
\boldsymbol{H}_{R}^{(L)}
\longrightarrow
\boldsymbol{H}_{R}
=
\sum_{k<l}
\boldsymbol{H}_{kl}
\qquad
\text{as }L\to\infty.
\]
Moreover, \eqref{supp:eq:tail-condition} implies that, for every
\(\varepsilon>0\), \(L\) can be chosen sufficiently large so that
\[
\limsup_{\boldsymbol{h}\to\boldsymbol{0}}
\frac{
1
}{
\|\boldsymbol{h}\|^2
}
\int_{\mathcal{K}_L^{\,c}}
\Delta_{\boldsymbol{h}}(\boldsymbol{y})
\,d\boldsymbol{y}
<
\varepsilon.
\]
To make the limiting argument explicit, write
\[
\mathcal{R}(\boldsymbol{h})
=
R(\boldsymbol{\theta}_0+\boldsymbol{h})-R^\ast .
\]
Then
\begin{align*}
&
\frac{
\left|
\mathcal{R}(\boldsymbol{h})
-
\frac{1}{2}
\boldsymbol{h}^{\top}
\boldsymbol{H}_{R}
\boldsymbol{h}
\right|
}{
\|\boldsymbol{h}\|^2
}
\\
&\qquad\leq
\frac{
\left|
\displaystyle
\int_{\mathcal{K}_L}
\Delta_{\boldsymbol{h}}(\boldsymbol{y})
\,d\boldsymbol{y}
-
\frac{1}{2}
\boldsymbol{h}^{\top}
\boldsymbol{H}_{R}^{(L)}
\boldsymbol{h}
\right|
}{
\|\boldsymbol{h}\|^2
}
\\
&\qquad\quad+
\frac{
1
}{
\|\boldsymbol{h}\|^2
}
\int_{\mathcal{K}_L^{\,c}}
\Delta_{\boldsymbol{h}}(\boldsymbol{y})
\,d\boldsymbol{y}
+
\frac{1}{2}
\left\|
\boldsymbol{H}_{R}^{(L)}
-
\boldsymbol{H}_{R}
\right\|_{\mathrm{op}}.
\end{align*}
For each fixed \(L\), the first term tends to zero as
\(\boldsymbol{h}\to\boldsymbol{0}\) by
\eqref{supp:eq:truncated-risk-expansion}. The second term can be made
arbitrarily small by choosing \(L\) sufficiently large, by
\eqref{supp:eq:tail-condition}, while the third term tends to zero as
\(L\to\infty\) by \eqref{supp:eq:boundary-integrability}. Hence
\begin{equation}
R(\boldsymbol{\theta}_0+\boldsymbol{h})
-
R^\ast
=
\frac{1}{2}
\boldsymbol{h}^{\top}
\boldsymbol{H}_{R}
\boldsymbol{h}
+
o
\left(
\|\boldsymbol{h}\|^2
\right).
\label{supp:eq:noncompact-risk-expansion}
\end{equation}
Hence the quadratic excess-risk expansion established in Supplementary
Section~\ref{supp:risk-proof} remains valid for noncompact active Bayes boundaries under
\eqref{supp:eq:boundary-integrability} and
\eqref{supp:eq:tail-condition}.

\medskip
\noindent\textit{Equivalent log-contrast representation.}
When the prior-weighted class densities are strictly positive in a
neighborhood of an active face, define
\[
d_{kl}(\boldsymbol y;\boldsymbol\theta)
=
\log
\frac{
r_k(\boldsymbol y;\boldsymbol\theta)
}{
r_l(\boldsymbol y;\boldsymbol\theta)
}.
\]
On \(\mathcal F_{kl}\), let
\[
c_{kl}(\boldsymbol s)
=
r_k^0(\boldsymbol s)
=
r_l^0(\boldsymbol s),
\qquad
\boldsymbol a_{kl}(\boldsymbol s)
=
\nabla_{\boldsymbol\theta}
d_{kl}(\boldsymbol s;\boldsymbol\theta_0).
\]
Because
\[
g_{kl}
=
r_k-r_l,
\]
differentiation on the active face gives
\[
\boldsymbol b_{kl}(\boldsymbol s)
=
c_{kl}(\boldsymbol s)
\boldsymbol a_{kl}(\boldsymbol s)
\]
and
\[
\nabla_{\boldsymbol y}
g_{kl}(\boldsymbol s;\boldsymbol\theta_0)
=
c_{kl}(\boldsymbol s)
\nabla_{\boldsymbol y}
d_{kl}(\boldsymbol s;\boldsymbol\theta_0).
\]
Since \(c_{kl}(\boldsymbol s)>0\), substitution into the direct-contrast
representation yields
\[
\boldsymbol H_{kl}
=
\int_{\mathcal F_{kl}}
\frac{
c_{kl}(\boldsymbol s)
}{
\left\|
\nabla_{\boldsymbol y}
d_{kl}(\boldsymbol s;\boldsymbol\theta_0)
\right\|
}
\boldsymbol a_{kl}(\boldsymbol s)
\boldsymbol a_{kl}(\boldsymbol s)^\top
\,dS(\boldsymbol s).
\]
Thus the direct-contrast representation requires no positivity assumption,
whereas the equivalent log-contrast form is often more convenient for the
QDA calculations.

\medskip
\noindent\textit{Proof of Corollary~\ref{cor:asymptotic-risk}.}
Let
\[
\boldsymbol{h}_n
=
\widehat{\boldsymbol{\theta}}_n
-
\boldsymbol{\theta}_0.
\]
By Theorem~\ref{thm:multiclass-risk},
\[
R(\widehat{\boldsymbol{\theta}}_n)
-
R^\ast
=
\frac{1}{2}
\boldsymbol{h}_n^{\top}
\boldsymbol{H}_{R}
\boldsymbol{h}_n
+
o_p(n^{-1}),
\]
because
\(\boldsymbol{h}_n=O_p(n^{-1/2})\). Hence
\begin{equation}
n
\left\{
R(\widehat{\boldsymbol{\theta}}_n)
-
R^\ast
\right\}
=
\frac{1}{2}
\left(
\sqrt{n}\boldsymbol{h}_n
\right)^{\top}
\boldsymbol{H}_{R}
\left(
\sqrt{n}\boldsymbol{h}_n
\right)
+
o_p(1).
\label{supp:eq:scaled-risk-corollary}
\end{equation}

By assumption,
\[
\sqrt{n}\boldsymbol{h}_n
\overset{d}{\longrightarrow}
\boldsymbol{Z},
\qquad
\boldsymbol{Z}
\sim
N(\boldsymbol{0},\boldsymbol{V}).
\]
Since the map
\(\boldsymbol{x}\mapsto
\boldsymbol{x}^{\top}\boldsymbol{H}_{R}\boldsymbol{x}\)
is continuous, the continuous mapping theorem and
Slutsky's theorem applied to
\eqref{supp:eq:scaled-risk-corollary} give
\begin{equation}
n
\left\{
R(\widehat{\boldsymbol{\theta}}_n)
-
R^\ast
\right\}
\overset{d}{\longrightarrow}
\frac{1}{2}
\boldsymbol{Z}^{\top}
\boldsymbol{H}_{R}
\boldsymbol{Z}.
\label{supp:eq:risk-limit-corollary}
\end{equation}

If the nonzero eigenvalues of
\(\boldsymbol V^{1/2}\boldsymbol H_R\boldsymbol V^{1/2}\)
are \(\rho_1,\ldots,\rho_m\), then
\[
\frac12
\boldsymbol Z^\top
\boldsymbol H_R
\boldsymbol Z
\overset{d}{=}
\frac12
\sum_{j=1}^{m}
\rho_j\chi^2_{1,j},
\]
where the \(\chi^2_{1,j}\) variables are independent.

To justify convergence of expectations, let
\[
\boldsymbol{X}_n
=
\sqrt{n}\,
(\widehat{\boldsymbol{\theta}}_n-\boldsymbol{\theta}_0).
\]
By Theorem~\ref{thm:multiclass-risk}, there exist $\varepsilon>0$ and $C<\infty$ such that
\[
0
\leq
R(\boldsymbol{\theta}_0+\boldsymbol{h})-R^\ast
\leq
C\|\boldsymbol{h}\|^2
\]
whenever $\|\boldsymbol{h}\|\leq\varepsilon$. Hence, on the event
$\{\|\widehat{\boldsymbol{\theta}}_n-\boldsymbol{\theta}_0\|
\leq\varepsilon\}$,
\[
n
\left\{
R(\widehat{\boldsymbol{\theta}}_n)-R^\ast
\right\}
\leq
C\|\boldsymbol{X}_n\|^2.
\]
Let
\[
Y_n
=
n
\left\{
R(\widehat{\boldsymbol{\theta}}_n)-R^\ast
\right\},
\qquad
A_n
=
\left\{
\|\widehat{\boldsymbol{\theta}}_n-\boldsymbol{\theta}_0\|
\leq\varepsilon
\right\}.
\]
On \(A_n\),
\[
0\leq
Y_n\mathbf{1}_{A_n}
\leq
C\|\boldsymbol{X}_n\|^2.
\]
Since
\[
\sup_n E\|\boldsymbol{X}_n\|^{2+\delta}<\infty,
\]
the family
\(\{\|\boldsymbol{X}_n\|^2\}\) is uniformly integrable, and hence so is
\(\{Y_n\mathbf{1}_{A_n}\}\).

On \(A_n^c\), since
\[
0
\leq
R(\widehat{\boldsymbol{\theta}}_n)-R^\ast
\leq
1,
\]
we have
\[
E\left(
Y_n\mathbf{1}_{A_n^c}
\right)
\leq
n\,\Pr(A_n^c).
\]
Moreover, Markov's inequality gives
\[
n\,\Pr(A_n^c)
=
n\,
\Pr
\left\{
\|\boldsymbol{X}_n\|
>
\varepsilon\sqrt{n}
\right\}
\leq
\frac{
\sup_m E\|\boldsymbol{X}_m\|^{2+\delta}
}{
\varepsilon^{2+\delta}n^{\delta/2}
}
\longrightarrow
0.
\]
Thus
\[
Y_n\mathbf{1}_{A_n^c}
\longrightarrow
0
\qquad
\text{in }L^1.
\]
Since an \(L^1\)-convergent family is uniformly integrable,
\(\{Y_n\mathbf{1}_{A_n^c}\}\) is uniformly integrable. Consequently,
since
\[
Y_n
=
Y_n\mathbf{1}_{A_n}
+
Y_n\mathbf{1}_{A_n^c},
\]
the family \(\{Y_n\}\) is uniformly integrable.
Combining this with the convergence in distribution in
\eqref{supp:eq:risk-limit-corollary} yields
\begin{align}
n
\left[
E
\left\{
R(\widehat{\boldsymbol{\theta}}_n)
\right\}
-
R^\ast
\right]
&\longrightarrow
\frac{1}{2}
E
\left(
\boldsymbol{Z}^{\top}
\boldsymbol{H}_{R}
\boldsymbol{Z}
\right)
\nonumber\\
&=
\frac{1}{2}
\operatorname{tr}
\left(
\boldsymbol{H}_{R}\boldsymbol{V}
\right),
\label{supp:eq:expected-risk-corollary}
\end{align}
where the final equality follows from
\(E(\boldsymbol{Z}\boldsymbol{Z}^{\top})=\boldsymbol{V}\).
Equivalently,
\[
E
\left\{
R(\widehat{\boldsymbol{\theta}}_n)
\right\}
-
R^\ast
=
\frac{1}{2n}
\operatorname{tr}
\left(
\boldsymbol{H}_{R}\boldsymbol{V}
\right)
+
o(n^{-1}).
\]
This proves Corollary~\ref{cor:asymptotic-risk}.
\hfill\(\square\)

\section{Proof of the classification-weighted spectral criterion}
\label{supp:spectral-proof}

This section proves Theorem~\ref{thm:spectral}.

\medskip
\noindent\textit\noindent\textit{Proof of Theorem~\ref{thm:spectral}.}
Since
\(\boldsymbol{A}\succ\boldsymbol{0}\) and
\(\boldsymbol{J}\succ\boldsymbol{0}\), the relative information matrix
\[
\boldsymbol{C}
=
\boldsymbol{A}^{-1/2}
\boldsymbol{J}
\boldsymbol{A}^{-1/2}
\]
is symmetric positive definite. Hence
\[
\boldsymbol{J}
=
\boldsymbol{A}^{1/2}
\boldsymbol{C}
\boldsymbol{A}^{1/2},
\qquad
\boldsymbol{J}^{-1}
=
\boldsymbol{A}^{-1/2}
\boldsymbol{C}^{-1}
\boldsymbol{A}^{-1/2},
\]
and therefore
\begin{equation}
\boldsymbol{A}^{-1}
-
\boldsymbol{J}^{-1}
=
\boldsymbol{A}^{-1/2}
\left(
\boldsymbol{I}
-
\boldsymbol{C}^{-1}
\right)
\boldsymbol{A}^{-1/2}.
\label{supp:eq:AminusJ}
\end{equation}

Substituting \eqref{supp:eq:AminusJ} into
\[
\Delta_R
=
\operatorname{tr}
\left[
\boldsymbol{H}_{R}
\left(
\boldsymbol{A}^{-1}
-
\boldsymbol{J}^{-1}
\right)
\right]
\]
and using cyclic invariance of the trace gives
\begin{equation}
\Delta_R
=
\operatorname{tr}
\left[
\boldsymbol{W}
\left(
\boldsymbol{I}
-
\boldsymbol{C}^{-1}
\right)
\right],
\qquad
\boldsymbol{W}
=
\boldsymbol{A}^{-1/2}
\boldsymbol{H}_{R}
\boldsymbol{A}^{-1/2}.
\label{supp:eq:Delta-WC}
\end{equation}
Because
\(\boldsymbol{H}_{R}\succeq\boldsymbol{0}\),
the matrix \(\boldsymbol{W}\) is also positive semidefinite.

Let
\[
\boldsymbol{C}
=
\boldsymbol{Q}
\boldsymbol{\Lambda}
\boldsymbol{Q}^{\top},
\qquad
\boldsymbol{\Lambda}
=
\operatorname{diag}
(\lambda_1,\ldots,\lambda_r),
\]
where
\(\boldsymbol{Q}^{\top}\boldsymbol{Q}=\boldsymbol{I}\) and
\(\lambda_j>0\) for all \(j\). Then
\[
\boldsymbol{C}^{-1}
=
\boldsymbol{Q}
\boldsymbol{\Lambda}^{-1}
\boldsymbol{Q}^{\top}.
\]
Using \eqref{supp:eq:Delta-WC} and cyclic invariance once more,
\begin{align}
\Delta_R
&=
\operatorname{tr}
\left[
\boldsymbol{W}
\boldsymbol{Q}
\left(
\boldsymbol{I}
-
\boldsymbol{\Lambda}^{-1}
\right)
\boldsymbol{Q}^{\top}
\right]
\nonumber\\
&=
\operatorname{tr}
\left[
\boldsymbol{Q}^{\top}
\boldsymbol{W}
\boldsymbol{Q}
\left(
\boldsymbol{I}
-
\boldsymbol{\Lambda}^{-1}
\right)
\right].
\label{supp:eq:Delta-diagonal}
\end{align}
Since
\(\boldsymbol{I}-\boldsymbol{\Lambda}^{-1}\) is diagonal, only the
diagonal elements of
\(\boldsymbol{Q}^{\top}\boldsymbol{W}\boldsymbol{Q}\)
contribute to the trace. Writing
\[
w_j
=
\boldsymbol{q}_j^{\top}
\boldsymbol{W}
\boldsymbol{q}_j
\geq0,
\]
where \(\boldsymbol{q}_j\) is the \(j\)th column of
\(\boldsymbol{Q}\), we obtain
\begin{equation}
\Delta_R
=
\sum_{j=1}^{r}
w_j
\left(
1-\frac{1}{\lambda_j}
\right)
=
\sum_{j=1}^{r}
w_j
\frac{\lambda_j-1}{\lambda_j}.
\label{supp:eq:spectral-Delta-proof}
\end{equation}
This proves the spectral representation in Theorem~\ref{thm:spectral}.

Separating the terms corresponding to information gains and losses gives
\begin{equation}
\Delta_R
=
\sum_{\lambda_j>1}
w_j
\frac{\lambda_j-1}{\lambda_j}
-
\sum_{\lambda_j<1}
w_j
\frac{1-\lambda_j}{\lambda_j}.
\label{supp:eq:spectral-gain-loss}
\end{equation}
Terms for which \(\lambda_j=1\) vanish. Since every
\(w_j\geq0\), equation
\eqref{supp:eq:spectral-gain-loss} shows that
\(\Delta_R>0\) if and only if
\[
\sum_{\lambda_j>1}
w_j
\frac{\lambda_j-1}{\lambda_j}
>
\sum_{\lambda_j<1}
w_j
\frac{1-\lambda_j}{\lambda_j},
\]
which is the classification-weighted gain--loss criterion stated in
Theorem~\ref{thm:spectral}.

If an eigenvalue \(\lambda\) has multiplicity greater than one, the individual
eigenvectors within its eigenspace are not uniquely determined. The total
contribution of that eigenspace to \(\Delta_R\), however, is invariant. If
\(\boldsymbol{P}_{\lambda}\) denotes the orthogonal projector onto the
eigenspace associated with \(\lambda\), then
\[
\sum_{j:\lambda_j=\lambda}
w_j
=
\sum_{j:\lambda_j=\lambda}
\boldsymbol{q}_j^{\top}
\boldsymbol{W}
\boldsymbol{q}_j
=
\operatorname{tr}
\left(
\boldsymbol{W}\boldsymbol{P}_{\lambda}
\right).
\]
Hence the total contribution of this eigenspace is
\begin{equation}
\left(
1-\frac{1}{\lambda}
\right)
\operatorname{tr}
\left(
\boldsymbol{W}
\boldsymbol{P}_{\lambda}
\right),
\label{supp:eq:repeated-eigenvalue}
\end{equation}
which is independent of the particular orthonormal basis chosen within the
eigenspace.

This completes the proof.
\hfill\(\square\)

\section{Proof of the local departure from MCAR result}
\label{supp:local-MCAR-proof}

This section proves Proposition~\ref{prop:local-MCAR}. Throughout this section,
expectations are taken under the true data-generating parameter
\(\boldsymbol{\theta}_0\).

\medskip
\noindent\textit{Proof of Proposition~\ref{prop:local-MCAR}.}
Write
\[
U
=
U_{\boldsymbol{\theta}_0}(\boldsymbol{Y}),
\qquad
\boldsymbol{G}
=
\nabla_{\boldsymbol{\theta}}
U_{\boldsymbol{\theta}}(\boldsymbol{Y})
\big|_{\boldsymbol{\theta}=\boldsymbol{\theta}_0},
\]
and let
\[
q_t
=
\operatorname{expit}
\left\{
\alpha(t)+tU
\right\}.
\]
The calibration condition is
\begin{equation}
E(q_t)
=
\gamma,
\qquad
0<\gamma<1.
\label{supp:eq:local-calibration}
\end{equation}
To justify the local dependence of the calibrating intercept on \(t\), define
\[
F(a,t)
=
E
\left[
\operatorname{expit}\{a+tU\}
\right]
-
\gamma.
\]
At \(t=0\), the equation \(F(a,0)=0\) gives
\[
\operatorname{expit}(a)=\gamma,
\]
and hence
\begin{equation}
\alpha(0)
=
\operatorname{logit}(\gamma).
\label{supp:eq:alpha-zero}
\end{equation}
Moreover,
\[
\left.
\frac{\partial F(a,t)}{\partial a}
\right|_{(a,t)=(\alpha(0),0)}
=
\gamma(1-\gamma)
>
0.
\]
Under the stated regularity conditions, the implicit-function theorem
therefore yields a locally differentiable solution \(a=\alpha(t)\) of
\(F\{\alpha(t),t\}=0\) in a neighborhood of \(t=0\).

Differentiating the calibration identity with respect to \(t\), with
differentiation under the expectation justified by the stated regularity
conditions, gives
\[
0
=
E
\left[
q_t(1-q_t)
\left\{
\alpha'(t)+U
\right\}
\right].
\]
Evaluating at \(t=0\), where \(q_0=\gamma\), yields
\[
0
=
\gamma(1-\gamma)
\left[
\alpha'(0)+E(U)
\right],
\]
and therefore
\begin{equation}
\alpha'(0)
=
-
E(U).
\label{supp:eq:alpha-prime}
\end{equation}
A first-order Taylor expansion of \(q_t\) around \(t=0\) now gives
\begin{align}
q_t
&=
q_0
+
t
\left.
\frac{d q_t}{dt}
\right|_{t=0}
+
O(t^2)
\nonumber\\
&=
\gamma
+
t\gamma(1-\gamma)
\left\{
\alpha'(0)+U
\right\}
+
O(t^2)
\nonumber\\
&=
\gamma
+
t\gamma(1-\gamma)
\left\{
U-E(U)
\right\}
+
O(t^2).
\label{supp:eq:q-local-proof}
\end{align}
This establishes the first expansion in Proposition~\ref{prop:local-MCAR}.

For the label-information loss, write
\[
\boldsymbol{I}_{Z\mid\boldsymbol{Y}}
=
\boldsymbol{I}_{Z\mid\boldsymbol{Y}}
(\boldsymbol{\theta}_0;\boldsymbol{Y}).
\]
Along the sequence of population experiments only the missing-label mechanism
varies with \(t\), whereas the data-generating classification model remains
fixed at \(\boldsymbol{\theta}_0\). Thus
\[
\boldsymbol{D}(t)
=
E
\left[
q_t
\boldsymbol{I}_{Z\mid\boldsymbol{Y}}
\right].
\]
Under the stated regularity conditions, differentiation may be passed under
the expectation. Since
\[
\left.
\frac{d q_t}{dt}
\right|_{t=0}
=
\gamma(1-\gamma)
\left\{
U-E(U)
\right\},
\]
it follows that
\begin{equation}
\boldsymbol{D}'(0)
=
\gamma(1-\gamma)
E
\left[
\left\{
U-E(U)
\right\}
\boldsymbol{I}_{Z\mid\boldsymbol{Y}}
\right].
\label{supp:eq:Dprime-proof}
\end{equation}

We next consider the efficient information supplied by the missing-label
indicators. For likelihood inference, the intercept and uncertainty slope are
ordinary nuisance parameters. Write
\[
\boldsymbol{\xi}
=
(\xi_0,\xi_1)^{\top},
\qquad
q(\boldsymbol{y};
\boldsymbol{\theta},\boldsymbol{\xi})
=
\operatorname{expit}
\left\{
\xi_0+\xi_1
U_{\boldsymbol{\theta}}(\boldsymbol{y})
\right\}.
\]
Along the population path considered here, the true nuisance value is
\[
\boldsymbol{\xi}(t)
=
\{\alpha(t),t\}^{\top}.
\]
At
\((\boldsymbol{\theta}_0,\boldsymbol{\xi}(t))\),
define
\[
\boldsymbol{X}
=
\begin{pmatrix}
1\\
U
\end{pmatrix}.
\]
The derivatives of the missingness probability are
\begin{equation}
q_{\boldsymbol{\theta}}
=
t\,q_t(1-q_t)\boldsymbol{G},
\qquad
q_{\boldsymbol{\xi}}
=
q_t(1-q_t)\boldsymbol{X}.
\label{supp:eq:q-derivatives-local}
\end{equation}
Therefore the Bernoulli information blocks of Theorem~\ref{thm:information-decomposition} are
\begin{align}
\boldsymbol{B}_{\theta\theta}(t)
&=
t^2
E
\left[
q_t(1-q_t)
\boldsymbol{G}\boldsymbol{G}^{\top}
\right],
\label{supp:eq:Btt-local}\\
\boldsymbol{B}_{\theta\xi}(t)
&=
t
E
\left[
q_t(1-q_t)
\boldsymbol{G}\boldsymbol{X}^{\top}
\right],
\label{supp:eq:Btx-local}\\
\boldsymbol{B}_{\xi\xi}(t)
&=
E
\left[
q_t(1-q_t)
\boldsymbol{X}\boldsymbol{X}^{\top}
\right].
\label{supp:eq:Bxx-local}
\end{align}

As \(t\to0\),
\[
q_t(1-q_t)
\longrightarrow
\gamma(1-\gamma)
\qquad\text{almost surely},
\]
and
\[
0
\leq
q_t(1-q_t)
\leq
\frac{1}{4}.
\]
The assumed second-moment conditions imply
\[
E\|\boldsymbol{G}\|^2<\infty,
\qquad
E\|\boldsymbol{X}\|^2<\infty,
\]
and, by the Cauchy--Schwarz inequality,
\[
E\left(
\|\boldsymbol{G}\|\,\|\boldsymbol{X}\|
\right)
<
\infty.
\]
Dominated convergence therefore gives
\begin{align*}
E
\left[
q_t(1-q_t)
\boldsymbol{G}\boldsymbol{G}^{\top}
\right]
&=
\gamma(1-\gamma)
E
\left[
\boldsymbol{G}\boldsymbol{G}^{\top}
\right]
+
o(1),
\\
E
\left[
q_t(1-q_t)
\boldsymbol{G}\boldsymbol{X}^{\top}
\right]
&=
\gamma(1-\gamma)
E
\left[
\boldsymbol{G}\boldsymbol{X}^{\top}
\right]
+
o(1),
\\
E
\left[
q_t(1-q_t)
\boldsymbol{X}\boldsymbol{X}^{\top}
\right]
&=
\gamma(1-\gamma)
E
\left[
\boldsymbol{X}\boldsymbol{X}^{\top}
\right]
+
o(1).
\end{align*}
Consequently,
\begin{align}
\boldsymbol{B}_{\theta\theta}(t)
&=
t^2\gamma(1-\gamma)
E
\left[
\boldsymbol{G}\boldsymbol{G}^{\top}
\right]
+
o(t^2),
\label{supp:eq:Btt-expand}\\
\boldsymbol{B}_{\theta\xi}(t)
&=
t\gamma(1-\gamma)
E
\left[
\boldsymbol{G}\boldsymbol{X}^{\top}
\right]
+
o(t),
\label{supp:eq:Btx-expand}\\
\boldsymbol{B}_{\xi\xi}(t)
&=
\gamma(1-\gamma)
E
\left[
\boldsymbol{X}\boldsymbol{X}^{\top}
\right]
+
o(1).
\label{supp:eq:Bxx-expand}
\end{align}
The assumption
\(\operatorname{Var}(U)>0\) implies that
\(E(\boldsymbol{X}\boldsymbol{X}^{\top})\) is positive definite. Indeed, for
\(\boldsymbol{a}=(a_0,a_1)^{\top}\neq\boldsymbol{0}\),
\[
\boldsymbol{a}^{\top}
E
\left[
\boldsymbol{X}\boldsymbol{X}^{\top}
\right]
\boldsymbol{a}
=
E
\left[
(a_0+a_1U)^2
\right]
>0,
\]
because a nonzero affine function of a nondegenerate random variable cannot
vanish almost surely. Hence
\(\boldsymbol{B}_{\xi\xi}(t)\) is nonsingular for all sufficiently small
\(t\), and
\begin{equation}
\boldsymbol{B}_{\xi\xi}(t)^{-1}
=
\frac{1}{\gamma(1-\gamma)}
E
\left[
\boldsymbol{X}\boldsymbol{X}^{\top}
\right]^{-1}
+
o(1).
\label{supp:eq:Bxx-inverse}
\end{equation}

Using the efficient-information formula
\[
\boldsymbol{I}_{M}^{\mathrm{eff}}(t)
=
\boldsymbol{B}_{\theta\theta}(t)
-
\boldsymbol{B}_{\theta\xi}(t)
\boldsymbol{B}_{\xi\xi}(t)^{-1}
\boldsymbol{B}_{\xi\theta}(t),
\]
and substituting
\eqref{supp:eq:Btt-expand}--\eqref{supp:eq:Bxx-inverse}, we obtain
\begin{equation}
\boldsymbol{I}_{M}^{\mathrm{eff}}(t)
=
t^2\gamma(1-\gamma)
\boldsymbol{\mathcal V}_{U}
+
o(t^2),
\label{supp:eq:IM-local-proof}
\end{equation}
where
\begin{align}
\boldsymbol{\mathcal V}_{U}
={}&
E
\left[
\boldsymbol{G}\boldsymbol{G}^{\top}
\right]
-
E
\left[
\boldsymbol{G}\boldsymbol{X}^{\top}
\right]
E
\left[
\boldsymbol{X}\boldsymbol{X}^{\top}
\right]^{-1}
E
\left[
\boldsymbol{X}\boldsymbol{G}^{\top}
\right].
\label{supp:eq:VU-proof}
\end{align}

To verify that
\(\boldsymbol{\mathcal V}_{U}\succeq\boldsymbol{0}\), define
\[
\boldsymbol{C}
=
E
\left[
\boldsymbol{G}\boldsymbol{X}^{\top}
\right]
E
\left[
\boldsymbol{X}\boldsymbol{X}^{\top}
\right]^{-1}.
\]
Then
\begin{align}
E
\left[
(\boldsymbol{G}-\boldsymbol{C}\boldsymbol{X})
(\boldsymbol{G}-\boldsymbol{C}\boldsymbol{X})^{\top}
\right]
=
\boldsymbol{\mathcal V}_{U}.
\label{supp:eq:VU-residual}
\end{align}
The left-hand side is a second-moment matrix and is therefore positive
semidefinite. Hence
\[
\boldsymbol{\mathcal V}_{U}
\succeq
\boldsymbol{0}.
\]

It remains to obtain the derivative of the classification advantage. Let
\[
\boldsymbol{J}(t)
=
\boldsymbol{A}
-
\boldsymbol{D}(t)
+
\boldsymbol{I}_{M}^{\mathrm{eff}}(t)
\]
denote the efficient information under the partially classified experiment.
At \(t=0\),
\[
\boldsymbol{J}(0)
=
\boldsymbol{A}
-
\gamma
E
\left[
\boldsymbol{I}_{Z\mid\boldsymbol{Y}}
\right]
=
\boldsymbol{J}_0.
\]
Since
\(\boldsymbol{I}_{M}^{\mathrm{eff}}(t)=O(t^2)\),
\[
\left.
\frac{d}{dt}
\boldsymbol{I}_{M}^{\mathrm{eff}}(t)
\right|_{t=0}
=
\boldsymbol{0},
\]
and therefore
\begin{equation}
\boldsymbol{J}'(0)
=
-
\boldsymbol{D}'(0).
\label{supp:eq:Jprime-local}
\end{equation}

Now
\[
\Delta_R(t)
=
\operatorname{tr}
\left[
\boldsymbol{H}_{R}
\left\{
\boldsymbol{A}^{-1}
-
\boldsymbol{J}(t)^{-1}
\right\}
\right].
\]
Using the matrix derivative identity
\[
\frac{d}{dt}
\boldsymbol{J}(t)^{-1}
=
-
\boldsymbol{J}(t)^{-1}
\boldsymbol{J}'(t)
\boldsymbol{J}(t)^{-1},
\]
we obtain
\begin{align}
\Delta_R'(0)
&=
\operatorname{tr}
\left[
\boldsymbol{H}_{R}
\boldsymbol{J}_0^{-1}
\boldsymbol{J}'(0)
\boldsymbol{J}_0^{-1}
\right]
\nonumber\\
&=
-
\operatorname{tr}
\left[
\boldsymbol{H}_{R}
\boldsymbol{J}_0^{-1}
\boldsymbol{D}'(0)
\boldsymbol{J}_0^{-1}
\right],
\label{supp:eq:Delta-prime-proof}
\end{align}
where the second equality follows from
\eqref{supp:eq:Jprime-local}. This is the final assertion of the proposition.

Thus the label-information loss may change at first order in \(t\), whereas
the efficient information contributed by the missing-label mechanism begins
only at order \(t^2\). This completes the proof of Proposition~\ref{prop:local-MCAR}.
\hfill\(\square\)

\section{Quadratic discriminant calculations}
\label{supp:qda-calculations}

This section gives the score, Fisher-information, and uncertainty derivatives
used for the three-class QDA illustration in Section~\ref{sec:qda}. The parameterization is
\[
\boldsymbol{\theta}
=
\left(
\boldsymbol{\alpha}^{\top},
\boldsymbol{\mu}_1^{\top},
\boldsymbol{\mu}_2^{\top},
\boldsymbol{\mu}_3^{\top},
\boldsymbol{\sigma}_1^{\top},
\boldsymbol{\sigma}_2^{\top},
\boldsymbol{\sigma}_3^{\top}
\right)^{\top},
\]
where
\(\boldsymbol{\alpha}=(\alpha_1,\alpha_2)^{\top}\) parameterizes the class
probabilities relative to class 3 and
\(\boldsymbol{\sigma}_k=\operatorname{vech}(\boldsymbol{\Sigma}_k)\).

\medskip
\noindent\textit{Class probabilities and QDA discriminants.}
Using class 3 as the baseline, let
\[
\alpha_1=\log\frac{\pi_1}{\pi_3},
\qquad
\alpha_2=\log\frac{\pi_2}{\pi_3}.
\]
Then, with
\(D_\alpha=1+\exp(\alpha_1)+\exp(\alpha_2)\),
\[
\pi_1=\frac{\exp(\alpha_1)}{D_\alpha},
\qquad
\pi_2=\frac{\exp(\alpha_2)}{D_\alpha},
\qquad
\pi_3=\frac{1}{D_\alpha}.
\]
If
\(\boldsymbol{e}_1=(1,0)^{\top}\),
\(\boldsymbol{e}_2=(0,1)^{\top}\), and
\(\boldsymbol{e}_3=(0,0)^{\top}\), then
\begin{equation}
\nabla_{\boldsymbol{\alpha}}
\log\frac{\pi_k}{\pi_l}
=
\boldsymbol{e}_k-\boldsymbol{e}_l.
\label{supp:eq:prior-ratio-gradient}
\end{equation}
The complete-classification score and information for
\(\boldsymbol{\alpha}\) are
\[
\boldsymbol{S}_{\alpha}
=
\begin{pmatrix}
\mathbb{I}(Z=1)-\pi_1\\
\mathbb{I}(Z=2)-\pi_2
\end{pmatrix},
\qquad
\boldsymbol{I}_{\alpha}
=
\begin{pmatrix}
\pi_1(1-\pi_1) & -\pi_1\pi_2\\
-\pi_1\pi_2 & \pi_2(1-\pi_2)
\end{pmatrix}.
\]

For
\(\boldsymbol{x}_k=\boldsymbol{y}-\boldsymbol{\mu}_k\), the pairwise QDA
discriminant is
\begin{align}
d_{kl}(\boldsymbol{y})
={}&
\log\frac{\pi_k}{\pi_l}
-
\frac{1}{2}
\log\frac{|\boldsymbol{\Sigma}_k|}
{|\boldsymbol{\Sigma}_l|}
-
\frac{1}{2}
\boldsymbol{x}_k^{\top}
\boldsymbol{\Sigma}_k^{-1}
\boldsymbol{x}_k
\nonumber\\
&+
\frac{1}{2}
\boldsymbol{x}_l^{\top}
\boldsymbol{\Sigma}_l^{-1}
\boldsymbol{x}_l.
\label{supp:eq:qda-discriminant}
\end{align}
Its spatial derivative is
\begin{equation}
\nabla_{\boldsymbol{y}}d_{kl}(\boldsymbol{y})
=
-\boldsymbol{\Sigma}_k^{-1}\boldsymbol{x}_k
+
\boldsymbol{\Sigma}_l^{-1}\boldsymbol{x}_l,
\label{supp:eq:qda-spatial-gradient}
\end{equation}
while the derivatives with respect to the class means are
\[
\nabla_{\boldsymbol{\mu}_k}d_{kl}
=
\boldsymbol{\Sigma}_k^{-1}\boldsymbol{x}_k,
\qquad
\nabla_{\boldsymbol{\mu}_l}d_{kl}
=
-\boldsymbol{\Sigma}_l^{-1}\boldsymbol{x}_l.
\]
All mean blocks corresponding to classes other than \(k\) and \(l\) are zero.

For the covariance terms, the standard identities
\[
d\log|\boldsymbol{\Sigma}|
=
\operatorname{tr}
\left(
\boldsymbol{\Sigma}^{-1}d\boldsymbol{\Sigma}
\right),
\qquad
d\boldsymbol{\Sigma}^{-1}
=
-\boldsymbol{\Sigma}^{-1}
(d\boldsymbol{\Sigma})
\boldsymbol{\Sigma}^{-1}
\]
give, under symmetric covariance perturbations,
\begin{align}
\frac{\partial d_{kl}}{\partial\boldsymbol{\Sigma}_k}
&=
\frac{1}{2}
\left[
\boldsymbol{\Sigma}_k^{-1}
\boldsymbol{x}_k\boldsymbol{x}_k^{\top}
\boldsymbol{\Sigma}_k^{-1}
-
\boldsymbol{\Sigma}_k^{-1}
\right],
\label{supp:eq:qda-Sigma-k}\\
\frac{\partial d_{kl}}{\partial\boldsymbol{\Sigma}_l}
&=
-\frac{1}{2}
\left[
\boldsymbol{\Sigma}_l^{-1}
\boldsymbol{x}_l\boldsymbol{x}_l^{\top}
\boldsymbol{\Sigma}_l^{-1}
-
\boldsymbol{\Sigma}_l^{-1}
\right].
\label{supp:eq:qda-Sigma-l}
\end{align}
If
\(\boldsymbol{\sigma}_k=\operatorname{vech}(\boldsymbol{\Sigma}_k)\)
and \(\boldsymbol{D}_p\) denotes the duplication matrix, then
\begin{equation}
\nabla_{\boldsymbol{\sigma}_k}d_{kl}
=
\boldsymbol{D}_p^{\top}
\operatorname{vec}
\left(
\frac{\partial d_{kl}}
{\partial\boldsymbol{\Sigma}_k}
\right),
\label{supp:eq:qda-vech-gradient}
\end{equation}
with the analogous expression for class \(l\). Together with
\eqref{supp:eq:prior-ratio-gradient}, these derivatives form
\[
\boldsymbol{a}_{kl}(\boldsymbol{y})
=
\nabla_{\boldsymbol{\theta}}
d_{kl}(\boldsymbol{y}),
\]
the log-contrast boundary-sensitivity vector. On an active face
\(\mathcal{F}_{kl}\), where
\[
c_{kl}(\boldsymbol{s})
=
r_k(\boldsymbol{s})
=
r_l(\boldsymbol{s}),
\]
the relation between the density contrast
\(g_{kl}=r_k-r_l\) and the log contrast \(d_{kl}\) gives
\[
\nabla_{\boldsymbol{\theta}}g_{kl}(\boldsymbol{s})
=
c_{kl}(\boldsymbol{s})
\boldsymbol{a}_{kl}(\boldsymbol{s}),
\qquad
\nabla_{\boldsymbol{y}}g_{kl}(\boldsymbol{s})
=
c_{kl}(\boldsymbol{s})
\nabla_{\boldsymbol{y}}d_{kl}(\boldsymbol{s}).
\]
Hence the corresponding surface-integral contribution is
\[
\boldsymbol{H}_{kl}
=
\int_{\mathcal{F}_{kl}}
\frac{
c_{kl}(\boldsymbol{s})
}{
\left\|
\nabla_{\boldsymbol{y}}
d_{kl}(\boldsymbol{s})
\right\|
}
\boldsymbol{a}_{kl}(\boldsymbol{s})
\boldsymbol{a}_{kl}(\boldsymbol{s})^{\top}
\,dS(\boldsymbol{s}).
\]

\medskip
\noindent\textit{Complete-classification Fisher information.}
For class \(k\), the contribution to the mean score is
\[
\boldsymbol{S}_{\mu_k}
=
\mathbb{I}(Z=k)
\boldsymbol{\Sigma}_k^{-1}
(\boldsymbol{Y}-\boldsymbol{\mu}_k),
\]
and therefore
\begin{equation}
\boldsymbol{I}_{\mu_k}
=
E
\left[
\boldsymbol{S}_{\mu_k}
\boldsymbol{S}_{\mu_k}^{\top}
\right]
=
\pi_k\boldsymbol{\Sigma}_k^{-1}.
\label{supp:eq:qda-mean-information}
\end{equation}
For
\(\boldsymbol{\sigma}_k=\operatorname{vech}(\boldsymbol{\Sigma}_k)\),
the covariance-information block is
\begin{equation}
\boldsymbol{I}_{\sigma_k}
=
\frac{\pi_k}{2}
\boldsymbol{D}_p^{\top}
\left(
\boldsymbol{\Sigma}_k^{-1}
\otimes
\boldsymbol{\Sigma}_k^{-1}
\right)
\boldsymbol{D}_p.
\label{supp:eq:qda-cov-information}
\end{equation}
The mean and covariance scores within a Gaussian class are orthogonal because
the relevant centered third moments vanish. Scores for distributional
parameters belonging to different classes are orthogonal because their class
indicators are mutually exclusive. The prior score is also orthogonal to the
within-class mean and covariance scores, since each latter score has
conditional mean zero given \(Z\). Consequently, in the parameterization above,
\begin{equation}
\boldsymbol{I}_{\mathrm{CC}}
=
\operatorname{blockdiag}
\left(
\boldsymbol{I}_{\alpha},
\boldsymbol{I}_{\mu_1},
\boldsymbol{I}_{\mu_2},
\boldsymbol{I}_{\mu_3},
\boldsymbol{I}_{\sigma_1},
\boldsymbol{I}_{\sigma_2},
\boldsymbol{I}_{\sigma_3}
\right).
\label{supp:eq:qda-ICC}
\end{equation}

\medskip
\noindent\textit{Posterior probabilities and uncertainty derivatives.}
Let
\[
r_k(\boldsymbol{y};\boldsymbol{\theta})
=
\pi_k f_k(\boldsymbol{y}),
\qquad
\tau_k(\boldsymbol{y})
=
\frac{r_k(\boldsymbol{y})}
{\sum_{j=1}^{3}r_j(\boldsymbol{y})},
\]
and define
\[
\boldsymbol{s}_k(\boldsymbol{y})
=
\nabla_{\boldsymbol{\theta}}
\log r_k(\boldsymbol{y}),
\qquad
\overline{\boldsymbol{s}}(\boldsymbol{y})
=
\sum_{j=1}^{3}
\tau_j(\boldsymbol{y})
\boldsymbol{s}_j(\boldsymbol{y}).
\]
Differentiating the normalized posterior probability gives
\begin{equation}
\nabla_{\boldsymbol{\theta}}
\tau_k(\boldsymbol{y})
=
\tau_k(\boldsymbol{y})
\left\{
\boldsymbol{s}_k(\boldsymbol{y})
-
\overline{\boldsymbol{s}}(\boldsymbol{y})
\right\}.
\label{supp:eq:posterior-gradient}
\end{equation}

For Shannon entropy
\[
H(\boldsymbol{y})
=
-\sum_{k=1}^{3}
\tau_k(\boldsymbol{y})
\log\tau_k(\boldsymbol{y}),
\]
the identity
\(\sum_k\nabla_{\boldsymbol{\theta}}\tau_k(\boldsymbol{y})
=\boldsymbol{0}\)
and \eqref{supp:eq:posterior-gradient} yield
\begin{equation}
\nabla_{\boldsymbol{\theta}}H(\boldsymbol{y})
=
-
\sum_{k=1}^{3}
\tau_k(\boldsymbol{y})
\log\tau_k(\boldsymbol{y})
\left\{
\boldsymbol{s}_k(\boldsymbol{y})
-
\overline{\boldsymbol{s}}(\boldsymbol{y})
\right\}.
\label{supp:eq:entropy-gradient}
\end{equation}
Thus, for the normalized entropy uncertainty
\(U_H=H/\log 3\),
\begin{equation}
\nabla_{\boldsymbol{\theta}}U_H(\boldsymbol{y})
=
\frac{1}{\log 3}
\nabla_{\boldsymbol{\theta}}H(\boldsymbol{y}).
\label{supp:eq:normalized-entropy-gradient}
\end{equation}

For Gini uncertainty
\[
G(\boldsymbol{y})
=
1-
\sum_{k=1}^{3}
\tau_k(\boldsymbol{y})^2,
\]
we similarly obtain
\begin{align}
\nabla_{\boldsymbol{\theta}}G(\boldsymbol{y})
&=
-2
\sum_{k=1}^{3}
\tau_k(\boldsymbol{y})
\nabla_{\boldsymbol{\theta}}\tau_k(\boldsymbol{y})
\nonumber\\
&=
-2
\sum_{k=1}^{3}
\tau_k(\boldsymbol{y})^2
\left\{
\boldsymbol{s}_k(\boldsymbol{y})
-
\overline{\boldsymbol{s}}(\boldsymbol{y})
\right\}.
\label{supp:eq:gini-gradient}
\end{align}
Since \(U_G=(3/2)G\) for three classes,
\begin{equation}
\nabla_{\boldsymbol{\theta}}U_G(\boldsymbol{y})
=
-3
\sum_{k=1}^{3}
\tau_k(\boldsymbol{y})^2
\left\{
\boldsymbol{s}_k(\boldsymbol{y})
-
\overline{\boldsymbol{s}}(\boldsymbol{y})
\right\}.
\label{supp:eq:normalized-gini-gradient}
\end{equation}

Finally, for the logistic missing-label mechanism
\[
q(\boldsymbol{y};\boldsymbol{\theta},\boldsymbol{\xi})
=
\operatorname{expit}
\left\{
\xi_0+\xi_1
U_{\boldsymbol{\theta}}(\boldsymbol{y})
\right\},
\qquad
\boldsymbol{\xi}
=
(\xi_0,\xi_1)^{\top},
\]
the required derivatives are
\begin{align}
q_{\boldsymbol{\theta}}
&=
q(1-q)\,
\xi_1
\nabla_{\boldsymbol{\theta}}
U_{\boldsymbol{\theta}}(\boldsymbol{y}),
\label{supp:eq:q-theta-QDA}\\
q_{\boldsymbol{\xi}}
&=
q(1-q)
\begin{pmatrix}
1\\
U_{\boldsymbol{\theta}}(\boldsymbol{y})
\end{pmatrix}.
\label{supp:eq:q-xi-QDA}
\end{align}
Substitution of
\eqref{supp:eq:q-theta-QDA}--\eqref{supp:eq:q-xi-QDA}
into the Bernoulli information formula of
Theorem~\ref{thm:information-decomposition} gives
\begin{align}
\boldsymbol{B}_{\theta\theta}
&=
E
\left[
q(1-q)\xi_1^2
\,
\nabla_{\boldsymbol{\theta}}U_{\boldsymbol{\theta}}(\boldsymbol{Y})
\nabla_{\boldsymbol{\theta}}U_{\boldsymbol{\theta}}(\boldsymbol{Y})^{\top}
\right],
\label{supp:eq:Btt-QDA}\\
\boldsymbol{B}_{\theta\xi}
&=
E
\left[
q(1-q)\xi_1
\,
\nabla_{\boldsymbol{\theta}}U_{\boldsymbol{\theta}}(\boldsymbol{Y})
\begin{pmatrix}
1 &
U_{\boldsymbol{\theta}}(\boldsymbol{Y})
\end{pmatrix}
\right],
\label{supp:eq:Btx-QDA}\\
\boldsymbol{B}_{\xi\xi}
&=
E
\left[
q(1-q)
\begin{pmatrix}
1\\
U_{\boldsymbol{\theta}}(\boldsymbol{Y})
\end{pmatrix}
\begin{pmatrix}
1 &
U_{\boldsymbol{\theta}}(\boldsymbol{Y})
\end{pmatrix}
\right].
\label{supp:eq:Bxx-QDA}
\end{align}
These expressions, together with the conditional label-information loss,
provide the matrices used to evaluate
\(\boldsymbol{I}_{M}^{\mathrm{eff}}\) and
\(\boldsymbol{I}_{\mathrm{PC}}^{\mathrm{eff}}\)
in the QDA experiments.

\section{Additional population robustness results}
\label{supp:population-robustness}

This section reports additional population calculations complementing the
numerical investigation in Section~\ref{sec:numerical}. The results examine
the stability of the classification comparison under covariance
heterogeneity, its decomposition across active Bayes faces under prior
imbalance, the dependence of the phase boundary on the classification
geometry, and the robustness of that phase structure to the choice of
posterior-uncertainty functional.

\medskip
\noindent\textit{Covariance heterogeneity.}
Let
\[
\overline{\boldsymbol{\Sigma}}
=
\sum_{k=1}^{3}
\pi_k\boldsymbol{\Sigma}_k^{(0)}
\]
and consider the path
\[
\boldsymbol{\Sigma}_k(\rho)
=
(1-\rho)\overline{\boldsymbol{\Sigma}}
+
\rho\boldsymbol{\Sigma}_k^{(0)},
\qquad
0\leq \rho\leq 1.
\]
The model therefore varies continuously from common-covariance linear
discrimination at \(\rho=0\) to the reference QDA configuration at
\(\rho=1\). Table~\ref{supp:tab:covariance-heterogeneity} reports the corresponding population risk coefficients.

\begin{table}[t]
\centering
\caption{Classification efficiency along the covariance-heterogeneity path
\(\boldsymbol{\Sigma}_k(\rho)\).}
\label{supp:tab:covariance-heterogeneity}
\begin{tabular}{ccccc}
\toprule
\(\rho\) &
\(\mathcal E_{\mathrm{CC}}\) &
\(\mathcal E_{\mathrm{IG}}\) &
\(\mathcal E_{\mathrm{PC}}\) &
\(\operatorname{ARE}_R\)\\
\midrule
0.00 & 1.652 & 2.723 & 1.425 & 1.160\\
0.20 & 1.651 & 2.720 & 1.425 & 1.159\\
0.40 & 1.644 & 2.712 & 1.422 & 1.156\\
0.60 & 1.627 & 2.688 & 1.412 & 1.152\\
0.80 & 1.599 & 2.648 & 1.395 & 1.147\\
1.00 & 1.558 & 2.585 & 1.365 & 1.142\\
\bottomrule
\end{tabular}
\end{table}

Informative partial classification remains favourable throughout this path.
The relative efficiency decreases only modestly, from \(1.160\) under common
covariance matrices to \(1.142\) in the reference QDA model. Thus, within
this controlled family, increasing covariance heterogeneity weakens the
magnitude of the advantage slightly but does not change its sign. The
substantially larger values of \(\mathcal E_{\mathrm{IG}}\) also show that
ignoring the informative missingness mechanism performs markedly worse
throughout the path.

\medskip
\noindent\textit{Face-specific efficiency under prior imbalance.}
The global classification comparison may conceal different behavior across
the active pairwise boundaries. For an active pair \(k<l\), define the
face-specific asymptotic relative efficiency by
\[
\operatorname{ARE}_{kl}
=
\frac{
\operatorname{tr}
\left(
\boldsymbol{H}_{kl}\boldsymbol{A}^{-1}
\right)
}{
\operatorname{tr}
\left(
\boldsymbol{H}_{kl}\boldsymbol{J}^{-1}
\right)
},
\]
provided the denominator is positive. Thus
\(\operatorname{ARE}_{kl}>1\) means that informative partial classification
has the smaller leading excess-risk contribution associated with the
\(k\)-versus-\(l\) active Bayes face.

In the rare-class configuration \(\pi_3=0.05\), the approximate
face-specific relative efficiencies are
\[
\operatorname{ARE}_{12}=1.040,
\qquad
\operatorname{ARE}_{13}=0.968,
\qquad
\operatorname{ARE}_{23}=0.856.
\]
Thus the informative mechanism remains favourable for the boundary separating
the two common classes but loses efficiency along both faces involving the
rare third class. The combined effect yields
\(\operatorname{ARE}_R<1\).

For the reference prior \(\pi_3=0.30\), the corresponding values are
\[
\operatorname{ARE}_{12}=1.134,
\qquad
\operatorname{ARE}_{13}=1.139,
\qquad
\operatorname{ARE}_{23}=1.151,
\]
so all three active faces contribute favourably to the global result. At the
opposite extreme, with \(\pi_3=0.90\),
\[
\operatorname{ARE}_{12}=0.931,
\qquad
\operatorname{ARE}_{13}=0.938,
\qquad
\operatorname{ARE}_{23}=1.051.
\]
The global deterioration under strong prior imbalance therefore again reflects
heterogeneous face-specific contributions: only the \(2\)-versus-\(3\)
boundary remains favourable.

\medskip
\noindent\textit{Geometry-dependent phase boundaries.}
The critical normalized entropy slope also varies with the underlying Bayes
geometry. Table~\ref{supp:tab:separation-critical} reports the thresholds for selected class-separation
values, while Table~\ref{supp:tab:prior-critical} gives the corresponding thresholds for selected
class-3 prior probabilities.

\begin{table}[t]
\centering
\caption{Critical normalized entropy slopes under selected
class-separation levels.}
\label{supp:tab:separation-critical}
\begin{tabular}{cccc}
\toprule
\(s\) &
\(\gamma=0.10\) &
\(\gamma=0.30\) &
\(\gamma=0.50\)\\
\midrule
0.75 & 3.04 & 3.57 & 4.62\\
1.00 & 2.90 & 3.52 & 4.87\\
1.30 & 2.90 & 3.67 & 5.38\\
\bottomrule
\end{tabular}
\end{table}

\begin{table}[t]
\centering
\caption{Critical normalized entropy slopes under selected class-3 prior
probabilities.}
\label{supp:tab:prior-critical}
\begin{tabular}{cccc}
\toprule
\(\pi_3\) &
\(\gamma=0.10\) &
\(\gamma=0.30\) &
\(\gamma=0.50\)\\
\midrule
0.10 & 3.07 & 4.07 & no crossing observed\\
0.30 & 2.90 & 3.52 & 4.87\\
0.80 & 3.05 & 3.95 & 6.54\\
\bottomrule
\end{tabular}
\end{table}

These thresholds confirm that the existence and location of a favourable
regime depend jointly on the missing-label proportion and the classification
geometry. In particular, for \(\pi_3=0.10\) and \(\gamma=0.50\), no crossing
was observed over the mechanism range examined. The classification relative
efficiency increased toward one at moderate uncertainty slopes but remained
below one and subsequently declined as the uncertainty dependence became
stronger. Thus stronger informativeness alone does not guarantee a favourable
classification regime.

\medskip
\noindent\textit{Entropy versus Gini uncertainty.}
To assess whether the phase-transition behavior is specific to normalized
Shannon entropy, we repeated the reference population calculation using the
normalized Gini uncertainty defined in Section~\ref{sec:qda}. Both
uncertainty functionals take values in \([0,1]\), so their logistic slopes
operate on the same normalized range. For each value of \(\gamma\), the
mechanism intercept was recalibrated to preserve the specified marginal
missing-label proportion.

\begin{table}[t]
\centering
\caption{Critical normalized uncertainty slopes for entropy- and
Gini-dependent missingness in the reference QDA configuration.}
\label{supp:tab:entropy-gini-critical}
\begin{tabular}{ccc}
\toprule
\(\gamma\) &
Entropy \(t_H^\star\) &
Gini \(t_G^\star\)\\
\midrule
0.10 & 2.90 & 2.35\\
0.30 & 3.52 & 2.84\\
0.50 & 4.87 & 3.88\\
\bottomrule
\end{tabular}
\end{table}

The two uncertainty measures produce the same qualitative phase structure:
the critical slope increases with the marginal missing-label proportion under
both mechanisms. For the reference QDA configuration, the Gini-based
mechanism reaches the favourable region at a smaller normalized slope than
the entropy-based mechanism for each value of \(\gamma\) examined. Because
the two uncertainty functionals have different shapes even after
normalization, these numerical thresholds should not be interpreted as a
general efficiency ordering between entropy and Gini uncertainty. Their role
here is instead to show that the transition between unfavourable and
favourable informative missingness is not specific to the entropy
specification.
\section{Additional computational details for the finite-sample study}
\label{supp:finite-computation}

This section gives implementation details and numerical diagnostics for the
finite-sample experiment reported in Section~\ref{sec:finite-sample}.

\medskip
\noindent\textit{Parameterization, optimization, and convergence.}
For numerical optimization, each covariance matrix was represented through a
lower log-Cholesky factor,
\[
\boldsymbol{\Sigma}_k
=
\boldsymbol{L}_k\boldsymbol{L}_k^{\top},
\qquad
\boldsymbol{L}_k
=
\begin{pmatrix}
\exp(\ell_{k1}) & 0\\
\ell_{k2} & \exp(\ell_{k3})
\end{pmatrix},
\]
which guarantees positive definiteness throughout the optimization.

The complete-classification estimator was obtained in closed form. The MCAR
and IPC estimators were obtained by numerical maximization of their respective
observed-data likelihoods. Three starting values were used for each numerical
fit in the final Monte Carlo experiment. For IPC, the optimization vector
contains the 17 QDA parameters together with the two missingness parameters
\((\xi_0,t_H)\).

A numerical fit was retained only when the optimizer reported successful
termination, the estimated parameter vector was not numerically located at an
optimization bound, and the relative finite-difference gradient was below the
prespecified convergence tolerance. The resulting numbers of usable fits were
\[
\begin{array}{c|ccc}
\toprule
n & \mathrm{CC} & \mathrm{MCAR} & \mathrm{IPC}\\
\midrule
250  & 500 & 500 & 463\\
500  & 500 & 500 & 485\\
1000 & 500 & 499 & 491\\
\bottomrule
\end{array}
\]
corresponding to IPC convergence rates of \(92.6\%\), \(97.0\%\), and
\(98.2\%\), respectively.

\medskip
\noindent\textit{Population quantities and parameterization checks.}
The information matrices entering the asymptotic comparison were evaluated
under the true generating model. The active-face curvature matrix
\(\boldsymbol{H}_R\) was obtained by numerical contour integration over the
three active pairwise Bayes boundaries.

In the log-Cholesky coordinates used for numerical optimization,
\[
\operatorname{tr}(\boldsymbol{H}_R)
=
0.7227935085,
\]
whereas in the
\((\boldsymbol{\alpha},\boldsymbol{\mu},
\operatorname{vech}(\boldsymbol{\Sigma}))\)
coordinates used in the theoretical presentation,
\[
\operatorname{tr}(\boldsymbol{H}_R)
=
0.6035616902.
\]
The difference is expected because the trace of
\(\boldsymbol{H}_R\) itself is parameterization dependent. By contrast, the
classification-risk coefficients
\(\operatorname{tr}(\boldsymbol{H}_R\boldsymbol{V}_m)\)
are invariant under the corresponding smooth coordinate transformation.

For the reference configuration, the resulting asymptotic coefficients are
\[
\mathcal{K}_{\mathrm{CC}}
=
1.5581470885,
\qquad
\mathcal{K}_{\mathrm{MCAR}}
=
2.0540758514,
\qquad
\mathcal{K}_{\mathrm{IPC}}
=
1.3645461995.
\]
Hence
\[
\operatorname{ARE}_{R,\mathrm{IPC:CC}}
=
1.1418793215,
\qquad
\operatorname{ARE}_{R,\mathrm{MCAR:CC}}
=
0.7585635591.
\]

\medskip
\noindent\textit{Numerical evaluation of population excess risk.}
For a fitted parameter vector
\(\widehat{\boldsymbol{\theta}}\), excess risk was evaluated as
\[
\mathcal{X}(\widehat{\boldsymbol{\theta}})
=
E_{\boldsymbol{\theta}_0}
\left[
\max_k\tau_{0k}(\boldsymbol{Y})
-
\tau_{0,\widehat{C}(\boldsymbol{Y})}(\boldsymbol{Y})
\right].
\]
The expectation was approximated with a common stratified population sample of
\(600{,}000\) observations, consisting of \(200{,}000\) draws conditionally
from each class. Class-specific averages were then weighted by the true class
probabilities. The same population sample was used for every converged fitted
classifier, thereby reducing numerical integration noise in comparisons across
methods and Monte Carlo replications.

The integration variability was small relative to the sampling variability of
the fitted classifiers. At \(n=1000\), the scaled integration Monte Carlo
standard errors were approximately
\[
0.0082,\qquad
0.0099,\qquad
0.0073
\]
for CC, MCAR, and IPC, respectively, compared with replication Monte Carlo
standard errors
\[
0.0424,\qquad
0.0559,\qquad
0.0398.
\]
Thus the uncertainty in the reported finite-sample comparisons is dominated by
variation across fitted training samples rather than by the numerical
population-risk calculation.

\medskip
\noindent\textit{Quadratic risk and covariance validation.}
Since the quadratic expansion gives
\[
R(\widehat{\boldsymbol{\theta}})-R^\ast
=
\frac{1}{2}
(\widehat{\boldsymbol{\theta}}-\boldsymbol{\theta}_0)^{\top}
\boldsymbol{H}_R
(\widehat{\boldsymbol{\theta}}-\boldsymbol{\theta}_0)
+
o_p(n^{-1}),
\]
the directly evaluated excess risk was compared on the scale
\[
2n\,\mathcal{X}(\widehat{\boldsymbol{\theta}}),
\]
for which the asymptotic coefficient is
\(\operatorname{tr}(\boldsymbol{H}_R\boldsymbol{V}_m)\).
Correspondingly, the local quadratic approximation was checked by evaluating
\[
n
(\widehat{\boldsymbol{\theta}}-\boldsymbol{\theta}_0)^{\top}
\boldsymbol{H}_R
(\widehat{\boldsymbol{\theta}}-\boldsymbol{\theta}_0)
\]
for each usable fit. At \(n=1000\), the Monte Carlo averages were
\[
1.5341,\qquad
2.0407,\qquad
1.3735
\]
for CC, MCAR, and IPC, respectively. The corresponding directly evaluated values of
\(2n\,\mathcal{X}(\widehat{\boldsymbol{\theta}})\) were
\[
1.5247,\qquad
2.0294,\qquad
1.3682,
\]
while the theoretical coefficients were
\[
1.5581,\qquad
2.0541,\qquad
1.3645.
\]
The agreement among these three calculations provides a direct numerical check
of the quadratic risk approximation and of its information-based asymptotic
limit.

For the covariance calculation, define
\[
\widehat{\boldsymbol{V}}_{m,n}
=
n\,
\widehat{\operatorname{Cov}}
(\widehat{\boldsymbol{\theta}}_m).
\]
Let \(\boldsymbol{V}_m\) denote the asymptotic covariance matrix for method
\(m\). The empirical covariance matrices were compared with their theoretical
limits using the relative Frobenius discrepancy
\[
D_{V,m}(n)
=
\frac{
\left\|
\widehat{\boldsymbol{V}}_{m,n}
-
\boldsymbol{V}_m
\right\|_{F}
}{
\left\|
\boldsymbol{V}_m
\right\|_{F}
},
\]
and the classification-weighted discrepancy
\[
D_{R,m}(n)
=
\frac{
\left|
\operatorname{tr}
\left(
\boldsymbol{H}_R
\widehat{\boldsymbol{V}}_{m,n}
\right)
-
\operatorname{tr}
\left(
\boldsymbol{H}_R
\boldsymbol{V}_m
\right)
\right|
}{
\operatorname{tr}
\left(
\boldsymbol{H}_R
\boldsymbol{V}_m
\right)
}.
\]
Accordingly, the ``Empirical trace'' and ``Theoretical trace'' columns below
refer respectively to
\[
\operatorname{tr}
\left(
\boldsymbol{H}_R
\widehat{\boldsymbol{V}}_{m,n}
\right)
\qquad\text{and}\qquad
\operatorname{tr}
\left(
\boldsymbol{H}_R
\boldsymbol{V}_m
\right).
\]

\begin{table}[t]
\centering
\caption{Covariance diagnostics for the finite-sample Monte Carlo experiment.}
\label{supp:tab:covariance}
\begin{tabular}{llcccc}
\toprule
\(n\) & Method &
\(D_{V,m}(n)\) &
\(D_{R,m}(n)\) &
Empirical trace &
Theoretical trace\\
\midrule
250
& CC   & 0.1573 & 0.0134 & 1.5373 & 1.5581\\
& MCAR & 0.1608 & 0.0530 & 2.1629 & 2.0541\\
& IPC  & 0.1965 & 0.0480 & 1.4300 & 1.3645\\[2pt]

500
& CC   & 0.1550 & 0.0476 & 1.4840 & 1.5581\\
& MCAR & 0.1777 & 0.0261 & 2.0005 & 2.0541\\
& IPC  & 0.1773 & 0.0442 & 1.4248 & 1.3645\\[2pt]

1000
& CC   & 0.1249 & 0.0142 & 1.5361 & 1.5581\\
& MCAR & 0.1205 & 0.0051 & 2.0435 & 2.0541\\
& IPC  & 0.1309 & 0.0068 & 1.3739 & 1.3645\\
\bottomrule
\end{tabular}
\end{table}

The full covariance discrepancy is reduced by \(n=1000\), although the
improvement is not monotone across all intermediate sample sizes because it
weights all parameter directions equally. The
classification-weighted discrepancy is substantially smaller at \(n=1000\):
approximately \(1.4\%\), \(0.5\%\), and \(0.7\%\) for CC, MCAR, and IPC,
respectively. This latter comparison is the more directly relevant diagnostic
for the excess-risk theory, since only covariance error in directions weighted
by \(\boldsymbol{H}_R\) contributes to the leading classification risk.

\medskip
\noindent\textit{Paired comparisons and reproducibility.}
Because CC and IPC were constructed from the same complete sample within each
Monte Carlo replication, their excess risks can also be compared pairwise.
Among replications in which both estimators were usable, the mean scaled paired
difference
\[
2n
\left\{
\mathcal{X}_{\mathrm{CC}}
-
\mathcal{X}_{\mathrm{IPC}}
\right\}
\]
was
\[
0.1018,\qquad
0.0462,\qquad
0.1497
\]
for \(n=250,500,\) and \(1000\), respectively. The proportions of paired
replications in which IPC had the smaller excess risk were
\[
0.538,\qquad
0.509,\qquad
0.556.
\]
These proportions are not expected to approach one: the theoretical result
concerns the difference in expected excess risk, not samplewise stochastic
dominance.

The final simulation used \(B=500\) independent complete samples at each
sample size and was implemented in R. The same generated complete sample was
used to construct CC, MCAR, and IPC within each replication.

%

\end{document}